\newcommand{\nop}[1]{}
\newtheorem{example}{Example}
\def\ii#1{\hbox{\it #1\/}}
\def\is#1{\hbox{\scriptsize\it #1\/}}
\def\no{\ii{not}\ }
\def\lar{\leftarrow}
\def\ba{\begin{array}}
\def\ea{\end{array}}
\def\beq{\begin{equation}}
\def\eeq#1{\label{#1}\end{equation}}
\def\dom{\textrm{dom}\ }
\newcommand{\resp}{resp.\xspace}
\def\clasp{{\sc clasp}}
\def\claspnk{{\sc clasp-nk}}
\def\gringo{{\sc gringo}}
\newcommand{\sol}[1]{\ensuremath{\mathit{Sol}(#1)}}
\newcommand{\asp}{\ensuremath{\mathcal{AS(P)}}}
\newcommand{\Pol}{\ensuremath{\rm{P}}}
\newcommand{\NP}{\ensuremath{\rm{NP}}}
\newcommand{\FPNP}{\ensuremath{\rm{FP^{NP}}}}
\newcommand{\FNP}{\ensuremath{\rm{FNP}}}
\newcommand{\FNPLog}{\ensuremath{\rm{FNP/\!/\!\log}}}
\begin{document}

\bibliographystyle{acmtrans}

\title[Finding Similar/Diverse Solutions in Answer Set
Programming]{Finding Similar/Diverse Solutions \\ in Answer Set
Programming%
\thanks{Part of the results in this paper are contained,
in preliminary form, in {\em Proceedings of the 25'th International
Conference on Logic Programming (ICLP 2009)}. This work was
partially supported by FWF (Austrian Science Funds) project
\mbox{P20841}, the Wolfgang Pauli Institute, and TUBITAK Grants
107E229 and 108E229.} }

\author[Eiter et. al.]
{THOMAS EITER\\
Institute of Information Systems, Vienna University of Technology,
Vienna, Austria\\
E-mail: eiter@kr.tuwien.ac.at\\
\and ESRA ERDEM and HALIT ERDOGAN\\
Faculty of Engineering and Natural Sciences, Sabanci University, Istanbul, Turkey\\
E-mail: \{esraerdem,halit\}@sabanciuniv.edu\\
\and MICHAEL FINK\\
Institute of Information Systems, Vienna University of Technology, Vienna, Austria\\
E-mail: fink@kr.tuwien.ac.at}

 \submitted{11 January 2011}
 \revised{27 July 2011}
 \accepted{11 August 2011}


\maketitle

\label{firstpage}

\begin{abstract}
For some computational problems (e.g., product configuration,
planning, diagnosis, query answering, phylogeny reconstruction)
computing a set of similar/diverse solutions may be desirable for
better decision-making. With this motivation, we have studied
several decision/optimization versions of this problem in the
context of Answer Set Programming (ASP), analyzed their
computational complexity, and introduced offline/online methods to
compute similar/diverse solutions of such computational problems
with respect to a given distance function. All these methods rely on
the idea of computing solutions to a problem by means of finding the
answer sets for an ASP program that describes the problem. The
offline methods compute all solutions of a problem in advance using
the ASP formulation of the problem with an existing ASP solver, like
\clasp, and then identify similar/diverse solutions using some
clustering methods (possibly in ASP as well). The online methods
compute similar/diverse solutions of a problem following one of the
three approaches: by reformulating the ASP representation of the
problem to compute similar/diverse solutions at once using an
existing ASP solver; by computing similar/diverse solutions
iteratively (one after other) using an existing ASP solver; by
modifying the search algorithm of an ASP solver to compute
similar/diverse solutions incrementally. All these methods are
sound; the offline method and the first online method are complete
whereas the others are not. We have modified \clasp\ to implement
the last online method and called it \claspnk. In the first two
online methods, the given distance function is represented in ASP;
in the last one however it is implemented in C++. We have showed the
applicability and the effectiveness of these methods using \clasp\
or \claspnk\ on two sorts of problems with different distance
measures: on a real-world problem in phylogenetics (i.e.,
reconstruction of similar/diverse phylogenies for Indo-European
languages), and on several planning problems in a well-known domain
(i.e., Blocks World). We have observed that in terms of
computational efficiency (both time and space) the last online
method outperforms the others; also it allows us to compute
similar/diverse solutions when the distance function cannot be
represented in ASP (e.g., due to some mathematical functions not
supported by the ASP solvers) but can be easily implemented in C++.
\end{abstract}
\begin{keywords}
similar/diverse solutions, answer set programming, similar/diverse
phylogenies, similar/diverse plans
\end{keywords}

\section{Introduction}\label{sec_Intr}

For many computational problems, the main concern is to find a best
solution (e.g., a most preferred product configuration, a shortest
plan, a most parsimonious phylogeny) with respect to some
well-described criterion. On the other hand, in many real-world
applications, computing a subset of good solutions that are
similar/diverse may be desirable for better decision-making. For one
reason, the given computational problem may have too many good
solutions, and the user may want to examine only a few of them to
pick one. Also, in many real-world applications the users usually
take into account furthermore criterion that are not included in the
formulation of the optimization problem; in such cases, good
solutions similar to a best one may also be useful. Here are some
examples from several domains illustrating the usefulness of finding
similar/diverse solutions.

\paragraph{\bf Product configuration}
Consider, for instance, a variation of the example given in
\cite{heb05} about buying a car. Suppose that there is a product
advisor that asks customers about their constraints/preferences
about a car, and then lists the available ones that match their
constraints/preferences. However, such a list may be too long. In
that case, the customer might ask for a few cars that not only suit
her constraints/preferences but also are as diverse as possible.
Then, if she likes one particular car among them, she might ask for
other cars that are as similar as possible to this particular car.
Also, the customer may have other (possibly secondary) criterion
that the product advisor has not asked about; and thus the best
alternatives listed by the product advisor may not cover some of the
good possibilities. Then, the user may ask for a couple of good
enough configurations that are distant from a set of best
configurations.

\paragraph{\bf Planning}
Given an initial state, goal conditions, and a description of
actions, planning is the problem of finding a sequence of actions
(i.e., a plan) that would lead the initial state to a goal state.
Planning is applied in various domains, such as robotics, web
service composition, and genome rearrangement. In planning, it may
be desirable to compute a set of plans that are similar to each
other, so that, when the plan that is being executed fails, one can
switch to a most similar one. For instance, consider a variation of
the example given in~\cite{SrivastavaNguyenEtAl-Domain-2007} in
connection with modeling web service composition as a planning
problem~\cite{McIlraithS02}: suppose that the web service engine
computes a plan/composition; then it can compute a set of
compositions similar to this particular one, so that if a failure
occurs while executing one composition, an alternative composition
which is less likely to be failing simultaneously can be
used~\cite{ChafleDKMS06}.
Alternatively, let us consider planning in the context of robotics
in a dynamic environment with uncertainties. If the plan failure is,
for instance,  due to some collisions with an obstacle as in the
scenarios presented in \cite{cal09}, the agent may want to find a
plan that is distant from the previously computed plan so that it
does not collide with the obstacle again.

\paragraph{\bf Phylogeny reconstruction}
Phylogeny reconstruction is the problem of inferring a leaf-labeled
rooted directed tree (i.e., phylogeny) that would characterize the
evolutionary relations between a family of species based on their
shared traits. Phylogeny reconstruction is important for research
areas as disparate as genetics, historical linguistics, zoology,
anthropology, archaeology, etc.. For example, a phylogeny of
parasites may help zoologists to understand the evolution of human
diseases~\cite{PhyEcologyBehave-91}; a phylogeny of languages may
help scientists to better understand human
migrations~\cite{PrehistoryOfAustralia-82}. For a given set of
taxonomic units, some existing phylogenetic systems, like that of
\cite{bro05,bro07}, generate more than one good phylogeny that
explains the evolutionary relationships between the given taxonomic
units. However, usually there are too many phylogenies computed by a
system, an expert needs to compare these phylogenies in detail, by
analyzing the similar/diverse ones with respect to some distance
measure, to pick the most plausible ones.

\bigskip

Motivated by such examples, we have studied various problems related
to computing similar/diverse solutions in the context of a new
declarative programming paradigm, called Answer Set Programming
(ASP)~\cite{Lif-What-08}. We have also introduced general
offline/online methods in ASP that can be applied to various domains
for such computations.

In ASP, a combinatorial search problem is represented as an ASP
program whose models (called ``answer sets'') correspond to the
solutions. The answer sets for the given formalism can be computed
by special systems called answer set solvers, such as
\clasp~\cite{GebKauNeu-clasp-07}. Due to the expressive formalism of
ASP that allows us to represent, e.g., negation, defaults,
aggregates, recursive definitions, and due to the continuous
improvements of efficiency of solvers, ASP has been used in  a
wide-range of knowledge-intensive applications from different
fields, such as product configuration~\cite{SoiNie-Developing-98},
planning \cite{Lif-Action-99}, phylogeny
reconstruction~\cite{bro07}, developing a decision support system
for a space shuttle~\cite{NogBalGel-AProlog-01}, multi-agent
planning~\cite{SonPonSak-Logic-09}, answering biomedical
queries~\cite{bod08}. For many of these applications, finding
similar/diverse solutions (and thus the methods we have developed
for computing similar/diverse solutions in ASP) could be useful.

The main contributions of this paper can be summarized as follows.

\begin{itemize}

\item We have described mainly two kinds of computational problems related to finding
similar/diverse solutions of a given problem, in the context of ASP
(Section~\ref{sec_CompProb}). Both kinds of problems take as input
an ASP program $P$ that describes a problem, a distance measure
$\Delta$ that maps a set of solutions of the problem to a
nonnegative integer, and two nonnegative integers $n$ and $k$. One
problem asks for a set $S$ of size $n$ that contains $k$-similar
(\resp $k$-diverse) solutions, i.e., $\Delta(S) \leq k$ (\resp
$\Delta(S) \geq k$); the other problem asks, given a set $S$ of $n$
solutions, for a $k$-close (\resp $k$-distant) solution $s$ (\resp
$s\not\in S$), i.e., $\Delta(S \cup\{s\}) \leq k$ (\resp $\Delta(S
\cup \{s\}) \geq k$). Note that, by fixing some parameters and
minimizing/maximizing others, we can turn them into various related
optimization problems.

\item We have studied the computational complexity of these decision/optimization
problems establishing completeness results under reasonable
assumptions for the problem parameters (Section~\ref{sec_CompResu}).

\item We have introduced an offline method to compute a set of $n$
$k$-similar (\resp $k$-diverse) solutions to a given problem, by
computing all solutions in advance using ASP and then finding
similar (\resp diverse) solutions using some clustering methods,
possibly in ASP as well (Section~\ref{ssec_OfflMeth}). This method
is sound and complete, assuming that the ASP formulations are
correct.

\item We have introduced three online methods to compute a set of $n$
$k$-similar (\resp $k$-diverse) solutions to a given problem
(Sections~\ref{ssec_OnliMeth1},~\ref{ssec_OnliMeth2}
and~\ref{ssec_OnliMeth3}).

\begin{itemize}
\item
Online Method 1 reformulates the given program to compute
$n$-distinct solutions and formulates the distance function as an
ASP program, so that all $n$ $k$-similar (\resp $k$-diverse)
solutions can be extracted from an answer set for the union of these
ASP programs.

\item
Online Method 2 does not modify the ASP program encoding the
problem, but formulates the distance function as an ASP program, so
that a unique $k$-close (\resp $k$-distant) solution can be
extracted from an answer set for the union of these ASP programs and
a previously computed solution; by iteratively computing $k$-close
(\resp $k$-distant) solutions one after other, we can compute online
a set of $n$ $k$-similar (or $k$-diverse) solutions.

\item
Online Method 3 does not modify the ASP encoding of the problem, and
does not formulate the distance function as an ASP program, but it
modifies the search algorithm of an ASP solver, in our case {\sc
Clasp} \cite{CLASPGebserSchaub-07}, to compute all $n$ $k$-similar
(or $k$-diverse) solutions at once. The distance function is
implemented in C++; in that sense, Online Method 3 allows for
finding similar/diverse solutions when the distance function cannot
be defined in ASP.
\end{itemize}

All the methods are sound, assuming that the ASP formulations are
correct. Online Method~1 is complete; however, Online Methods~2
and~3 are not because the computation of the similar/diverse
solutions depend on the first solution computed by {\sc Clasp}.

\item We have illustrated the applicability of these approaches on
two sorts of problems: phylogeny reconstruction (based on the ASP
encoding of the problem as in \cite{bro07}) and planning (based on
the ASP encoding of the Blocks World as in
\cite{Erdem-Theory-2002}).

\begin{itemize}
\item
For phylogeny reconstruction, we have defined novel distance
measures for a set of phylogenies (Section~\ref{ssec_DistMeasPhyl}),
described how the offline method and the online methods are applied
to find similar/diverse phylogenies
(Section~\ref{ssec_CompSimiDivePhy}), and compared the efficiency
and effectiveness of these methods on the family of Indo-European
languages studied in \cite{bro07} (Section~\ref{ssec_ExpeResu}).
Since there is no phylogenetic system that helps experts to analyze
phylogenies by comparing them, this particular application of our
methods also plays a significant role in phylogenetics. In fact,
Offline Method and Online Method 3 are integrated in the
phylogenetics system {\sc Phylo-ASP}~\cite{erdem09}.

\item
For planning, we have considered the action-based Hamming distance
of~\cite{SrivastavaNguyenEtAl-Domain-2007} to measure the distance
among plans, and compared the efficiency and effectiveness of the
offline method and the online  methods on some Blocks World problems
(Section~\ref{sec_CompSimiDivePlan}).

\end{itemize}

\end{itemize}

Finding similar/diverse solutions has earlier been studied in the
context of propositional logic \cite{Distance-SAT-99}, constraint
programming \cite{heb05,heb07}, and automated planning
\cite{SrivastavaNguyenEtAl-Domain-2007}. These studies consider the
Hamming distance \cite{ham50} as a measure to compute distances
between solutions. Unlike the problems studied in related work, the
problems we have studied are not confined to polynomial-time
distance functions with polynomial range. A more detailed discussion
on related work is presented in Section~\ref{sec_Related}.

\section{Answer Set Programming}\label{sec_ASP}

We study finding similar/diverse solutions in the context of Answer
Set Programming (ASP)~\cite{Lif-What-08}---a new declarative
programming paradigm where the idea is to represent a combinatorial
search problem as a ``program'' whose models (called ``answer sets''
\cite{gel91b}) correspond to the solutions. This is in the vein of
SAT solving, which became popular after a surprising success in the
area of planning \cite{kaut-selm-92}, but offers in comparison
features like variables ranging over domain elements, easy
definition of transitive closure, and nonmonotonic negation.
Furthermore, a range of special constructs, such as aggregates,
weight constraints and priorities, that are useful in practical
applications are supported by various ASP solvers; for more
discussion, see Section~\ref{sec_Related}.

Before we proceed discussing
our methods for finding similar/diverse solutions in ASP, let us present
the syntax of the kind of programs considered in this paper, and define
the concept of an answer set for such programs.\footnote{Answer sets are
defined for programs of a more general form that may contain classical
negation~$\neg$ and disjunction~\cite{gel91b} and nested
expressions~\cite{lif99d} in heads of rules as well. See
\cite{Lifschitz10} for  definitions of answer sets.}

\paragraph{Programs}
The syntax of formulas, rules and programs is defined as follows.
{\sl Formulas} are formed from propositional atoms and 0-place
connectives~$\top$ and~$\bot$  using negation (written as \no\/),
conjunction (written as a comma) and disjunction (written as a semicolon).

A {\em rule} is an expression of the form
\beq
F \lar G
\eeq{rule}
where $F$ is an atom or $\bot$, and $G$ is a formula; $F$ is called the {\em head}
and $G$ is called the {\em body} of the rule. A rule of the form $F\lar\top$ will be
identified with the formula~$F$. A rule of the form $\bot\lar F$ (called a {\em constraint})
will be abbreviated as $\lar F$.

A {\em (normal nested) program} is a finite set of rules. If bodies
of all rules in a program are of the form
$$A_1,\dots,A_m,\no\ A_{m+1},\dots,\no\ A_n$$
then the program is a {\em normal program}. A program is {\em positive}
if it does not contain any negation.

\paragraph{Answer Sets}
To define the concept of an answer set for a program,
let us first define the satisfaction relation and the reduct of a program.

The {\sl satisfaction relation} $X\models F$ between a set $X$ of atoms
and a formula~$F$ is defined recursively, as follows:
\begin{itemize}
\item for an atom~$A$, $X\models A$ if $A\in X$
\item $X\models \top$
\item $X\not\models \bot$
\item $X\models (F,G)$ if $X\models F$ and $X\models G$
\item $X\models (F;G)$ if $X\models F$ or $X\models G$
\item $X\models \no\ F$ if $X\not\models F$.
\end{itemize}
We say that $X$ {\sl satisfies} a program $\Pi$ (symbolically, $X \models \Pi$)
if, for every rule~$F\lar G$ in~$\Pi$, $X \models F$ whenever $X \models~G$.

The {\sl reduct} $F^X$ of a formula $F$ with respect to a set $X$ of atoms
is defined recursively, as follows:
\begin{itemize}
\item if $F$ is an atom or a 0-place connective then $F^X=F$
\item $(F,G)^X=F^X,G^X$
\item $(F;G)^X=F^X;G^X$
\item $(\no\ F)^X=
\cases{
\bot\ , & if $X \models F$, \cr
\top\ , & otherwise.}$
\end{itemize}
The {\sl reduct} $\Pi^X$ of a program $\Pi$ with respect to~$X$ is the set
of rules
$$
F^X \lar G^X
$$
for all rules~$F\lar G$ in $\Pi$.

Let us first define the answer set for a program $\Pi$ that does not contain
negation. We say that $X$ is an {\em answer set} for~$\Pi$, if $X$
is minimal with respect to set inclusion ($\subseteq$) among the sets of atoms that satisfy $\Pi$.  For instance,
the set~$\{p\}$ is the answer set for the program consisting of the single rule
\beq
p \lar .
\eeq{ex:reduct-p}

Now consider a program~$\Pi$ that may contain negation.
A set~$X$ of atoms is an {\em answer set} for~$\Pi$
if it is the answer set for the reduct~$\Pi^X$.  For instance, the reduct
of the program
\beq
\ba l
p \lar \no\ \no\ p
\ea
\eeq{ex:p-or-np}
relative to~$\{p\}$ is (\ref{ex:reduct-p}). Since $\{p\}$ is the
answer set for (\ref{ex:reduct-p}), $\{p\}$ is an answer set for program
(\ref{ex:p-or-np}). Similarly, $\{\}$ is an answer set for program
(\ref{ex:p-or-np}) as well.

\paragraph{Representing a Problem in ASP}
The idea of ASP is to represent a computational problem as a program
whose answer sets correspond to the solutions of the problem, and to
find the answer sets for that program using an answer set solver.

When we represent a problem in ASP, two kinds of rules
play an important role: those that ``generate'' many answer sets corresponding
to ``possible solutions'', and those that can be used to ``eliminate'' the
answer sets that do not correspond to solutions. Rules~(\ref{ex:p-or-np}) are
of the former kind: they generate the answer
sets $\{p\}$ and~$\{\}$. Constraints are of the latter kind. For instance,
adding the constraint
$$\lar p$$
to program (\ref{ex:p-or-np}) eliminates the answer sets for
the program that contain~$p$.

In ASP, we use special constructs of the form
\beq
 \{A_1,\dots,A_n\}^c
\eeq{choice}
(called {\em choice expressions}), and of the form
\beq
l \le \{A_1,\dots,A_m\} \le u
\eeq{cardinality}
(called {\em cardinality expressions}) where each $A_i$ is an atom and $l$ and $u$ are
nonnegative integers denoting the ``lower bound'' and the ``upper bound''~\cite{sim02}.
Programs using these constructs can be viewed as abbreviations for normal nested programs
defined above, due to~\cite{fer05}.  For instance, the following program
$$
\{p\}^c \lar
$$
stands for program~(\ref{ex:p-or-np}). The constraint
$$
\lar\ 2 \le \{p,q,r\}
$$
stands for the constraints
$$
\ba l
\lar\ p,q \\
\lar\ p,r \\
\lar\ q,r .
\ea
$$

Expression~(\ref{choice}) describes subsets of $\{A_1,\dots,A_n\}$.
Such expressions can be used in heads of rules to generate many answer sets.
For instance, the answer sets for the program
\beq
\{p,q,r\}^c\ \lar\
\eeq{ex:p-q-r}
are arbitrary subsets of~$\{p,q,r\}$.
Expression~(\ref{cardinality}) describes the subsets of the set
$\{A_1,\dots,A_m\}$ whose cardinalities are at least $l$ and at most~$u$.
Such expressions can be used in constraints to eliminate some answer sets.
For instance, adding the constraint
$$
\lar\ 2 \le \{p,q,r\}
$$
to program (\ref{ex:p-q-r}) eliminates the answer sets for
(\ref{ex:p-q-r}) whose cardinalities are at least 2.  Adding the
constraint
\beq
\lar\ \no\ (1 \le \{p,q,r\})
\eeq{ex:c1}
to program (\ref{ex:p-q-r}) eliminates the answer sets for
(\ref{ex:p-q-r}) whose cardinalities are not at least~1.

We abbreviate the rules
$$
\ba l
 \{A_1,\dots,A_m\}^c \lar \ii{Body} \\
\lar \no\ (l \le \{A_1,\dots,A_m\}) \\
\lar \no\ (\{A_1,\dots,A_m\} \le u)
\ea
$$
by
$$
l \le \{A_1,\dots,A_m\}^c \le u \lar \ii{Body} .
$$
For instance, rules~(\ref{ex:p-q-r}), (\ref{ex:c1}) and $\lar\ \no\
(\{p,q,r\} \le 1)$ can be written as
$$
1 \le \{p,q,r\}^c \le 1 \lar
$$
whose answer sets are the singleton subsets of~$\{p,q,r\}$.

\paragraph{Finding a Solution using an Answer Set Solver}
Once we represent a computational problem as a program whose answer sets correspond to
solutions of the problem,  we can use an answer
set solver to compute the solutions of the problem.
To present a program to an answer set solver, like \clasp,
we need to make some syntactic modifications.

The syntax of the input language of \clasp\ is more limited in some ways
than the class of programs defined above, but
it includes many useful special cases. For instance, the head of a rule
can be an expression of one of the forms
$$
\ba l
\{A_1,\dots,A_n\}^c\\
l\leq \{A_1,\dots,A_n\}^c\\
\{A_1,\dots,A_n\}^c\leq u\\
l\leq \{A_1,\dots,A_n\}^c\leq u
\ea
$$
but the superscript $^c$ and the sign $\leq$ are dropped.
The body can contain cardinality expressions but the sign $\leq$ is dropped.

In the input language of \clasp, {\tt :-} stands for
$\lar$, and each rule is followed by a period.

A group of rules that follow a pattern can be often described in a compact
way using ``(schematic) variables''. Variables must be capitalized.
For instance, the program $\Pi_n$
$$
p_i \lar \no\ p_{i+1}   \qquad (1 \leq i \leq n)
$$
can be presented to \clasp\ as follows:
\begin{verbatim}
index(1..n).
p(I) :- not p(I+1), index(I).
\end{verbatim}
Here {\tt index} is a ``domain predicate'' used to describe
the range of variable {\tt I}.

Variables can be also used ``locally'' to describe the list of formulas in
a cardinality expression.   For instance, the rule
$$1\leq \{p_1,\dots,p_n\}\leq 1$$
can be expressed in \clasp\ as follows
\begin{verbatim}
index(1..n).
1{p(I) : index(I)}1.
\end{verbatim}

\clasp\ finds an answer set for
a program in two stages: first it gets rid
of the schematic variables using a ``grounder'', like  \gringo, and
then it finds an answer set for
the ground program using a DPLL-like
branch and bound algorithm (outlined in Algorithm~\ref{alg_CLASP}).

\section{Computational Problems}\label{sec_CompProb}

We study various problems related to finding  similar/diverse solutions
to a computational problem $P$ formulated in ASP. For that, we assume that the
problem is represented as a normal (possibly nested) program $\cal P$
whose answer sets characterize solutions of the problem.
More precisely, let $\sol{P}$ denote the set of solutions of $P$ and let $\asp$ denote
the set of answer sets of $\cal P$. Then, there is a many-to-one mapping of $\asp$ onto
$\sol{P}$. Moreover, given an answer set of $\cal P$ the corresponding solution from
$\sol{P}$ can efficiently be extracted.
We also assume that a distance function that maps a set $S$ of solutions to a number
is defined, to measure how similar/diverse the solutions are in $S$.
To this end, we consider set-distance measures $\Delta : 2^{\sol{P}}\mapsto \mathbb{N}_0$ on
solutions for $P$.

We are mainly interested in two sorts of problems related to
computation of a diverse/similar collection of solutions:

\begin{itemize}
\item[]
{\sc $n$ $k$-similar solutions} (\resp{} {\sc $n$ $k$-diverse solutions})\\
Given an ASP program $\cal P$ that formulates a computational
problem $P$, a distance measure $\Delta$ that maps a set of
solutions for $P$ to a nonnegative integer, and two nonnegative
integers $n$ and $k$, decide whether a set $S$ of $n$ solutions for
$P$ exists such that $\Delta(S) \leq k$ (\resp $\Delta(S) \geq k$).

\item[]
{\sc $k$-close solution} (\resp{} {\sc $k$-distant solution})\\
Given an ASP program $\cal P$ that formulates a computational
problem $P$, a distance measure $\Delta$ that maps a set of
solutions for $P$ to a nonnegative integer, a set $S$ of solutions
for $P$, and a nonnegative integer $k$, decide whether some solution
$s$ ($s\not\in
 S$) for $P$ exists such that $\Delta(S \cup\{s\}) \leq k$ (\resp $\Delta(S \cup\{s\})
\geq k$).
\end{itemize}

For instance, suppose that the ASP program $\cal P$  describes the
phylogeny reconstruction problem for Indo-European languages as in
\cite{bro05}; so each answer set of $\cal P$ represents a phylogeny
for Indo-European languages. Using this ASP program with an existing
ASP solver, one can compute many phylogenies for the same input
dataset and with the same input parameters. Instead of analyzing all
of these phylogenies manually, a historical linguist may ask for,
for instance, three phylogenies whose diversity is at least 20 with
respect to some domain-independent or domain-dependent distance
function~$\Delta$; this problem is an instance of $n$~$k$-diverse
solutions problem where $n=3$ and $k=20$. On the other hand, a
historical linguist may have found two phylogenies $P_1$ and $P_2$
that are plausible, for instance, based on some archeological
evidence, and she may want to infer a similar phylogeny whose
distance from $\{P_1,P_2\}$ is at most 10; this problem is an
instance of $k$-close solution problem where $k=10$.

The first kind of problems above has two parameters, $n$ and $k$, so
we can fix one and try to minimize (\resp maximize)  the distance
between solutions to find the most similar  (\resp diverse)
solutions.

\begin{itemize}
\item[]
{\sc $n$ most similar solutions} (\resp{} {\sc $n$  most diverse solutions})\\
Given an ASP program $\cal P$ that formulates a computational
problem $P$, a distance measure $\Delta$ that maps a set of
solutions for $P$ to a nonnegative integer, and a nonnegative
integer $n$, find a set $S$ of $n$ solutions for $P$ with the
minimum (\resp maximum) distance $\Delta(S)$.

\item[]
{\sc maximal $n$ $k$-similar solutions} (\resp{} {\sc maximal $n$ $k$-diverse
  solutions})\\
Given an ASP program $\cal P$ that formulates a computational
problem $P$, a distance measure $\Delta$ that maps a set of
solutions for $P$ to a nonnegative integer, and a nonnegative
integer $k$, find a $\subseteq$-maximal set $S$ of at most $n$ solutions for $P$ such that
$\Delta(S) \leq k$ (\resp $\Delta(S) \geq k$) exists.

\end{itemize}

In the second class of problems, we can try to minimize (\resp
maximize) the distance $k$ between a solution and a set of
solutions, to find the closest (\resp most distant) solution.

\begin{itemize}
\item[]
{\sc closest solution}
(\resp~{\sc most distant solution})\\
Given an ASP program $\cal P$ that formulates a computational
problem $P$, a distance measure $\Delta$ that maps a set of
solutions for $P$ to a nonnegative integer, and a set $S$ of
solutions for $P$, find a solution $s$ ($s\not\in S$) for $P$ with
the minimum (\resp maximum) distance  $\Delta(S \cup\{s\})$.
\end{itemize}

We can generalize {\sc $k$-close solution} (\resp{} {\sc $k$-distant
solution}) problems to sets of solutions:

\begin{itemize}
\item[]
{\sc $k$-close set} (\resp{} {\sc $k$-distant set})\\
Given an ASP program $\cal P$ that formulates a computational
problem $P$, a distance measure $\Delta$ that maps a set of
solutions for $P$ to a nonnegative integer, a set $S$ of solutions
for $P$, and a nonnegative integer $k$, decide whether a set $S'$ of
solutions for~$P$ ($S'\neq S$) exists such that $|\Delta(S) -
\Delta(S')| \leq k$ (resp. $|\Delta(S) - \Delta(S')| \geq k$).
\end{itemize}

Usually an expert is interested in several kinds of problems to be
able to systematically analyze solutions. For instance, a historical
linguist may want to find three most diverse phylogenies; and after
identifying one particular plausible phylogeny among them, she may
want to compute another phylogeny that is the closest. An example of
such an analysis is shown in Section~\ref{ssec_ExpeResu} for
understanding the classification of Indo-European languages.

We note that the problems on
similar/diverse solutions from above can be analogously defined for
computation problems with multiple (or possibly none) solutions in
general, and in particular for such problems with \NP\ complexity.
Since ASP can express all \NP\ search problems \cite{mare-remm-03},
in fact similar/diverse solution computation for each such problem
can be formulated in the framework above (in fact with polynomial
overhead).

\section{Complexity Results} \label{sec_CompResu}

Before we discuss how the computational problems described in the
previous section can be solved in ASP, let us turn our attention to
the computational complexity of the problems presented in
Section~\ref{sec_CompProb}. In order to do so, we first make some
reasonable assumptions on some of the problem parameters.

In the following we assume that given an answer set $s$ of $\cal P$,
extracting a solution of $P$ from $s$ can be accomplished in time polynomial wrt.~the
size of $s$. Moreover, w.l.o.g.~we identify $s$ with the solution it encodes, and sets
$S\subseteq\sol{P}$ with corresponding sets of answer sets from $\asp$.

We assume that all numbers are given in binary and that the given number $n$
of different solutions to consider (respectively the size of the set $S$)
for instances of the problems {\sc $n$ $k$-similar solutions}, {\sc
maximal $n$ $k$-similar solutions}, and {\sc
$n$ most similar solutions}  is polynomial in the size of the input.
The same assumption applies to the size of the sets $S'$ to consider
in instances of {\sc $k$-close set} problems.

Furthermore, we consider distance measures $\Delta$
such that deciding whether $\Delta(S) \leq k$ (\resp whether $\Delta(S) \geq k$)
for a given $k$ is in \NP. Moreover, we assume that the value of
$\Delta(S)$ is bounded by an exponential in the size of $S$ (and
thus has polynomially many bits in the size of $S$). Thus, when
considering $\Delta$ as an input to a problem, we assume that it is
given as the description of a non-deterministic Turing machine
$M^\leq_\Delta$, or $M^\geq_\Delta$, or both, where $M^\leq_\Delta$
(\resp $M^\geq_\Delta$) nondeterministically
decides $\Delta(S) \leq k$ (\resp $\Delta(S) \geq k$)
in time polynomial in the length of its input $S$ and $k$.
Consequently, a witness for a computation of $M^\chi_\Delta$ on some
input $S$ and $k$, where $\chi\in\{\leq,\geq\}$ is a sequence of
configurations of $M^\chi_\Delta$, such that the input tape contains
$S$ and $k$ in the initial configuration, successive configurations
correspond to transitions of $M^\chi_\Delta$, and the final
configuration accepts.
In addition, we say that a $\Delta$ is \emph{normal}
if $|S|\leq 1$ implies $\Delta(S)=0$.

Under these assumptions, the computational complexity
(cf.~\cite{papa-94} for a background on the subject) of the problems
concerning the computation of similar/diverse solutions we are
interested in, is given in Table~\ref{tab:comp}. All entries are
completeness results (under usual reductions) and hardness holds
even if $\Delta(S)$ is computable in polynomial time. Moreover, the
results are the same for the `symmetric' problems, i.e., when {\sc
similar} is replaced with {\sc diverse}, and {\sc close} is replaced
with {\sc distant}, respectively. The proofs are included
in~\ref{sec:proofs}.

\begin{table}[t!]
\caption{Complexity results for computing similar solutions.}
\label{tab:comp}
\begin{center}
\begin{tabular}{c@{~}|@{~}l@{~}|@{~}l}
\#\ &\ Problem &\ Complexity\\
\hline
1 &\ {\sc $n$ $k$-similar solutions} &\ \NP \\
2 &\ {\sc $k$-close solution} &\ \NP \\
3 &\ {\sc maximal $n$ $k$-similar solutions}  &\ \FNPLog \\
4 &\ {\sc $n$ most similar solutions}  &\ \FPNP\ \ (\FNPLog)\\
5 &\ {\sc closest solution}  &\ \FPNP\ \ (\FNPLog) \\
6 &\ {\sc $k$-close set}  &\ \NP
\end{tabular}
\end{center}
\end{table}

\begin{theorem}\label{theo:n-k-sim}
Problem {\sc $n$ $k$-similar solutions} (resp.~{\sc $n$ $k$-diverse
solutions}) is \NP-complete. Hardness holds even if $\Delta(S)$ is
computable in constant time and for any normal $\Delta$.
\end{theorem}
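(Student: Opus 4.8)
The plan is to establish NP-completeness of {\sc $n$ $k$-similar solutions} by separately proving membership in \NP\ and \NP-hardness. For membership, I would exhibit a polynomial-time verifiable certificate. Given an instance $(\mathcal{P}, \Delta, n, k)$, a witness consists of $n$ candidate answer sets $s_1, \dots, s_n$ of $\cal P$ together with the accepting computation of the Turing machine $M^\leq_\Delta$ on input $S = \{s_1, \dots, s_n\}$ and $k$. By the assumptions in Section~\ref{sec_CompResu}, each $s_i$ has size polynomial in the input (since $n$ is polynomially bounded and each encodes a solution extractable in polynomial time), checking that each $s_i$ is genuinely an answer set of $\cal P$ is feasible in polynomial time (answer-set checking for normal programs is polynomial), and the accepting run of $M^\leq_\Delta$ is polynomially long by hypothesis. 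The only subtlety is ensuring the $n$ solutions are genuinely distinct as required; this is a direct syntactic comparison. Verifying $\Delta(S) \leq k$ via the supplied computation of $M^\leq_\Delta$ completes the check, so the whole certificate is verified in polynomial time, giving membership in \NP.

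For hardness, the natural approach is a reduction from a known \NP-complete problem, and the cleanest choice is the existence of an answer set for a normal ASP program (equivalently \textsc{Sat}, since ASP captures all \NP\ search problems as noted in the excerpt). The strategy is to reduce deciding whether $\cal P$ has an answer set to {\sc $n$ $k$-similar solutions} with parameters chosen so that the distance function is trivial. First I would set $n = 1$ and $k = 0$, and take $\Delta$ to be any normal distance measure, so that $\Delta(S) = 0$ for every singleton $S$ by normality. Under this choice, asking whether there is a set $S$ of $n = 1$ solutions with $\Delta(S) \leq 0$ is precisely asking whether $\cal P$ has at least one answer set. Since the latter is \NP-complete and $\Delta$ is computable in constant time here (it returns $0$ on any singleton without inspecting it), this establishes hardness even under the claimed restrictions.

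The main obstacle, and the point requiring the most care, is making the hardness argument honor all the stated assumptions simultaneously: the claim is that hardness holds even when $\Delta(S)$ is computable in constant time and for \emph{any} normal $\Delta$. The normality condition ($|S| \leq 1 \Rightarrow \Delta(S) = 0$) is exactly what makes the $n=1$ reduction go through uniformly for every normal $\Delta$, so I would emphasize that the reduction does not depend on the internal behaviour of $\Delta$ beyond normality. I should double-check that fixing $n=1$ is legitimate under the problem definition (it is, since $n$ is merely a nonnegative integer input) and that the encoding of numbers in binary does not inflate the instance size. The symmetric \textsc{diverse} case follows by the identical argument, since for a normal $\Delta$ and a singleton we have $\Delta(S) = 0 \geq 0 = k$ as well, so the same reduction with $\Delta(S) \geq k$ and $k = 0$ witnesses hardness; I would remark that this symmetry is why the theorem statement covers both variants at once.
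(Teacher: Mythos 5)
Your proposal is correct and follows essentially the same route as the paper's proof: for membership, a guess of the $n$ answer sets together with a witness of $M^\leq_\Delta$ (resp.\ $M^\geq_\Delta$) checked in polynomial time, and for hardness, a reduction from answer-set existence for normal programs (i.e., from \textsc{Sat}) with $n=1$, $k=0$, exploiting normality so that $\Delta(S)=0$ on singletons and the argument is uniform in $\Delta$. The only difference is that the paper spells out the explicit normal program encoding a CNF formula, which you leave implicit by citing the known \NP-completeness of answer-set existence.
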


Membership for problem {\sc $n$ $k$-similar solutions} (resp.~{\sc
$n$ $k$-diverse solutions}) follows from the fact that we can guess
not only a candidate set $S$ via the program $\cal P$ (since $S$ is polynomially bounded) but
also a witness for $\Delta(S) \leq k$ (\resp $\Delta(S) \geq k$),
and check in polynomial time whether every $s\in S$ is a solution
and that $\Delta(S) \leq k$ (\resp $\Delta(S) \geq k$). For
hardness, one simply reduces answer-set existence for normal,
propositional programs to this problem, which is an \NP-complete
problem.
For a hardness result resorting to partial Hamming distance, one can
confer~\cite{Distance-SAT-99}.

In our experiments with phylogeny reconstruction, by
Theorem~\ref{theo:n-k-sim}, we know that deciding the existence of
$n$ $k$-similar (\resp $k$-diverse) phylogenies is \NP-complete, if
the distance measure is the nearest neighbor interchange
distance~\cite{DasGuptaHeEtAl-distances-1997} whose computation is
beyond polynomial time, or if the distance measure is the nodal
distance or comparison of descendants distance (both defined in
Section~\ref{ssec_DistMeasPhyl})
that are computable in polynomial time. Also, in planning, if we
consider the Hamming distance~\cite{ham50} (as defined in
Section~\ref{sec_CompSimiDivePlan}),
which is polynomially computable, or the edit
distance involving transpositions, which is conjectured to be
NP-hard~\cite{baf98},
deciding the existence of $n$ $k$-similar (\resp
$k$-diverse) plans is \NP-complete. Therefore, it makes sense to
find similar/diverse phylogenies/plans using ASP.

By similar arguments we obtain \NP-completeness for problem {\sc
$k$-close solution} (resp.~{\sc $k$-distant solution}).

\begin{theorem}\label{theo:k-close}
Problem {\sc $k$-close solution} (resp.~{\sc $k$-distant solution})
is \NP-complete. Hardness holds even if $\Delta(S)$ is computable
in constant time and for any normal $\Delta$.
\end{theorem}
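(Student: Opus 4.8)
The plan is to establish membership and hardness separately, exactly paralleling the proof of Theorem~\ref{theo:n-k-sim}, and to observe that the argument dualizes verbatim for the {\sc $k$-distant solution} variant. Throughout I identify a solution with the answer set of $\cal P$ encoding it, and I rely, as in Theorem~\ref{theo:n-k-sim}, on the fact that checking whether a guessed set of atoms is an answer set of a normal (possibly nested) program can be done in polynomial time.

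For membership in \NP, I would give a guess-and-check procedure. On input $(\cal P, \Delta, S, k)$, nondeterministically guess a candidate solution $s$ together with a witness for $\Delta(S\cup\{s\})\le k$, i.e.\ an accepting computation of the machine $M^{\le}_\Delta$ on input $S\cup\{s\}$ and $k$. Then verify in polynomial time that (i)~$s$ is an answer set of $\cal P$, (ii)~$s\notin S$, and (iii)~the guessed computation is indeed accepting. Here $S$ is part of the input and hence of polynomial size, a candidate answer set $s$ is bounded by the atoms of $\cal P$, and by assumption $M^{\le}_\Delta$ runs in time polynomial in $|S\cup\{s\}|+|k|$; thus all guessed objects are polynomially bounded and all three checks run in polynomial time, placing the problem in \NP.

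For hardness I would reduce from deciding answer-set existence for normal propositional programs, which is \NP-complete. Given such a program $\Pi$, I output the instance with $\cal P:=\Pi$, with $S:=\emptyset$, with $k:=0$, and with $\Delta$ taken to be an arbitrary fixed normal distance measure. Because $S\cup\{s\}=\{s\}$ has at most one element, normality of $\Delta$ forces $\Delta(S\cup\{s\})=0\le 0=k$ for every solution $s$, while the side condition $s\notin S$ holds vacuously. Hence the produced instance is a yes-instance precisely when $\cal P=\Pi$ has an answer set, so the reduction is correct. Note that it never evaluates $\Delta$ on arguments of size larger than one, so it works uniformly for every normal $\Delta$ and even when $\Delta$ is computable in constant time, yielding the strengthened hardness claim. (If a nonempty $S$ is preferred, one can instead guard all rules of $\Pi$ by a fresh atom that is freely chosen, so that $\cal P$ always admits one designated dummy answer set $s_0$; putting $S:=\{s_0\}$ then leaves the argument unchanged.)

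The {\sc $k$-distant solution} case is entirely symmetric: for membership one guesses an accepting computation of $M^{\ge}_\Delta$ and tests $\Delta(S\cup\{s\})\ge k$, and for hardness the same instance works since $\Delta(\{s\})=0\ge 0$. The one point that needs care, rather than a genuine obstacle, is the bookkeeping on sizes in the membership part---confirming that the guessed witness is polynomial and that answer-set checking for normal nested programs is polynomial---both of which are already taken for granted in Theorem~\ref{theo:n-k-sim}.
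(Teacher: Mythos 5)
Your proposal is correct and follows essentially the same route as the paper: \NP-membership by guessing an answer set $s$ together with a witness of $M^{\leq}_\Delta$ (resp.\ $M^{\geq}_\Delta$) and checking everything in polynomial time, and hardness by taking $S=\emptyset$, $k=0$, and exploiting normality of $\Delta$ so that the instance reduces to answer-set existence for normal propositional programs. The only cosmetic difference is that the paper routes the hardness reduction through the explicit CNF-to-program construction from Theorem~\ref{theo:n-k-sim}, while you invoke answer-set existence directly; the substance is identical.
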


When looking for maximal sets of solutions, we face
a function problem; here we also assume a polynomial upper bound on the
size of the sets $S$ to consider (given by input $n$ and our
corresponding assumption).
Recall that function problems generalize decision problems asking for a
finite, possibly empty set of solutions of every problem instance. The
solutions to function problems can be computed by transducers, i.e.,
possibly nondeterministic Turing machines equipped with an output tape,
which contains a solution if the input is accepted. Note that if the
Turing machine is nondeterministic, then it computes a multi-valued
(partial) function. For instance, \FNP\ is the class of multi-valued
function problems that can be solved by a nondeterministic transducer in
polynomial time, such that a given solution candidate can be checked in
polynomial time.

In particular, {\sc maximal $n$ $k$-similar solutions} (resp.~{\sc $n$ maximal
$k$-diverse solutions}) is solvable in \FNPLog. Intuitively,
\FNPLog\ is the class of function problems solvable in polynomial
time using a nondeterministic Turing machine with output tape that
may consult once an oracle that computes the optimal value of an
optimization problem whose associated decision problem is solvable in
\NP, provided
that this value has logarithmically many bits in the size of the input (see,
e.g.,~\cite{chen-toda-95,eite-subr-99} for more information on
\FNPLog\ and other function classes used in this section).

\begin{theorem}\label{theo:max-k-sim}
Problem {\sc maximal $n$ $k$-similar solutions} (resp.~{\sc maximal
$n$ $k$-diverse solutions}) is \FNPLog-complete. Hardness holds even if
$\Delta(S)$ is computable in polynomial time.
\end{theorem}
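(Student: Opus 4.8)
The plan is to prove membership and hardness separately, mirroring the arguments for Theorems~\ref{theo:n-k-sim} and~\ref{theo:k-close} but accounting for the optimization flavour of the problem.

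For membership in \FNPLog, the idea is to reduce the search for a $\subseteq$-maximal feasible set to computing a single logarithmically bounded optimal value and then guessing a witness for it. Call a set $S$ \emph{feasible} if $S\subseteq\asp$, $|S|\le n$, and $\Delta(S)\le k$, and take as optimization value $m^{\ast}$, the maximum cardinality of a feasible set. The associated decision problem — given $m$, does a feasible $S$ with $|S|\ge m$ exist? — is in \NP: one guesses $S$ together with a witness for $M^{\le}_\Delta$ on input $S,k$, and checks in polynomial time that every $s\in S$ is an answer set of $\cal P$ and that the witness is accepting. Since $m^{\ast}\le n$ and $n$ is polynomial in the input size, $m^{\ast}$ has only $O(\log)$ bits, so it is a legitimate \FNPLog\ oracle value. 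The transducer queries the oracle once for $m^{\ast}$, then nondeterministically guesses a feasible $S$ with $|S|=m^{\ast}$ (again guessing the $M^{\le}_\Delta$ witness) and outputs it on any accepting branch. A maximum-cardinality feasible set is automatically $\subseteq$-maximal, since no feasible proper superset can exist, so the output is a valid solution and the problem lies in \FNPLog.

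For hardness I would reduce from a canonical \FNPLog-complete problem; a convenient choice is computing a \emph{maximum-cardinality model} of a propositional CNF $\phi(x_1,\dots,x_m)$, i.e.\ a satisfying assignment with the maximum number $v^{\ast}$ of true variables (its optimum $v^{\ast}\le m$ is logarithmically bounded and the decision version is in \NP; see \cite{chen-toda-95,eite-subr-99}). Given $\phi$, I would build a program $\cal P$ whose answer sets are \emph{levelled certificates} $c_{\alpha,t}$, one for every model $\alpha$ of $\phi$ and every level $t$ with $1\le t\le \#\mathrm{true}(\alpha)$; such a certificate encodes $(\alpha,t)$ and is easy to generate in ASP (guess $\alpha$, check $\phi$, check that at least $t$ variables are true). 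I would then set $n=m$, $k=0$, and define the polynomial-time distance $\Delta(S)=0$ exactly when the levels occurring in $S$ form a gapless prefix $\{1,2,\dots,t\}$ with each level occurring once, and $\Delta(S)=1$ otherwise. Here $\Delta$ inspects only the level components, so it is polynomial-time computable and the entire \NP\ content resides in $\cal P$.

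The crux, and the step I expect to be the main obstacle, is reconciling the inclusion-maximality demanded by the problem with the cardinality-maximality produced in the membership argument: a priori a $\subseteq$-maximal feasible set could correspond to a locally but not globally optimal model, which would break the reduction. The levelled-certificate gadget is designed precisely to exclude this. A feasible set has the form $\{c_{\alpha_1,1},\dots,c_{\alpha_t,t}\}$ with arbitrary models $\alpha_i$ but levels forming the prefix $\{1,\dots,t\}$, and such a set exists iff some model has at least $t$ true variables, i.e.\ iff $v^{\ast}\ge t$. If $t<v^{\ast}$ one can always add a certificate $c_{\beta,t+1}$ for a model $\beta$ with $\ge t+1$ true variables, so $S$ is not maximal; conversely, when $t=v^{\ast}$ no certificate of level $v^{\ast}+1$ exists and adding any lower level violates the gapless-prefix condition. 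Hence every $\subseteq$-maximal feasible set has levels exactly $\{1,\dots,v^{\ast}\}$, and its top certificate $c_{\alpha,v^{\ast}}$ forces $\#\mathrm{true}(\alpha)=v^{\ast}$, so reading off $\alpha$ yields a maximum-cardinality model \emph{regardless of which maximal set the solver returns}. The remaining routine points are confirming the \FNPLog-completeness of the source problem and checking that the construction respects the standing stipulations (polynomial $n$, polynomial-time $\Delta$); I would finally note that {\sc maximal $n$ $k$-diverse solutions} follows by the dual gadget, using $M^{\ge}_\Delta$ and the complementary threshold so that diversity encodes the same gapless-prefix condition.
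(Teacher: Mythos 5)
Your membership argument coincides with the paper's: one oracle call retrieves the maximum cardinality $c\leq n$ of a feasible set (a logarithmically bounded \NP-optimization value), after which the transducer nondeterministically guesses a feasible set of that cardinality together with an $M^\leq_\Delta$-witness and outputs it, relying on the fact---explicit in your write-up, implicit in the paper's---that a maximum-cardinality feasible set is automatically $\subseteq$-maximal. Your hardness proof, however, takes a genuinely different route. The paper reduces from $X$-MinModel, the Chen--Toda complete problem it cites directly: it reuses the program of Theorem~\ref{theo:n-k-sim}, sets $n=|X|$, $k=0$, and lets $\Delta(S)=0$ exactly when the answer sets in $S$ form a strict chain under inclusion of their restrictions to $X$; a $\subseteq$-maximal feasible set is then a chain that cannot be extended downward, so its bottom element is an $X$-minimal model. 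You instead reduce from the maximum-cardinality-model problem via ``levelled certificates'' and a gapless-prefix distance, so that every maximal feasible set is a tower of height exactly $v^{\ast}$ whose top certificate encodes an optimal model. Both gadgets solve the same crux---converting inclusion-maximality of $S$ into optimality of the extracted assignment, independently of which maximal set is returned---and your argument for this step is sound. The trade-off is that the paper starts from the problem whose \FNPLog-completeness Chen and Toda establish outright, whereas you lean on the completeness of the maximum-cardinality-model problem, which is true (e.g., via the witness-preserving reduction from maximum clique) but is exactly the point you defer; in exchange, your distance function inspects only the level components and is trivially polynomial-time, and the dual swap for the {\sc maximal $n$ $k$-diverse solutions} variant is immediate in both constructions.
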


Membership can be shown by computing the maximum cardinality of
a set of at most $n$ solutions $S$ using the oracle.
Obviously, computing the maximum cardinality $c$
of a set of at most $n$ solutions $S$ is an optimization
problem whose associated decision problem is the following:
decide whether a given $c$ (such that $c\leq n$) is the cardinality of a
set $S$ of (at most $n$) solutions. Since the latter problem is in \NP\
(guess $S$ and check in polynomial time whether $|S|=c$, $\Delta(S) \leq k$,
and every $s\in S$ is a solution), the optimization problem is amenable to the
oracle provided that the computed value (optimal $c$) has logarithmically
many bits in the size of the input.
Note that since $|S|$ is polynomially bounded in the size of the input,
it has logarithmically many bits as required. Once the optimal value is
computed, one can nondeterministically compute a set $S$ of respective
size together with a witness for $\Delta(S) \leq k$, and check in
polynomial time that this is indeed the case.

Hardness can be shown by a reduction of $X$-MinModel
(cf.~\cite{chen-toda-95}).
We remark that the slightly different problem asking for a polynomial-size
set $S$ of solutions such that $\Delta(S)$ is minimal (respectively maximal),
again under the assumption that this value has logarithmically many
bits, is also \FNPLog-complete. For this variant, hardness can be
shown, e.g., for $\Delta(S)$ that takes the minimal (respectively
maximal) Hamming distance between answer sets in $S$ on a subset of
the atoms; note that such a {\em partial Hamming distance} is a
natural measure for problem encodings, where the disagreement on
output atoms is measured.

\begin{theorem}\label{theo:n-most-sim}
Problem {\sc $n$ most similar solutions} (resp.~{\sc $n$  most
diverse solutions}) is \FPNP-complete, and \FNPLog-complete if the
value of $\Delta(S)$ is polynomial in the size of $S$. Hardness
holds even if $\Delta(S)$ is computable in polynomial time.
\end{theorem}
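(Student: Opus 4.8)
The plan is to prove the dichotomy by treating the two regimes separately for membership and then supplying matching lower bounds, throughout reusing the threshold problem {\sc $n$ $k$-similar solutions}, which by Theorem~\ref{theo:n-k-sim} is \NP-complete. I would first settle membership in \FPNP\ for the general case. Since $\Delta(S)$ is bounded by an exponential in $|S|$, its value has polynomially many bits, so the optimum (minimum, \resp maximum) distance $k^\ast$ over all sets $S$ of exactly $n$ solutions can be located by binary search: each probe ``is there a set $S$ of $n$ solutions with $\Delta(S)\le k$?'' is an instance of {\sc $n$ $k$-similar solutions} and hence a single \NP\ query, and polynomially many probes pin down $k^\ast$. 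Having $k^\ast$, I would reconstruct an optimal witness $S$ by the standard search-reduces-to-decision technique, fixing the binary description of $S$ bit by bit and asking the \NP\ oracle at each step whether the current prefix extends to a set of $n$ solutions of distance $k^\ast$. As $n$ (hence $|S|$) is polynomially bounded by assumption, this is a deterministic polynomial-time computation with polynomially many \NP\ queries, i.e.\ an \FPNP\ procedure.

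Next I would treat the restricted case where the value of $\Delta(S)$ is polynomial in $|S|$, showing membership in \FNPLog. Here $k^\ast$ has only logarithmically many bits in the input size, and it is the optimal value of an optimization problem whose associated decision problem (``is $c$ the optimum over sets of $n$ solutions?'', the {\sc $n$ $k$-similar solutions} threshold test) lies in \NP. This is exactly the paradigm used for the membership part of Theorem~\ref{theo:max-k-sim}: consult the oracle once to obtain $k^\ast$, then nondeterministically guess a set $S$ of $n$ solutions together with a witness for $\Delta(S)=k^\ast$ and verify both in polynomial time.

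For the matching lower bounds, the \FPNP-hardness in the general case I would obtain by reducing from a standard \FPNP-complete problem, namely computing the satisfying assignment of maximum numeric value of a Boolean formula $\phi$ (cf.~\cite{papa-94}). I would set $n=1$, let $\cal P$ be a normal program whose answer sets are exactly the satisfying assignments of $\phi$, and define $\Delta(\{s\})$ to be the number encoded by the assignment $s$; then {\sc $1$ most diverse solutions} returns precisely the sought assignment, and the symmetric choice (minimum value) handles {\sc $1$ most similar solutions}. This $\Delta$ is computable in polynomial time, so hardness holds even then, yet it attains exponentially many distinct values, which is what prevents a collapse into \FNPLog; note that the reduction uses a non-normal $\Delta$, which the statement permits since (unlike Theorems~\ref{theo:n-k-sim} and~\ref{theo:k-close}) its hardness clause does not require normality. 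For the restricted \FNPLog-hardness I would appeal to the construction behind Theorem~\ref{theo:max-k-sim} and the remark following it, taking $\Delta(S)$ to be the partial Hamming distance between the answer sets in $S$, which has logarithmically many bits and is polynomial-time computable.

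The step I expect to be the main obstacle is the \FPNP-hardness reduction: one must encode the \emph{full} binary value of an assignment into a set-distance measure that still respects the problem interface (a map on sets of answer sets), remains polynomial-time computable, and crucially ranges over exponentially many values so the problem genuinely requires \FPNP\ rather than \FNPLog. Using $n=1$ and reading the value directly off the single answer set is the clean way around this, after which the remaining care is only verifying that this specialization coincides exactly with the chosen \FPNP-complete source problem.
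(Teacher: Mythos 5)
Your proposal is correct and its overall architecture coincides with the paper's: binary search over \NP\ queries for \FPNP-membership, a single call to an optimization oracle whose optimal value has logarithmically many bits for \FNPLog-membership, and a reduction from $X$-MinModel via partial Hamming distance for the restricted hardness. The one genuinely different component is the \FPNP-hardness reduction. The paper reduces from the Traveling Salesman Problem, building a normal program whose answer sets encode Hamilton cycles and setting $\Delta(S)=\Sigma_{s\in S}\Sigma_{p_{i,j}\in s}\,d_{i,j}$ (with $\Delta'(S)=m\times\mathit{max}_d-\Delta(S)$ for the diverse variant), whereas you reduce from the maximum-value satisfying assignment problem, reusing the generator program of Theorem~\ref{theo:n-k-sim} and reading $\Delta(\{s\})$ off the assignment as a binary integer. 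Both sources are standard \FPNP-complete problems, both reductions take $n=1$ with a polynomial-time, non-normal $\Delta$ of exponential range, so the arguments are interchangeable; yours avoids constructing and verifying the Hamilton-cycle encoding, while the paper's stays closer to a textbook reference. A second, minor divergence: to extract an optimal witness after the binary search you use prefix-extension (search-reduces-to-decision) queries, whereas the paper lets the final oracle invocation ``copy the guessed solution to its output tape''; your formulation is the more careful one for a deterministic oracle machine. The only imprecision is in your \FNPLog-hardness sketch: the phrase ``partial Hamming distance between the answer sets in $S$'' does not literally apply when $n=1$, and the paper instead measures, for a singleton $S=\{s\}$, the partial Hamming distance on $X$ between $s$ and a fixed reference ($\emptyset$, respectively $X$), so that minimization yields an $X$-minimal model; this is clearly the construction you intend, so it is a matter of wording rather than a gap.
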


\FPNP-membership of {\sc $n$ most similar solutions} (resp.~{\sc $n$
most diverse solutions}) is obtained by first using the \NP-oracle
to compute the minimum distance using binary search (deciding
polynomially many {\sc $n$ $k$-similar solutions} problems). Then,
the oracle is used to compute some suitable $S$ in polynomial time. Hardness
follows from a reduction of the Traveling Salesman Problem (TSP).
Notably, if the distances are polynomial in the size of the input,
i.e., if the value of $\Delta(S)$ is polynomially bounded in the
size of $S$, then the problem is \FNPLog-complete.

Proceeding similarly as before, completeness for \FPNP
(resp.~\FNPLog{} if $\Delta(S)$ is small) is obtained for {\sc closest
solution} (and for {\sc most distant solution}):

\begin{theorem}\label{theo:most-close}
Problem {\sc closest solution}
(resp.~{\sc most distant
solution}) is \FPNP-complete, and \FNPLog-complete if the value of
$\Delta(S)$ is polynomial in the size of $S$. Hardness holds even if
$\Delta(S)$ is computable in polynomial time.
\end{theorem}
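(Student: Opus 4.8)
The plan is to mirror the argument sketched for {\sc $n$ most similar solutions} (Theorem~\ref{theo:n-most-sim}), but using the \NP-complete decision problem {\sc $k$-close solution} (Theorem~\ref{theo:k-close}) as the oracle in place of {\sc $n$ $k$-similar solutions}. For \FPNP-membership I would first pin down the optimal value $k^\ast = \min\{\Delta(S\cup\{s\}) : s\in\sol{P},\ s\notin S\}$. Since $\Delta$ is assumed to be bounded by an exponential in the size of its argument, $k^\ast$ has polynomially many bits, so I can locate it by binary search over a range $[0,2^{q}]$, where $q$ is a polynomial bound on the number of bits of $\Delta$; each step is a single call to the {\sc $k$-close solution} oracle, so this costs polynomially many adaptive queries. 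Having $k^\ast$, I recover an actual witness $s$ by a standard prefix self-reduction: I fix the atoms encoding $s$ one at a time, at each step asking the oracle whether a close solution with distance $\le k^\ast$ exists that is consistent with the atoms fixed so far (the program $\cal P$ is extended by the partial assignment). Both phases make only polynomially many oracle calls with polynomial work in between, so the whole procedure is in \FPNP.

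For the sharper \FNPLog\ bound when $\Delta(S)$ is polynomially bounded in $|S|$, the value $k^\ast$ now has only $O(\log)$ many bits, and computing it is exactly the optimal value of an \NP\ optimization problem (its associated decision problem, {\sc $k$-close solution}, is in \NP\ by Theorem~\ref{theo:k-close}). Hence one may consult the oracle once to obtain $k^\ast$, then nondeterministically guess a solution $s\notin S$ together with a witness for $\Delta(S\cup\{s\})\le k^\ast$, and verify in polynomial time both that $s$ is a solution and that the witness is valid; this matches the definition of \FNPLog.

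Hardness I would establish by reducing the function version of the Traveling Salesman Problem, which is \FPNP-complete, exactly as for Theorem~\ref{theo:n-most-sim}. Given a TSP instance I build a program $\cal P$ whose solutions encode Hamiltonian tours, take $S$ to be a fixed trivial reference set (e.g.\ empty or a single dummy solution, which is permissible since normality of $\Delta$ is not required here), and let $\Delta(S\cup\{s\})$ return the binary-encoded cost of the tour $s$, a polynomial-time computable function. A closest solution is then a minimum-cost tour, so the reduction is correct and the value of $\Delta$ is polynomial-time computable, as claimed. For the poly-bounded case I would instead reduce $X$-MinModel, as in Theorem~\ref{theo:max-k-sim}, encoding its models as solutions of $\cal P$ and setting $\Delta(S\cup\{s\})$ to the cardinality (equivalently, a partial Hamming weight) of $s$; this is poly-bounded and polynomial-time computable, and its minimizer is a minimum model, giving \FNPLog-hardness. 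The {\sc most distant solution} variant is handled symmetrically by replacing $\le$ with $\ge$ throughout and maximizing rather than minimizing.

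The main obstacle I anticipate is the witness-extraction step of the \FPNP-membership argument: I must verify that the prefix self-reduction is sound, i.e.\ that extending $\cal P$ by a partial assignment of the solution-encoding atoms and querying feasibility at the optimum $k^\ast$ indeed reconstructs a genuine solution $s\notin S$. This requires that a solution can be identified with the atoms true in its corresponding answer set and that fixing a prefix never renders the residual oracle query ill-posed. The secondary delicate point is checking that the TSP encoding yields exactly the tours as solutions and that the chosen reference set $S$ keeps the constraint $s\notin S$ and the identity $\Delta(S\cup\{s\})=\mathrm{cost}(s)$ simultaneously valid.
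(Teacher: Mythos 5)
Your proposal is correct and follows essentially the same route as the paper: binary search with an \NP\ oracle for the optimal $k$ to get \FPNP-membership, a single optimization-oracle call plus a nondeterministic guess for the \FNPLog\ case, and hardness inherited from the TSP and $X$-MinModel reductions of Theorem~\ref{theo:n-most-sim} specialized to $n=1$ and $S=\emptyset$. The only divergence is the final witness-extraction step, where the paper informally lets the oracle ``copy its guessed answer set to the output tape,'' whereas you use a prefix self-reduction with polynomially many further \NP\ queries; this is a more careful rendering of the same step and changes nothing in the complexity bounds.
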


For the generalization of {\sc $k$-close solution} (resp.~of {\sc
$k$-distant solution}) to sets, namely {\sc $k$-close set}
(resp.~{\sc $k$-distant set}), \NP-completeness holds by similar
arguments as for the former problem(s):

\begin{theorem}\label{theo:k-close-set}
Problem {\sc $k$-close set} (resp.~{\sc $k$-distant set}) is
\NP-complete. Hardness holds even if $\Delta(S)$ is computable
in constant time and for any normal $\Delta$.
\end{theorem}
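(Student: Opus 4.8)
The plan is to prove \NP-membership and \NP-hardness separately, reusing the guess-and-check template of Theorems~\ref{theo:n-k-sim} and~\ref{theo:k-close} and adapting it to the two-sided acceptance condition $|\Delta(S)-\Delta(S')|\leq k$ (\resp $\geq k$).

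For membership I would exhibit a nondeterministic polynomial-time procedure. Since the size of $S'$ is assumed polynomial, first I would guess $S'$ together with an answer set of $\cal P$ for each of its elements, verifying in polynomial time that every guessed element is a solution and that $S'\neq S$. The delicate point is that the acceptance test refers to the \emph{exact} values of $\Delta(S)$ and $\Delta(S')$, whereas the supplied machines decide only one-sided comparisons. Hence I would also guess the integers $d$ and $d'$ (each with polynomially many bits, by assumption) intended to equal $\Delta(S)$ and $\Delta(S')$, and certify these equalities by additionally guessing accepting computations of both $M^\leq_\Delta$ and $M^\geq_\Delta$ on the inputs $(S,d)$ and $(S',d')$ (for this two-sided problem we assume both machines are part of the input description of $\Delta$). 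Once $d=\Delta(S)$ and $d'=\Delta(S')$ are certified, the condition $|d-d'|\leq k$ is a trivial integer test; for the {\sc $k$-distant set} variant I would instead guess which of the disjuncts $d'\leq d-k$ or $d'\geq d+k$ witnesses $|d-d'|\geq k$. All guesses and checks are polynomial, so both problems lie in \NP.

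For hardness I would reduce answer-set existence for normal propositional programs---an \NP-complete problem already invoked for Theorem~\ref{theo:n-k-sim}---to {\sc $k$-close set}. Given a program $\Pi$, the reduction outputs $\cal P=\Pi$, the constant distance $\Delta\equiv 0$ (which is \emph{normal} and computable in constant time), the empty set $S=\emptyset$, and $k=0$; note that $\Delta(\emptyset)=0$ is well defined precisely because $\Delta$ is normal. Every set of solutions $S'$ then satisfies $|\Delta(S)-\Delta(S')|=|0-0|=0\leq k$, so the instance is positive exactly when some $S'\neq\emptyset$ consisting of solutions exists. If $\Pi$ has an answer set $X$, then $S'=\{X\}$ works; conversely, if $\Pi$ has no answer set, then $\emptyset=S$ is the only set of solutions and no admissible $S'$ exists. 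The very same instance settles {\sc $k$-distant set} as well, since with $k=0$ the requirement $|0-0|\geq 0$ holds for every $S'\neq S$. The reduction is clearly computable in logarithmic space and uses a normal, constant-time $\Delta$, as claimed.

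I expect the membership step to demand the most care: unlike the one-sided problems in Theorems~\ref{theo:n-k-sim}--\ref{theo:k-close}, the absolute-difference condition forces me to certify the \emph{equalities} $\Delta(S)=d$ and $\Delta(S')=d'$ rather than mere one-sided inequalities, which is what necessitates invoking both $M^\leq_\Delta$ and $M^\geq_\Delta$ and, for the {\sc $k$-distant set} case, a case split over the two ways the bound $k$ can be met. The use of $S=\emptyset$ in the reduction is the only other point I would flag; it is legitimate under the stated conventions, and if one insisted on a nonempty $S$ the trivial choice $S'=\emptyset$ would always satisfy the condition, so one would then need a normal $\Delta$ that is nonconstant on sets of size at least two---complicating the constant-time claim---which is why I would keep the empty-set formulation.
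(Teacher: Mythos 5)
Your proposal is correct and follows essentially the same route as the paper's proof: membership by guessing $S'$ together with two candidate values for $\Delta(S)$ and $\Delta(S')$ and certifying each equality via accepting computations of both $M^\leq_\Delta$ and $M^\geq_\Delta$ (four witnesses in total), then performing the arithmetic comparison; and hardness by reducing answer-set existence for normal programs using $S=\emptyset$, $k=0$, and the constant (normal) distance $\Delta\equiv 0$, with the same instance settling both the close and distant variants. No gaps.
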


\paragraph{Discussion} The results above, summarized in Table~\ref{tab:comp}, show that
computing similar solutions is intractable in general. This already
holds under the reasonable assumption that the distance measure $\Delta$
is normal, where all considered decision problems are \NP-complete.

The precise complexity characterization of the search problems
({\sc maximal $n$ $k$-similar solutions}, {\sc $n$ most similar solutions},
and {\sc closest solution}) reveals some information about the type
of algorithm we can expect to be suitable for solving these
problems in practice (for background, see \cite{chen-toda-95} and references
therein). In particular, we may not expect that they can be solved
by parallelization to \NP-problems in polynomial time, i.e., solve in
parallel polynomially many \NP-problems, e.g., SAT instances, and then
combine the results. On the other hand, for problem {\sc maximal
$n$  $k$-similar solutions} this is possible under randomization, i.e.,
with high probability of a correct outcome, due to the characteristics
of \FNPLog, while this is not the case for the problems {\sc $n$ most
  similar solutions} and {\sc closest solution} in the general case.
Rather, the results suggest that consecutive, dependent calls to \NP\
oracles are needed. Intuitively, backtracking-style algorithms,
which explore the search space to find solutions and then see to
(dis)prove optimality by finding better solutions, appropriately reflect
adaptivity.

However, from a worst-case complexity perspective, a simple realization
of such a scheme may not be optimal, as far too many solution
improvements (exponentially resp.\ polynomially many under ``small''
distance values) may happen until an optimal solution is found; here a
two phase algorithm (first compute the optimal solution cost in binary
search and then a solution of that cost, e.g.,\ with backtracking)
gives better guarantees.
In practice, one may intertwine bound and solution computation and
conduct a binary search over computations of solutions within a given bound.

In the next section, we consider first solving the search problem
analog of the decision problem {\sc $n$ $k$ similar solutions},
using different approaches, ranging from declarative encodings in
ASP over the explicit respectively implicit set of solutions, to a
generalized backtracking algorithm for evaluation ASP programs. We
then consider solving the related search problems  {\sc $n$ most
similar solutions} and {\sc maximal $n$ $k$-similar solutions} based
on the above considerations.  Finally, we discuss how we can solve
the problems {\sc $k$-close solution}, {\sc closest solution} and
{\sc $k$-close set} utilizing the methods introduced for {\sc $n$
$k$ similar solutions} and its variants.

\section{Computing Similar/Diverse Solutions}\label{sec_CompSimiDiveSolu}

Now we have a better understanding of the computational problems,
let us present our computational methods to find
$n$~$k$-similar/diverse solutions, $n$~most similar/diverse
solutions and maximal $n$ $k$-similar/diverse solutions for a given
computational problem $P$. Since the computation of similar
solutions and diverse solutions are symmetric, for simplicity, let
us only focus on the problems related to similarity. In the
following, suppose that the problem $P$ is described by an ASP
program {\tt Solve.lp}.

\subsection{Computing $n$ $k$-Similar Solutions}
\label{ssec_compsimisolu}

To compute a set of $n$ solutions whose distance is at most $k$, we
introduce an offline method and three online methods.

\subsubsection{Offline Method}\label{ssec_OfflMeth}

In the offline method, we compute the set $S$ of all the solutions
for $P$ in advance using the ASP program {\tt  Solve.lp}, with an
existing ASP solver. Then, we use some clustering method to find
similar solutions in $S$. The idea is to form clusters of $n$
solutions, measure the distance of each cluster, and pick the
cluster whose distance is less than or equal to $k$.

We can compute clusters of $n$ solutions whose distance is at most
$k$ by means of a graph problem: build a complete graph $G$ whose
nodes correspond to the solutions in $S$ and edges are labeled by
distances between the corresponding solutions; and decide whether
there is a clique $C$ of size $n$ in $G$ whose weight (i.e., the
distance of the set of solutions denoted by the weight of the
clique) is less than or equal to $k$. The set of vertices in the
clique represents $n$ $k$-similar solutions.

The weight of a clique (or the distance $\Delta$ of the solutions in
the cluster) can be computed as follows:  Given a function $d$ to
measure the distance between two solutions, let $\Delta(S)$ be the
maximum distance between any two solutions in $S$. Then $n$
$k$-similar solutions can be computed by
Algorithm~\ref{alg_offlsolu}
where the graph $G$ is built as follows: nodes correspond to
solutions in $S$, and there is an edge between two nodes $s_1$ and
$s_2$ in $G$ if $d(s_1,s_2) \leq k$. Nodes of a clique of size $n$
in this graph correspond to $n$~$k$-similar solutions. Such a clique
can be computed using the ASP formulation in~\cite{Lif-What-08}, or
one of the existing exact/approximate algorithms discussed
in~\cite{Gutin-Handbook-2003}.

\begin{algorithm}[t!]
\caption{Offline Method}
\label{alg_offlsolu}
\begin{algorithmic}

\REQUIRE{A set $S$ of solutions, a distance function $d: S \times S
\mapsto \mathbb{N}$}, and two nonnegative integers $n$ and $k$.

\ENSURE{A set $C$ of $n$ solutions whose distance is at most~$k$.}


\STATE $V \leftarrow\ $ Define a set of $|S|$ vertices, each
denoting a unique solution in $S$;

\STATE $E = \{\{v_i,v_j\}\ |\ v_i\neq v_j,\ v_i, v_j\ \mathrm{denote}\
s_i, s_j \in S, \ d(s_i,s_j) \leq k\}$;

\STATE $C \leftarrow$ Find a clique of size $n$ in $\langle V,E
\rangle$;

\RETURN C

\end{algorithmic}
\end{algorithm}

\subsubsection{Online Method 1: Reformulation}\label{ssec_OnliMeth1}

Instead of computing all the solutions in advance as in the offline
method, we can compute $n$~$k$-similar solutions to the given
problem $P$ on the fly. First we reformulate the ASP program {\tt
Solve.lp} in such a way to compute $n$-distinct solutions; let us
call the reformulation as {\tt SolveN.lp}. Such a reformulation can
be obtained from {\tt Solve.lp} as follows:

\begin{enumerate}
\item
We specify the number of solutions: {\tt solution(1..n).}
\item
In each rule of the program { \tt Solve.lp}, we replace each atom
{\tt p(T1,T2,...,Tm)} (except the ones specifying the input) with
{\tt p(N,T1,T2...,Tm)}, and add to the body {\tt solution(N)}.
\item
Now we have a program that computes $n$ solutions. To ensure that
they are distinct, we add a constraint which expresses that every
two solutions among these $n$ solutions are different from each
other.
\end{enumerate}

\begin{figure}[t!]
\begin{center}
\includegraphics[scale=.45]{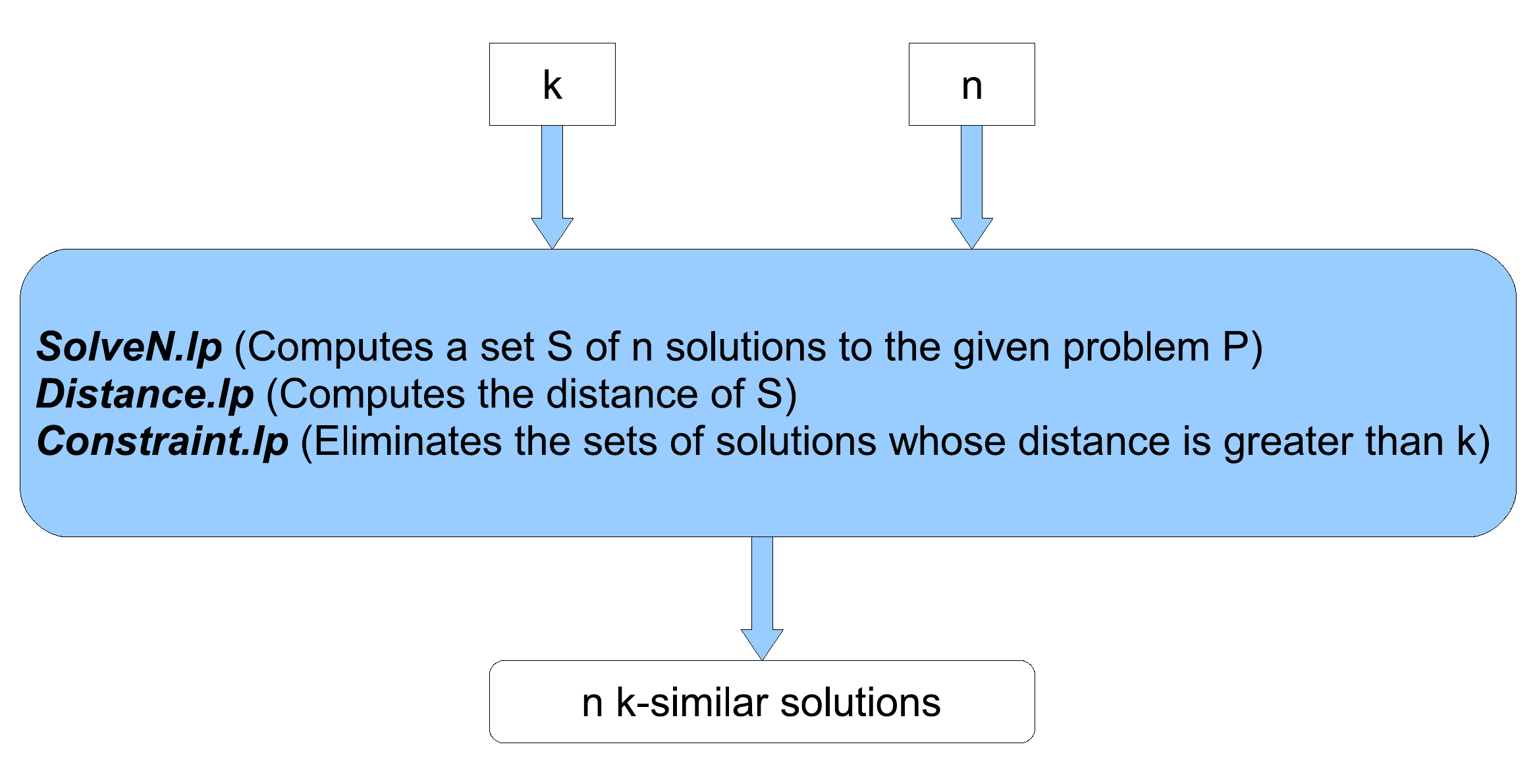}
\end{center}
\caption{Computing $n$ $k$-similar solutions, with Online Method 1.}
\label{fig_FlowChartOnline1-1}
\end{figure}

Next we describe the distance function $\Delta$ as an ASP program,
{\tt Distance.lp}. In addition, we represent the constraints on the
distance function (e.g., the distance of the solutions in $S$ is at
most $k$) as an ASP program {\tt  Constraint.lp}. Then we can
compute $n$-distinct solutions for the given problem $P$ that are
$k$-similar, by one call of an existing ASP solver with the program
{\tt  SolveN.lp} \ $\cup$ \ {\tt Distance.lp} \ $\cup$ \ {\tt
Constraint.lp}, as shown in Fig.~\ref{fig_FlowChartOnline1-1}. Let
us give an example to illustrate Online Method 1.
\begin{example}
Suppose that we want to compute $n$~$k$-similar cliques in a graph.
Assume that the similarity of two cliques is measured by the Hamming
Distance: the distance between two cliques $C$ and $C'$ is equal to
the number of different vertices, $(C\setminus C') \cup (C'\setminus
C)$. The distance of a set $S$ of cliques can be defined as the
maximum distance among any two cliques in $S$.

The clique problem can be represented in ASP ({\tt Solve.lp}) as
in~\cite{Lif-What-08}, also shown in~\ref{sec:pgms}
(Fig.~\ref{fig:clique}). The reformulation ({\tt SolveN.lp}) of this
ASP program as described above can be seen in Fig.~\ref{fig:cliqueN}
of~\ref{sec:pgms}. This reformulation computes $n$ distinct cliques.

The Hamming Distance between any two cliques can be represented by
the ASP program ({\tt Distance.lp}) shown in Fig.~\ref{fig:hamming}
of~\ref{sec:pgms}.

Finally, Fig.~\ref{fig:cons} shows the constraint ({\tt
Constraint.lp}) that eliminates the sets whose distance is above
$k$.

An answer set for the union of these three programs, {\tt SolveN.lp}
\ $\cup$ \ {\tt Distance.lp} \ $\cup$ \ {\tt Constraint.lp},
corresponds to $n$~$k$-similar cliques.

\end{example}

\subsubsection{Online Method 2: Iterative
Computation}\label{ssec_OnliMeth2}

This method does not modify the given ASP program {\tt  Solve.lp} as
in Online Method 1, but still formulates the distance function and
the distance constraints as ASP programs. The idea is to find
similar/diverse solutions iteratively, where the $i$'th solution is
$k$-close to the previously computed $i-1$ solutions (Fig.
\ref{fig_FlowChartOnline2-1}). Here $n$ iterations lead to $n$
solutions whose distance is at most $k$ (i.e., $n$~$k$-similar
solutions).

\begin{figure}[t!]
\begin{center}
\includegraphics[scale=.45]{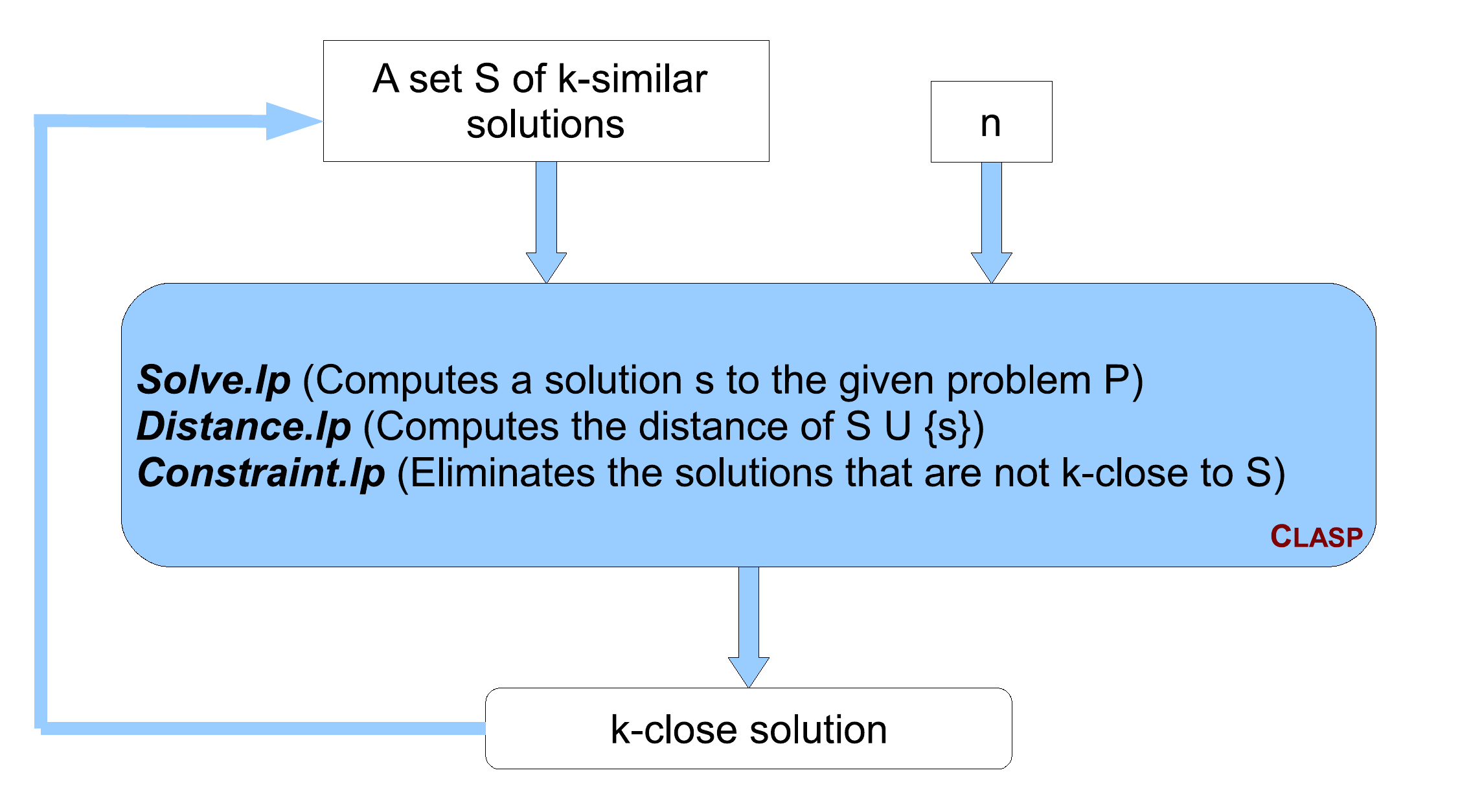}
\end{center}
\caption{Computing $n$ $k$-similar solutions, with Online Method 2.
Initially $S = \emptyset$. In each run, a solution is computed and
added to $S$, until $|S| = n$. The distance function and the
constraints in the program ensures that when we add the computed
solution to $S$, the set stays $k$-similar.}
\label{fig_FlowChartOnline2-1}
\end{figure}

Note that, like Offline Method and Online Method 1, this method is
sound; however, unlike Offline Method and Online Method 1, it is not
complete since computation of each solution depends on the
previously computed solutions. The method may not return
$n$~$k$-similar solutions (even it exists) if the previously
computed solutions comprise a bad solution set.

\subsubsection{Online Method 3: Incremental
Computation}\label{ssec_OnliMeth3}

This method is different from the other two online methods in that
it does not modify the ASP program {\tt Solve.lp} describing the
given computational problem $P$, it does not formulate the distance
function $\Delta$ and the distance constraints as ASP programs.
Instead,  modifies the search algorithm of an existing ASP solver in
such a way that the modified ASP solver can compute $n$~$k$-similar
solutions (Fig.~\ref{fig_nkSimilarIncremental}).
In this method, we modify the search algorithm of the ASP solver
\clasp\ (Version 1.1.3); the modified version is called \claspnk.
The given distance measure $\Delta$ is implemented as a {\tt C++}
program.

\begin{figure}[t!]
\begin{center}
\includegraphics[scale=.45]{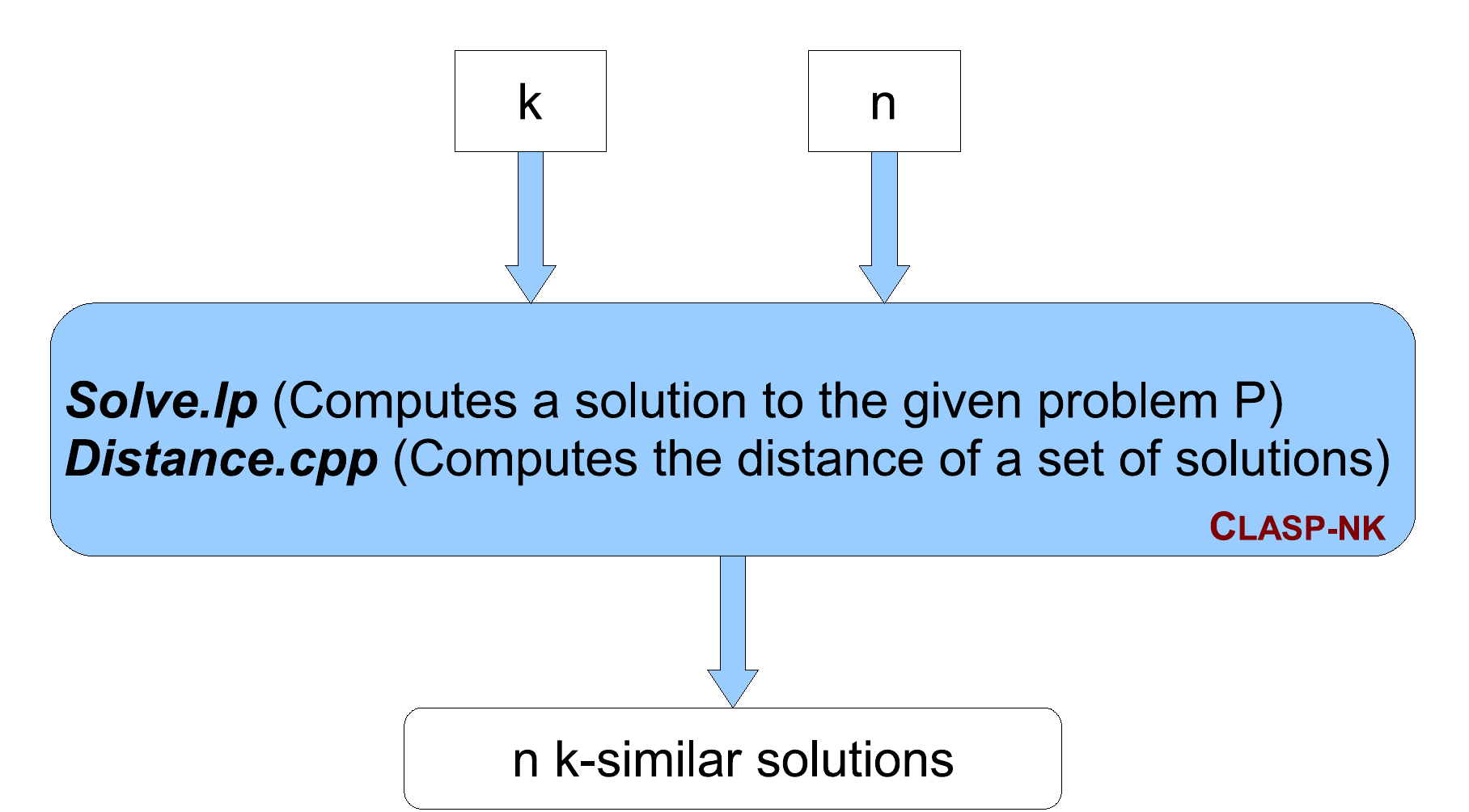}
\end{center}
\caption{Computing $n$~$k$-similar solutions, with Online Method 3.
\claspnk\ is a modification of the ASP solver \clasp, that takes
into account the distance function and constraints while computing
an answer set in such a way that \claspnk\ becomes biased to compute
similar solutions. Each computed solution is stored by \claspnk\
until a set of $n$~$k$-similar solutions is computed.}
\label{fig_nkSimilarIncremental}
\end{figure}

\begin{algorithm}[t!]
\caption{CLASP}
\label{alg_CLASP}
\begin{algorithmic}
\REQUIRE{An ASP program $\Pi$}

\ENSURE{An answer set $A$ for $\Pi$}

\STATE $A \leftarrow \emptyset$ \COMMENT{current assignment of
literals}

\STATE $\bigtriangledown \leftarrow \emptyset$ \COMMENT{set of
conflicts}

\WHILE{No Answer Set Found}

\STATE {\sc \textcolor{blue}{propagation($\Pi,A,\bigtriangledown$)}}
\COMMENT{propagate literals}

\IF{There is a conflict in the current assignment}

\STATE {\sc
\textcolor{blue}{resolve-conflict($\Pi,A,\bigtriangledown$)}}
\COMMENT{learn and update conflicts, and backtrack}

\ELSE{}

\IF{Current assignment does not yield an answer set}

\STATE {\sc \textcolor{blue}{select($\Pi,A,\bigtriangledown$)}}
\COMMENT{select a literal to continue search}

\ELSE{}

\RETURN A

\ENDIF \ENDIF \ENDWHILE
\end{algorithmic}
\end{algorithm}

Let us describe how we modified \clasp\ to obtain \claspnk. {\sc
clasp} performs a DPLL-like~\cite{DPLL-62,ClauseLearning-99} branch
and bound search to find an answer set for a given ASP program
(Algorithm \ref{alg_CLASP}): at each level, it ``propagates'' some
literals to be included in the answer set, ``selects'' new literals
to branch on, or ``backtracks'' to an earlier appropriate point in
search while ``learning conflicts'' to avoid redundant search.

\begin{algorithm}[t!]
\caption{CLASP-NK} \label{alg_CLASP2}
\begin{algorithmic}
\REQUIRE{An ASP program $\Pi$, \underline{nonnegative integers
$n$, and $k$}}

\ENSURE{\underline{A set $X$ of $n$ solutions that are $k$
similar ($n$ $k$-similar solutions)}}

\STATE $A \leftarrow \emptyset$ \COMMENT{current assignment of
literals}

\STATE $\bigtriangledown \leftarrow \emptyset$ \COMMENT{set of
conflicts}

\STATE \underline{$X \leftarrow \emptyset$} \COMMENT{computed
solutions}

\WHILE{\underline{$|X| < n$}}

\STATE $\underline{\ii{PartialSolution} \leftarrow A }$

\STATE \underline{$LowerBound \leftarrow$\ {\sc
distance-analyze}$(X,\ii{PartialSolution})$} \COMMENT{compute a
lower bound for the distance between any completion of a partial
solution and the set of previously computed solutions}

\STATE \textcolor{blue}{{\sc
{propagation}}($\Pi,A,\bigtriangledown$)} \COMMENT{propagate
literals}

\IF{Conflict in propagation \underline{{\sc or} $LowerBound >
k$}}

\STATE {\sc
\textcolor{blue}{resolve-conflict($\Pi,A,\bigtriangledown$)}}
\COMMENT{learn and update conflicts, and backtrack}

\ELSE{}

\IF{Current assignment does not yield an answer set} \STATE {\sc
\textcolor{blue}{select($\Pi,A,\bigtriangledown$)}}\COMMENT{select
a literal to continue search}

\ELSE{}

\STATE $X \leftarrow X \cup \{A\}$ \STATE $A \leftarrow \emptyset$
\ENDIF \ENDIF \ENDWHILE \RETURN X
\end{algorithmic}
\end{algorithm}

We modify \clasp's algorithm as shown in Algorithm~\ref{alg_CLASP2}
to obtain \claspnk: the underlined parts show these modifications.
To use \claspnk, one needs to prepare an options file,
\emph{NKoptions}, to describe the input parameters to compute $n$
$k$-similar phylogenies, such as the values $n$ and $k$, along with
the names of predicates that characterize solutions and that are
considered for computing the distance between solutions. Note that
since an answer set (thus a solution) is computed incrementally in
\claspnk, we cannot compute the distance between a partial solution
and a set of solutions with respect to the given distance function
$\Delta$. Instead, one needs to implement a heuristic function to
estimate a lower bound for the distance between any completion $s$
of a partial solution with a set $S$ of previously computed
solutions. If this heuristic function is admissible then it does not
underestimate the distance of $S\cup \{s\}$ (i.e., it returns a
lower bound that is less than or equal to the optimal lower bound
for the distance).

Note that similar to Online Method 2, this method is also sound but
not complete since each solution depends on all previously computed
solutions.

\subsection{Computing $n$ Most Similar Solutions}
\label{ssec_CompNMostSimiPhyl}

In the previous sections, we have described some computational
methods to solve the decision problem {\sc $n$ $k$-similar
solutions}. Let us discuss how we can solve the optimization problem
{\sc $n$ most similar solutions}. Let $NKSimilar(n,k)$ be a function
that returns---with one of the methods described in the previous
subsections---a set $S$ of $n$ solutions which is $k$-similar; or
returns empty set if no such set exists. Using this function, we can
find $n$~most~similar~solutions as follows: First we compute a lower
bound and an upper bound for the distance $k$ of a set of $n$
solutions. Then, we perform a binary search within these bounds to
find a set $S$ of solutions with the optimal value for $k$.
Computations of a lower bound and an upper bound are usually
specific to the particular problem. For instance, consider the
clique problem described in Section~\ref{ssec_OnliMeth1}. We can
find two most similar cliques in a graph, specifying the lower bound
as $0$ and the upper bound as the number of vertices in the graph
and using one of the methods described above.

\subsection{Computing Maximal $n$ $k$-Similar Solutions}
\label{ssec_CompMaxiKSimiPhyl}

Another optimization problem we are interested in is {\sc maximal
$n$ $k$-similar solutions}, which asks for a maximal set of
solutions whose distance is at most $k$. We can solve this problem
by modifying Online Method 2: start with a solution (computed using
{\tt Solve.lp}), then repeatedly find a solution which is $k$-close
to the previously found solutions until there does not exists such a
solution.  Recall that Online Method 2 iterates $n$ times; here the
iterations continue until no $k$-close solution is found. Since
Online Method 2 is incomplete, this method of computing maximal $n$
$k$-similar solutions is incomplete as well.

\subsection{Computing Close/Distant Solutions}
\label{ssec_compClosesolu}

We can solve the problem {\sc $k$-close solution} utilizing the
methods for {\sc $n$ $k$-similar solutions}. For instance, we can
modify Online Method 2: start with a set $S$ of solutions, then find
a solution which is $k$-close to $S$. Based on this modified method,
we can solve the problem {\sc closest solution}: we can compute a
lower bound and an upper bound for $k$, and find the optimal value
for $k$ by a binary search between these bounds as described in the
method for {\sc $n$ most similar solutions}.

Alternatively, for {\sc $k$-close solution}, we can modify the ASP
program ${\cal P}$ ({\tt Solve.lp}) that describes the computational
problem $P$, by adding constraints, to ensure that the answer sets
for ${\cal P}$ characterize solutions for $P$ except for the ones
included in the given set $S$ of solutions. Let us call the modified
ASP program ${\cal P}'$. Next, we define a distance measure
$\Delta'$ that maps a set of solutions for $P$ to a nonnegative
integer, in terms of the given measure $\Delta$ as follows:
$\Delta'(X) = \Delta(S\cup X)$. Then, we can use one of the
computational methods introduced for {\sc $n$ $k$-similar solutions}
with the ASP program ${\cal P}'$, the distance function $\Delta'$
and $n=1$. In a similar way,  we can find a solution to the problem
{\sc closest solution} utilizing the computational method for {\sc
$n$ most similar solutions}, with the ASP program ${\cal P}'$, the
distance measure $\Delta'$ and $n=1$.

We can solve the problem {\sc $k$-close set} using one of the
computational methods for {\sc $n$ $k$-similar solutions} as well.
For instance, we can use Online Method 1 with
$n=1,2,3,\ldots,m$, where $m$ is an upper bound on the number of
solutions for $P$, until a $k$-close set $S'$ of solutions is computed.
For each $n$, we reformulate the ASP program ${\cal P}$ to compute a set
$S'$ of $n$ solutions and add a constraint to this reformulation
to ensure that $S'\neq S$ when $n=|S|$; let us call this modified
reformulation ${\cal P}_n$. Then we try to find a $k$-close set $S'$
of $n$ solutions with the ASP program ${\cal P}_n$, and the distance measure
$\Delta''=|\Delta(S) - \Delta(S')|$.

Alternatively, we can use Online Method 2 or 3 with the ASP program
${\cal P}$, with the distance measure
$$
\Delta'''=\left\{
  \ba{ll}
    \infty                   & S=S' \\
    |\Delta(S) - \Delta(S')| & \textrm{otherwise}
  \ea\right.
$$
and with $n=1,2,3,\ldots,m$ where $m$ is an upper bound on the number of
solutions for $P$.


\section{Computing Similar/Diverse
Phylogenies}\label{sec_CompSimiDivePhyl}

Let us now illustrate the usefulness of our methods in a real-world
application: reconstruction of evolutionary trees (or phylogenies)
of a set of species based on their shared traits.  This problem is
important for research areas as disparate as genetics, historical
linguistics, zoology, anthropology, archeology, etc.. For example, a
phylogeny of parasites may help zoologists to understand the
evolution of human diseases~\cite{PhyEcologyBehave-91}; a phylogeny
of languages may help scientists to better understand human
migrations~\cite{PrehistoryOfAustralia-82}.

There are several software systems, such as {\sc
phylip}~\cite{phylip}, {\sc paup}~\cite{PAUP} or {\sc
Phylo-ASP}~\cite{erdem09}, that can reconstruct a phylogeny for a
set of taxonomic units, based on ``maximum
parsimony''~\cite{ReconstOfEvolutionaryTrees-64} or ``maximum
compatibility''~\cite{MethodDeduceBrachSequence-65} criterion. With
some of these systems, such as {\sc Phylo-ASP}, we can compute many
good phylogenies (most parsimonious phylogenies, perfect
phylogenies, phylogenies with most number of compatible traits,
etc.) according to the phylogeny reconstruction criterion. In such
cases, in order to decide the most ``plausible'' ones, domain
experts manually analyze these phylogenies, since there is no
available phylogenetic system that can analyze/compare these
phylogenies.

For instance, {\sc Phylo-ASP} computes 45 plausible phylogenies for
the Indo-European languages based on the dataset of \cite{bro07}. In
order to pick the most plausible phylogenies, in \cite{bro07}, the
historical linguist Don Ringe analyzes these phylogenies by trying
to cluster these phylogenies into diverse groups, each containing
similar phylogenies. In such a case, having a tool that reconstructs
similar/diverse solutions would be useful: with such a tool, an
expert can compute (instead of computing all solutions) few most
diverse solutions, pick the most plausible one, and then compute
phylogenies that are close to this phylogeny.

Let us show how our methods can be used for this purpose. Before
that, we define a phylogeny and some distance functions to
measure the similarity/diversity of phylogenies.

\subsection{Distance Measures for Phylogenies}
\label{ssec_DistMeasPhyl}

A {\em phylogeny} for a set of taxa is a finite rooted leaf-labeled
binary directed tree $\langle V,E\rangle$ with a set $L$ of leaves
($L\subseteq V$). The set $L$ represents the given taxonomic units,
whereas the set $V$ describes their ancestral units and the set $E$
describes the genetic relationships between them. The labelings of
leaves denote the values of shared traits at those nodes. We
consider distance measures that depend on topologies of phylogenies,
therefore, while defining them we discard these labelings.

There are various measures to compute the distance between two
phylogenies~\cite{NovelAlgoForComparingPhyTree-NyePietro-06,RobinsonFlouds-CompPhyTrees-81,HonEt-ImprovedPhyComp-00,BrachAndScore-94,DasGuptaHeEtAl-distances-1997}.
In the following, first we consider one of these domain-independent
functions, the nodal distance
measure~\cite{Bluis-NodalDistance-2003}, to compare two phylogenies;
and then we define a distance measure for a set of phylogenies based
on the nodal distances of pairwise phylogenies, to show the
applicability of our methods for finding $n$ $k$-similar
phylogenies. Then we define a novel distance function that measures
the distance of two phylogenies, and a distance function that
measures the distance of a set of phylogenies, taking into account
some expert knowledge specific to evolution. With this measure we
also show the effectiveness of our methods.

\subsubsection{Nodal Distance of Two Phylogenies}
\label{ssec_DiffNodaDist}

The {\em nodal distance $\ii{ND}_P(x,y)$ of two leaves} $x$ and $y$
in a phylogeny $P$  is defined as follows: First, transform the
phylogeny $P$ (which is a directed tree) to an undirected graph $G$
where there is an undirected edge $\{i,j\}$ in the graph for each
directed edge $(i,j)$ in the phylogeny. Then $\ii{ND}_P(x,y)$ is
equal to the length of the shortest path between $x$ and $y$ in the
undirected graph $G$. For example, consider the phylogeny, $P_1$ in
Fig.~\ref{fig_twotrees}; the nodal distance between $a$ and $b$ is
3, whereas the nodal distance between $b$ and $c$ is 2. Intuitively,
the nodal distance between two leaves in a phylogeny represents the
degree of their relationship in that phylogeny.

Given two phylogenies $P_1$ and $P_2$ both with same set $L$ of
leaves, the {\em nodal distance $D_n(P_1,P_2)$ of two phylogenies}
is calculated as follows:

$$
D_n(P_1,P_2) = \sum_{x,y \in L} |\ii{ND}_{P_1}(x,y) -
\ii{ND}_{P_2}(x,y)| .
$$

\noindent Here the difference of the nodal distances of two leaves
$x$ and $y$ represents the contribution of this pair of leaves to
the distance between the phylogenies.

\begin{proposition}
\label{pro_nodal} Given two phylogenies $P_1$ and $P_2$ with same
set $L$ of leaves and the same leaf-labeling function,
$D_n(P_1,P_2)$ can be computed in $O(|L|^2)$ time.
\end{proposition}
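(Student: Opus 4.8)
The plan is to reduce the computation of $D_n(P_1,P_2)$ to two phases: (i) computing all pairwise nodal distances $\ii{ND}_{P_i}(x,y)$ for $x,y\in L$ in each of the two phylogenies, and (ii) summing the $O(|L|^2)$ absolute differences. The crux is to observe that, because a phylogeny is a \emph{binary} tree whose leaves are exactly the elements of $L$, its underlying graph is sparse: a rooted binary tree with $|L|$ leaves has at most $|L|-1$ internal vertices, so the undirected graph $G$ associated with $P_i$ has $|V|=O(|L|)$ vertices and, being a tree, $|E|=|V|-1=O(|L|)$ edges. This linear bound on the size of $G$ is what makes the quadratic running time achievable.

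First I would fix one phylogeny $P$ and compute $\ii{ND}_P(x,y)$ for every pair of leaves. Since $G$ is a tree, the shortest path between any two vertices is the \emph{unique} path between them, so a single breadth-first (or depth-first) search rooted at a leaf $x$ yields $\ii{ND}_P(x,y)$ for all $y$ simultaneously. Each such traversal runs in time $O(|V|+|E|)=O(|L|)$, and performing it from each of the $|L|$ leaves gives all pairwise nodal distances of $P$ in total time $O(|L|^2)$. Repeating this for both $P_1$ and $P_2$ stays within $O(|L|^2)$ and produces two tables $\ii{ND}_{P_1}$ and $\ii{ND}_{P_2}$, each restricted to $L\times L$.

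Finally I would evaluate $D_n(P_1,P_2)=\sum_{x,y\in L}|\ii{ND}_{P_1}(x,y)-\ii{ND}_{P_2}(x,y)|$ directly from the two precomputed tables. There are $O(|L|^2)$ summands, each a constant-time table lookup and subtraction (distances are bounded by $|V|=O(|L|)$, so they fit in $O(\log|L|)$ bits, handled in constant time under the standard uniform-cost assumption), whence the summation is also $O(|L|^2)$. Adding the two phases gives the claimed $O(|L|^2)$ bound.

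The only real obstacle is the structural bound on $|V|$ and $|E|$: if one overlooked that ``binary'' forces the number of internal nodes to be linear in the number of leaves, one might fear the graph could be large and resort to a needlessly expensive all-pairs routine. I would therefore make the binary-tree vertex count explicit and use it to certify that each traversal—and hence the whole procedure—is linear per source leaf. (A naive Floyd--Warshall over $G$ would cost $O(|V|^3)=O(|L|^3)$, so the per-leaf search is essential to meeting the stated bound.)
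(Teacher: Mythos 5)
Your proof is correct, but it reaches the $O(|L|^2)$ bound by a genuinely different route than the paper. You run a breadth-first (or depth-first) search from each leaf, using the fact that a rooted binary tree on $|L|$ leaves has $O(|L|)$ vertices and edges, so each single-source traversal costs $O(|L|)$ and all $|L|$ of them together cost $O(|L|^2)$; the final summation over $O(|L|^2)$ pairs is then immediate. The paper instead expresses the nodal distance through depths and lowest common ancestors, namely $\ii{ND}_P(x,y) = \ii{depth}_P(x) + \ii{depth}_P(y) - 2\,\ii{depth}_P(\ii{lca}_P(x,y))$, precomputes all depths in $O(|L|)$ time, and recovers $\ii{lca}_P(x,y)$ for every pair during a single post-order traversal (each internal node $v$ is the LCA of exactly the leaf pairs split between the subtrees of its two children), which also totals $O(|L|^2)$. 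Your argument is the more elementary of the two: it needs no LCA machinery and rests only on the sparsity of trees, which you rightly identify as the crux. The paper's argument buys a reusable decomposition (depths plus LCAs) that makes each pairwise distance a constant-time formula once the preprocessing is done, which is the standard idiom in the phylogenetics literature; but asymptotically the two are equivalent, and your per-leaf search cleanly avoids the $O(|L|^3)$ trap of a naive all-pairs routine just as the paper's method does.
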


\begin{figure}[!t]
\begin{center}
\includegraphics[scale=.4]{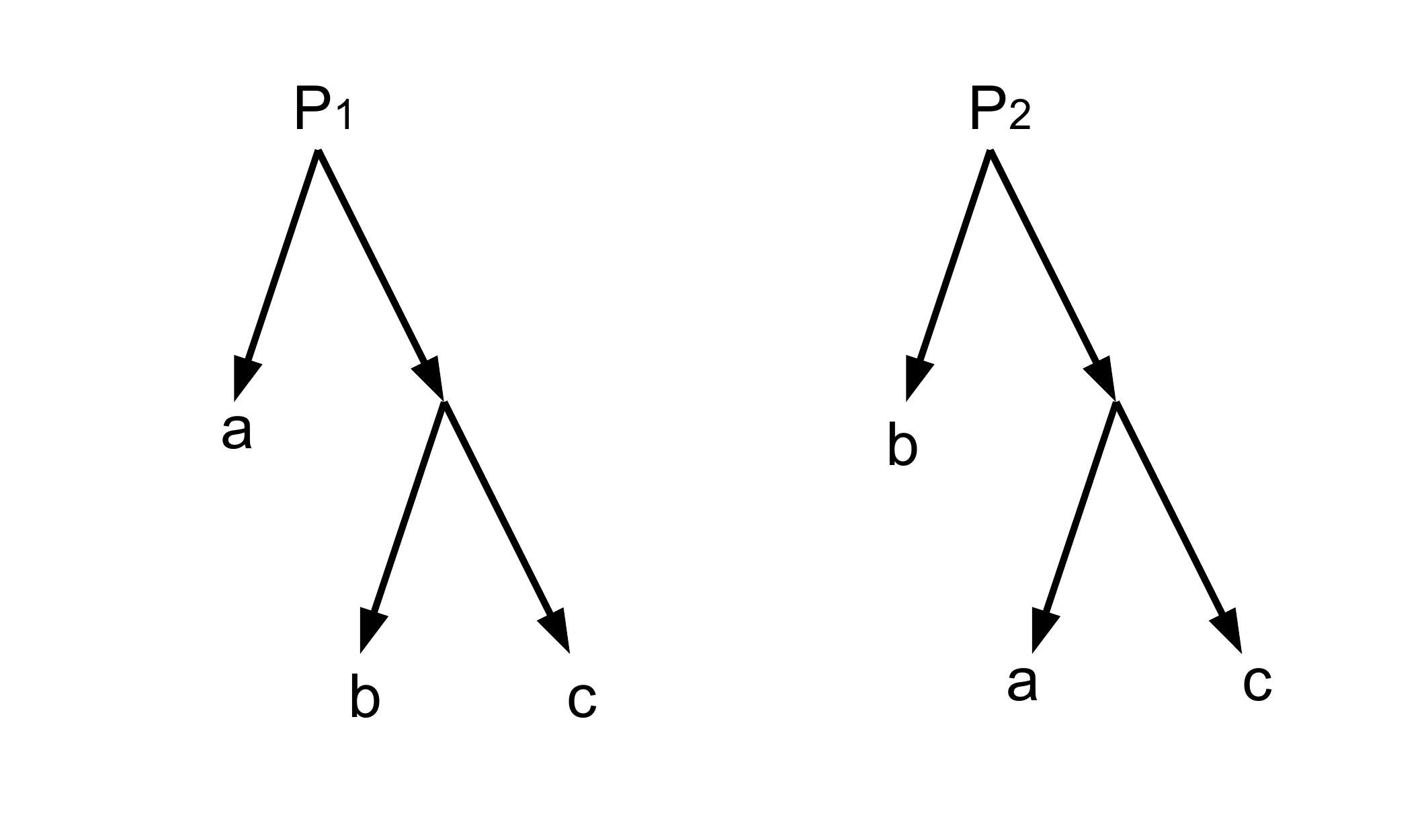} 
\end{center}
\caption{Two phylogenies $P_1 = (a,(b,c))$ and $P_2 = (b,(a,c))$}
\label{fig_twotrees}
\end{figure}

Example~\ref{table:nodaldexample} shows an example of computing the
nodal distance between two phylogenies. In that example, we
suppose that the phylogenies are
presented in the Newick format, where the sister sub-phylogenies are
enclosed by parentheses.  For instance, the first tree, $P_1$, of
Fig.~\ref{fig_twotrees} can be represented in the Newick format as
$(a, (b, c))$.

\begin{example}\label{table:nodaldexample}
In order to compute the nodal distance $D_n(P_1,P_2)$ between the
phylogenies $P_1=(a,(b,c))$ and $P_2=(b,(a,c))$ shown in
Fig.~\ref{fig_twotrees}, we compute the nodal distances of the pairs
of leaves, $\{a,b\}$, $\{a,c\}$ and $\{b,c\}$, and take the sum of
the differences:
\begin{center}
\begin{tabular}{cccc}
Pairs of leaves & Distance in $P_1$ & Distance in $P_2$ &
Difference\\
\hline
\{a,b\} & 3 & 3 & 0\\
\{a,c\} & 3 & 2 & 1\\
\{b,c\} & 2 & 3 & 1\\
\hline
\multicolumn{3}{l}{Total distance} & 2\\
\end{tabular}
\end{center}
In this case the distance between $P_1$ and $P_2$ is 2.
\end{example}

\subsubsection{Descendant Distance of Two Phylogenies}
\label{ssec_CompDescDistMeas}

Nodal distance measure computes the distance between two rooted
binary trees and does not consider the evolutionary relations
between nodes. In that sense, it is a domain-independent distance
measure for comparing phylogenies. A distance measure that takes
into account these relations and might give more accurate results.
Therefore, we define a new distance function based on our
discussions with the historical linguist Don Ringe. In particular,
we take into account the following domain-specific information in
phylogenetics: the similarities of phylogenies towards their roots
are more significant; and thus two phylogenies are more similar if
the diversifications closer to their roots are more similar.

For each vertex $v$ of a tree $T=\langle V,E \rangle$, let us define
the descendants of $x$ as follows:
$$
\ii{desc}_T(v) = \left\{
  \ba{ll}
     \{v\}                       & \textrm{$v$ is a leaf in $V$} \\
     \ii{desc}_T(u) \cup \ii{desc}_T(u') & \textrm{otherwise}\ (v,u),(v,u')
     \in E, u\neq u'
  \ea\right.
$$
and the depth of a vertex $v$ as follows:
$$
\ii{depth}_T(v) = \left\{
  \ba{ll}
     0                              & \textrm{$v$ is the root of $T$} \\
     1 + \ii{depth}_T(u)              & \textrm{otherwise} (u,v) \in E .
  \ea\right.
$$

To define the similarity of two phylogenies $T=\langle V,E\rangle$
and $T'=\langle V',E'\rangle$, let us first define the similarity of
two vertices $v\in V$ and $v'\in V'$:
$$
f(v,v') = \left\{
  \ba{ll}
  1 & \ii{desc}_T(v) \neq \ii{desc}_{T'}(v') \\
  0 & \textrm{otherwise}
  \ea\right.
$$

For every depth $i$ ($0 \le i \le \min\{\max_{v\in V}
\ii{depth}_T(v), \max_{v'\in V'} \ii{depth}_{T'}(v')\}$), let us
also define a weight function $\ii{weight}(i)$ that assigns a number
to each depth $i$. The idea is to assign bigger weights to smaller
depths so that two phylogenies are more similar if the
diversifications closer to the root are more similar. This is
motivated by the fact that reconstructing the evolution of languages
closer to the root is more important for historical linguists.

Now we can define the similarity of two trees $T = \langle
V,E\rangle$ and $T'=\langle V',E'\rangle$, with the roots $R$ and
$R'$ respectively, at depth $i$ ($0 \le i \le \min\{\max_{v\in V}
\ii{depth}(v), \max_{v'\in V'} \ii{depth}(v')\}$), by the following
measure:
$$
\ba{ll}
g(0, T, T') = & \ii{weight}(0) \times f(R,R') \\
g(i, T, T') = & g(i-1, T, T') + \\
& \ii{weight}(i) \times
         \sum_{x\in V, y\in V', \ii{depth}_T(x) = \ii{depth}_{T'}(y) = i}
           f(x,y),\quad i>0
\ea
$$
and the similarity of two trees as follows:
$$
D_l(T,T') = g(\min\{\max_{v\in V} \ii{depth}_T(v), \max_{v'\in V'}
\ii{depth}_{T'}(v')\}, T, T') .
$$

\begin{proposition}
\label{pro_compdesc} Given two trees $P_1$ and $P_2$ with same set
$L$ of leaves and the same leaf-labeling function, $D_l(P_1,P_2)$
can be computed in $O(|L|^3)$ time.
\end{proposition}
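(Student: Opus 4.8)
The plan is to unfold the recursive definition of $g$ into a single weighted double sum and then bound the cost of evaluating it directly. Since both phylogenies are binary trees on the same leaf set $L$, each has $2|L|-1 = O(|L|)$ vertices; write $d_{\min} = \min\{\max_{v\in V}\ii{depth}_T(v),\ \max_{v'\in V'}\ii{depth}_{T'}(v')\}$. Unwinding the recurrence for $g$ telescopes to
$$
D_l(T,T') = \sum_{i=0}^{d_{\min}} \ii{weight}(i) \sum_{x\in V,\, y\in V',\; \ii{depth}_T(x) = \ii{depth}_{T'}(y) = i} f(x,y),
$$
so it suffices to evaluate this expression efficiently; the telescoping incurs no extra cost beyond summing the per-depth contributions, so the recursive wrapper in the definition of $g$ is irrelevant to the running time.

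First I would precompute, for every vertex of each tree, its depth and its descendant leaf set. All depths are obtained by a single traversal from the root in $O(|L|)$ time. For the descendant sets I would represent each $\ii{desc}_T(v) \subseteq L$ as a characteristic bit vector of length $|L|$ (leaves indexed $1,\dots,|L|$) and compute them bottom-up by a post-order traversal, forming at each internal vertex the union of its two children's vectors. Each union costs $O(|L|)$ and there are $O(|L|)$ vertices, so all descendant sets for one tree are obtained in $O(|L|^2)$ time, and likewise for the other tree.

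Then I would evaluate the double sum by iterating over all pairs $(x,y) \in V \times V'$ — there are $O(|L|^2)$ of them — discarding any pair whose depths are unequal or exceed $d_{\min}$, and for each surviving pair computing $f(x,y)$ by a single $O(|L|)$ comparison of the two precomputed bit vectors, accumulating $\ii{weight}(\ii{depth}_T(x)) \cdot f(x,y)$. This phase dominates: $O(|L|^2)$ pairs times $O(|L|)$ per comparison yields $O(|L|^3)$, which also absorbs the $O(|L|^2)$ preprocessing and the $O(|L|)$ depth computation, giving the claimed bound.

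The main obstacle is the per-pair cost of deciding $\ii{desc}_T(x) \ne \ii{desc}_{T'}(y)$: a naive recomputation of descendant sets inside the loop would blow up the running time, so the crux is to precompute every descendant set once and isolate the comparison into the $O(|L|)$ bit-vector test. I would also note that no finer depth-by-depth counting is needed, since the number of same-depth pairs is subsumed by the trivial $O(|L|^2)$ bound on all vertex pairs. (One could even replace each set comparison by an $O(1)$ test after hashing the descendant sets, lowering the total to $O(|L|^2)$, but the straightforward comparison already meets the stated $O(|L|^3)$ goal.)
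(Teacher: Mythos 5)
Your proposal is correct and follows essentially the same route as the paper's proof: precompute the descendant set of every vertex once (the paper budgets $O(v\cdot|L|)$ for this, you give the explicit bottom-up bit-vector construction), then charge $O(|L|)$ per set comparison over the $O(|L|^2)$ vertex pairs, for $O(|L|^3)$ total. Your additional details --- the telescoped closed form for $g$, the explicit depth filter, and the remark that hashing would lower the bound --- are refinements the paper omits but do not change the argument.
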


Example~\ref{table:compdescexample} shows an example of computing the
distance between two trees shown in Fig.~\ref{fig_twotrees}.

\begin{example}\label{table:compdescexample}
In order to compute the descendant distance $D_l(P_1,P_2)$
between the phylogenies $P_1=(a,(b,c))$ and $P_2=(b,(a,c))$ shown in
Fig.~\ref{fig_twotrees}, for each depth level, we multiply the
number of vertices that have different descendants with the weight
of that depth level. Then, we add up the products to find the total
distance between $P_1$ and $P_2$.
\begin{center}
\begin{tabular}{ccc}
Depth & Weight of Depth $i$ & Number of pairs of vertices that\\
&& have different descendant sets\\
\hline
0 & 2 & 0\\
1 & 1 & 4\\
2 & 0 & 3\\
\hline
Distance = & \multicolumn{2}{l}{$2\times 0 + 1\times4 + 0\times3 = 4$} \\
\end{tabular}
\end{center}
The descendant distance between $P_1$ and $P_2$ is 4.
\end{example}

\subsubsection{Distance of a Set of Phylogenies}

In the previous subsections, we defined distance functions for
measuring the distance between two phylogenies. However, the
problems that we defined in Section~\ref{sec_CompProb} requires a
distance function that measures the distance of a set of
phylogenies. We can define the distance of a set of phylogenies
based on the distances among pairwise phylogenies. For instance, the
distance of a set $S$ of phylogenies can be defined as the maximum
distance among any two phylogenies in $S$.

Let $D$ be one of the distance measures defined in the previous
subsection. Then, to be able to find similar phylogenies, the
distance of a set $S$ of phylogenies ($\Delta_D$) is defined as
follows:
$$
\Delta_D(S) = \max\{D(P_1,P_2)\ |\ P_1,P_2 \in S \} .
$$
To be able to find diverse phylogenies, the distance of a set $S$ of
phylogenies ($\Delta_D$) is defined as follows:
$$
\Delta_D(S) = \min\{D(P_1,P_2)\ |\ P_1,P_2 \in S \} .
$$

\subsection{Computing $n$ $k$-Similar/Diverse Phylogenies}
\label{ssec_CompSimiDivePhy}

Analogous to the $n$~$k$-similar (\resp diverse) solutions, we
define the $n$~$k$-similar (\resp diverse) phylogenies as follows:

\begin{itemize}
\item[]
{\sc $n$ $k$-similar phylogenies (\resp{} {\sc $n$ $k$-diverse phylogenies})}\\
Given an ASP program $\cal P$ that formulates a phylogeny
reconstruction problem $P$, a distance measure $\Delta_D$ that maps a
set of phylogenies for $P$ to a nonnegative integer, and two
nonnegative integers $n$ and $k$, decide whether a set $S$ of $n$
phylogenies exists such that $\Delta_D(S) \leq k$ (resp. $\Delta_D(S) \geq k$).
\end{itemize}

Recall that in order to compute $n$~$k$-similar (\resp diverse)
solutions we need an ASP program that computes a solution and a
distance measure. We consider the ASP program {\tt
phylogeny-improved.lp} described in \cite{bro07} as our main program
that computes a phylogeny; this program is shown Fig.s~\ref{fig:solve1} and~\ref{fig:solve2}
in~\ref{sec:pgms}. We represent the nodal distance $D_n$
(resp. the descendant $D_l$) of two phylogenies as the ASP program
in Fig.~\ref{fig:distance1} (resp. Figs.~\ref{fig:distance2-1} and
\ref{fig:distance2-2}) in~\ref{sec:pgms}. In addition, we
consider the program in Fig.~\ref{fig:constraint} that computes the
total distance of a set of solutions with $\Delta_D$ and eliminates
the ones whose total distance is greater than $k$.

For Offline Method, we compute all the phylogenies using { \tt
phylogeny-improved.lp}. Then we build a graph of phylogenies as in
Subsection~\ref{ssec_OfflMeth}. Then, we use the ASP program in
Fig.~\ref{fig:clique} in~\ref{sec:pgms} to find a clique of
size $n$ in the constructed graph. This clique corresponds to $n$
$k$-similar phylogenies.

For Online Method~1, we reformulate the main program {\tt
phylogeny-improved.lp} to obtain a program that computes
$n$~distinct phylogenies as in Section~\ref{ssec_OnliMeth1}. The
reformulation is shown in Fig.s~\ref{fig:solveN1}--\ref{fig:solveN3}
in~\ref{sec:pgms}.

For Online Method~3, we define a heuristic function to estimate a
low bound for the distance between any completion of a given partial
phylogeny and a complete phylogeny.

Let $P_c$ be any complete phylogeny, $P_p$ be any partial phylogeny
and $L_p$ be the set of pairs of leaves that appear in $P_p$.
Consider the nodal distance (Section~\ref{ssec_DiffNodaDist}) for
comparing two phylogenies. Then we can define a lower bound as
follows:
$$
{\cal LB}_n(P_p,P_c) = \sum_{x,y \in L_p} |\ii{ND}_{P_c}(x,y) -
\ii{ND}_{P_p}(x,y)| .
$$

\noindent This lower bound does not overestimate the distance
between a phylogeny and any completion of a partial phylogeny.
\begin{proposition}
\label{pro:lbn}
Given a partial phylogeny $P_p$ and a complete
phylogeny $P_c$, ${\cal LB}_n(P_p,P_c)$ is admissible, i.e., for every
completion $P$ of $P_p$,
$$
{\cal LB}_n(P_p,P_c) \leq D_n(P,P_c).
$$
\end{proposition}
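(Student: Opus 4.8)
The plan is to decompose the full nodal distance $D_n(P,P_c)$ into the contribution of those leaf pairs whose connecting path is already fixed in the partial phylogeny $P_p$ (the set $L_p$) and the contribution of the remaining pairs, and then to show that the former equals ${\cal LB}_n(P_p,P_c)$ while the latter is nonnegative. The key observation driving the whole argument is an invariance property: for every pair $\{x,y\}\in L_p$ and every completion $P$ of $P_p$, the nodal distance is unchanged, i.e., $\ii{ND}_P(x,y)=\ii{ND}_{P_p}(x,y)$. Here I read $L_p$ as the set of leaf pairs whose entire connecting path is already determined in $P_p$, so that $\ii{ND}_{P_p}(x,y)$ is well defined for them.

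First I would establish this invariance claim. Since a phylogeny is a tree, there is a unique simple path between any two leaves, and its length is precisely the nodal distance. Completing $P_p$ to $P$ only commits previously undecided edges and never removes or reroutes edges already present in $P_p$; hence for a pair $\{x,y\}\in L_p$ the path realizing $\ii{ND}_{P_p}(x,y)$ persists in $P$ and remains the unique $x$--$y$ path there. Its length is therefore unchanged, giving $\ii{ND}_P(x,y)=\ii{ND}_{P_p}(x,y)$.

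With the invariance in hand, the remainder is a routine split of the defining sum. Writing
$$
D_n(P,P_c)=\sum_{\{x,y\}\in L_p}|\ii{ND}_P(x,y)-\ii{ND}_{P_c}(x,y)|+\sum_{\{x,y\}\notin L_p}|\ii{ND}_P(x,y)-\ii{ND}_{P_c}(x,y)|,
$$
where the second sum ranges over the leaf pairs of $L$ not in $L_p$, I would substitute $\ii{ND}_P(x,y)=\ii{ND}_{P_p}(x,y)$ in the first sum, turning it exactly into ${\cal LB}_n(P_p,P_c)$, and then discard the second sum, which is a sum of absolute values and hence nonnegative. This yields $D_n(P,P_c)\geq{\cal LB}_n(P_p,P_c)$, the admissibility claim.

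The main obstacle is making the invariance claim precise, since it rests on the exact meaning of ``partial phylogeny,'' of the set $L_p$, and of a ``completion'' in the encoding of {\tt phylogeny-improved.lp}. The argument goes through cleanly as long as completion is monotone in the sense that it only adds committed structure and never alters an already-fixed path between two leaves; I would verify that the edge-defining atoms of that encoding indeed guarantee this monotonicity, so that any path fixed in $P_p$ survives intact in every completion $P$.
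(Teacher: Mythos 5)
Your proposal is correct and follows essentially the same route as the paper's proof: decompose $D_n(P,P_c)$ into the pairs in $L_p$ and the remaining pairs, use the invariance $\ii{ND}_P(x,y)=\ii{ND}_{P_p}(x,y)$ for pairs in $L_p$ to identify the first part with ${\cal LB}_n(P_p,P_c)$, and discard the nonnegative remainder. The only differences are cosmetic --- the paper first (and unnecessarily) restricts attention to the completion minimizing $D_n(\cdot,P_c)$, while you argue directly for an arbitrary completion and are somewhat more explicit about why the invariance holds.
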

Similarly, we can define an upper bound for the differences of nodal
distances measure as follows:
$$
{\cal UB}_n(P_p, P_c) = \sum_{x,y \in L_P}
|ND_{P_{c}}(x,y) - ND_{P_{p}}(x,y)| + ({l \choose 2} - {|L_p| \choose 2}) \times l.
$$
\noindent
where
$l$ denotes the number of leaves in the complete tree.

This upper bound does not underestimate the distance between a
phylogeny and any completion of a partial phylogeny.

\begin{proposition}
\label{pro:ubn}
Given a partial phylogeny $P_p$ and a complete
phylogeny $P_c$, ${\cal UB}_n(P_p,P_c)$ is admissible, i.e.,
for every completion $P$ of $P_p$, ${\cal UB}_n(P_p,P_c) \geq D_n(P,P_c).$
\end{proposition}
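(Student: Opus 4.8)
The plan is to prove the bound pair by pair, splitting the $\binom{l}{2}$ unordered pairs of leaves into those already realized in the partial phylogeny and the remaining ones. Writing $D_n(P,P_c)=\sum_{x,y\in L}|\ii{ND}_P(x,y)-\ii{ND}_{P_c}(x,y)|$, I would partition the index set of this sum into the pairs $\{x,y\}$ with both endpoints in $L_p$ and the pairs with at least one endpoint outside $L_p$. For the first group I would invoke the same structural fact that underlies Proposition~\ref{pro:lbn}: since $P$ is a completion of $P_p$, the nodal distance between two leaves that are already placed in $P_p$ is unaffected by completing the remaining structure, so $\ii{ND}_P(x,y)=\ii{ND}_{P_p}(x,y)$ whenever $x,y\in L_p$. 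Consequently the first group contributes exactly $\sum_{x,y\in L_p}|\ii{ND}_{P_p}(x,y)-\ii{ND}_{P_c}(x,y)|$, which is precisely the first summand of ${\cal UB}_n(P_p,P_c)$.

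The crux is to bound the contribution of the second group by $l$ per pair, and for this I would establish the auxiliary claim that in any rooted binary tree with $l$ leaves the nodal distance between any two leaves is at most $l$. Using $\ii{ND}(x,y)=\ii{depth}(x)+\ii{depth}(y)-2\,\ii{depth}(\mathrm{lca}(x,y))$, it suffices to work inside the subtree $T_v$ rooted at $v=\mathrm{lca}(x,y)$: if $T_v$ has $m\le l$ leaves and splits into child subtrees with $m_L$ and $m_R$ leaves, then $x$ and $y$ lie in opposite children, and since a rooted binary tree with $j$ leaves has depth at most $j-1$, the distance from $v$ to $x$ is at most $m_L$ and from $v$ to $y$ at most $m_R$; hence their nodal distance is at most $m_L+m_R=m\le l$. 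Because both $\ii{ND}_P(x,y)$ and $\ii{ND}_{P_c}(x,y)$ are nonnegative and bounded by $l$, we obtain $|\ii{ND}_P(x,y)-\ii{ND}_{P_c}(x,y)|\le\max\{\ii{ND}_P(x,y),\ii{ND}_{P_c}(x,y)\}\le l$ for every pair.

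It then remains to count the second group: the total number of unordered leaf pairs is ${l\choose2}$ and those already present number ${|L_p|\choose2}$, so the second group consists of exactly ${l\choose2}-{|L_p|\choose2}$ pairs, each contributing at most $l$. Adding the two groups yields $D_n(P,P_c)\le\sum_{x,y\in L_p}|\ii{ND}_{P_p}(x,y)-\ii{ND}_{P_c}(x,y)|+\big({l\choose2}-{|L_p|\choose2}\big)\times l={\cal UB}_n(P_p,P_c)$, and since $P$ was an arbitrary completion of $P_p$ the bound is admissible. The main obstacle I anticipate is the auxiliary claim that the nodal distance is bounded by the number of leaves; the counting of missing pairs and the reduction on already-placed pairs are routine, and the distance-preservation property for placed pairs is exactly the ingredient already used for the lower bound in Proposition~\ref{pro:lbn}.
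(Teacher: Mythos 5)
Your proof is correct and follows essentially the same route as the paper's: split the pairs into those with both leaves in $L_p$ (where the nodal distance is preserved under completion, so these terms match the first summand of ${\cal UB}_n$) and the remaining ${l\choose 2}-{|L_p|\choose 2}$ pairs, each bounded by $l$. The only differences are cosmetic: you argue directly for an arbitrary completion rather than passing through a worst-case completion as the paper does, and you actually prove the auxiliary claim that the nodal distance in a rooted binary tree with $l$ leaves is at most $l$ (via the lca/depth decomposition), which the paper merely asserts.
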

As regards the comparison of descendants distance measure, we could
not find a tight lower and upper bounds.  In our experiments, we
consider that the lower bound (\resp upper bound) between a complete
phylogeny and any completion of a partial phylogeny is 0
(\resp~$\infty$).

\subsection{Experimental Results for Phylogeny Reconstruction}
\label{ssec_ExpeResu}

We applied the offline method and the online methods described in
Section~\ref{ssec_compsimisolu} to reconstruct similar/diverse
phylogenies for Indo-European languages. We used the dataset
assembled by Don Ringe and Ann Taylor \cite{rin02}. As in
\cite{bro07}, to compute similar/diverse phylogenies, we considered
the language groups Balto-Slavic (BS), Italo-Celtic (IC),
Greco-Armenian (GA), Anatolian (AN), Tocharian (TO), Indo-Iranian
(IIR), Germanic (GE), and the language Albanian (AL). While
computing phylogenies, we also took into account some
domain-specific information about these languages.

In our experiments, we considered the distance measures described in
Section \ref{ssec_DistMeasPhyl} as in Section
\ref{ssec_CompSimiDivePhy}.

Below all CPU times are in seconds, for a workstation with a 1.5GHz
Xeon processor and 4x512MB RAM, running Red Hat Enterprise Linux
(Version 4.3).

\paragraph{Experiments with the Nodal Distance}
Let us first examine the results of experiments, considering the
distance measure $\Delta_n$, based on the nodal distance (Table
\ref{table:nodal}).
We present the results for the following computations: 2 most
similar solutions, 2 most diverse solutions, 3 most similar
solutions, 3 most diverse solutions, 6 most similar solutions. We
solve these optimization problems by iteratively solving the
corresponding decision problems ($n$ $k$-{\sc similar/diverse
solution}). For each method, we present the computation time, the
size of the memory used in computation, and the optimal value of
$k$.

Let us first compare the online methods. In terms of both
computation time and memory size, Online Method 3 performs the best,
and Online Method 2 performs better than Online Method 1. These
results conforms with our expectations. Online Method 1 takes as
input an ASP representation of computing $n$ $k$-similar/diverse
phylogenies, which is almost $n$ times as large  as the ASP program
describing the phylogeny reconstruction problem used in other
methods. Therefore, its computational performance may not be as good
as the other online methods. Online Method 2 relaxes this
requirement a little bit so that the answer set solver can compute
the solutions more efficiently: it takes as input an ASP
representation of phylogeny reconstruction, and an ASP
representation of the distance measure, and then computes
similar/diverse solutions one at a time. However, since the answer
set solver needs to compute the distances between every two
solutions, the computation time and the size of memory do not
decrease much, compared to those for Online Method 1. Online Method
3 deals with the time consuming computation of distances between
solutions, not at the representation level but at the search level.
In that sense, its computational performance is better than Online
Method 2.

The offline method takes into account the previously computed 8
phylogenies for Indo-European languages (with at most 17
incompatible characters), and computes similar/diverse solutions
using ASP as explained in Section \ref{sec_CompProb}. The offline
method is more efficient, in terms of both computation time and
memory, than Online Methods 1 and 2 since it does not compute
phylogenies. On the other hand, the offline method is less
efficient, in terms of both computation time and memory, than Online
Method 3, since it requires both representation and computation of
distances between solutions.

Here both the offline method and Online Method 1 guarantee finding
optimal solutions by iteratively solving the corresponding decision
problems. On the other hand, Online Methods 2 and 3 compute
similar/diverse solutions with respect to the first computed
solution, and thus may not find the optimal value for $k$, as
observed in the computation of 3 most similar phylogenies.

\begin{table}[t!]
\caption{Computing similar/diverse phylogenies using the nodal distance $\Delta_n$.} \label{table:nodal}
\begin{center}
\begin{tabular}{c||c||c|c|c}
Problem& Offline Method & \multicolumn{3}{c}{Online Methods}\\
 & & Reformulation & Iterative Comp. & Incremental Comp.\\
 & & ({\sc clasp}) & ({\sc clasp}, perl) & ({\sc clasp-nk})\\
\hline
2 most similar ~ & 12.39 sec. &  26.23 sec. & 19.00 sec.  & 1.46 sec.\\
($k=12$)               & 32MB       &  430MB      & 410MB       & 12MB    \\
               & $k=12$      &  $k=12$     & $k=12$     & $k=12$  \\
\hline
2 most diverse ~ & 11.81 sec. &  21.75 sec.   &  18.41 sec.    & 1.01 sec.       \\
($k=32$)              & 32MB       &  430MB        &  410MB         & 15MB     \\
              & $k=32$     &   $k=32$      &  $k=24$        & $k=32$        \\
\hline
3 most similar ~& 11.59 sec. &   60.20 sec.&   43.56 sec.   & 1.56 sec.    \\
($k=15$)              & 32MB       &   730MB     &   626MB         & 15MB  \\
              & $k=15$     &   $k=15$    &   $k=15$        & $k=16$      \\
\hline
3 most diverse ~& 11.91sec.  &   46.32 sec.&    44.67 sec.   & 0.96 sec.     \\
($k=26$)              & 32MB       &   730MB     &    626MB       &  15MB   \\
              & $k=26$     &   $k=26$    &    $k=21$      &  $k=26$      \\
\hline
6 most similar ~& 11.66sec. &   327.28 sec.  & 178.96 sec.    & 1.96 sec.   \\
($k=25$)             & 32MB       &     1.8GB   &  1.2GB          & 15MB             \\
              &  $k=25$  &  $k=25$ & $k=29$       &  $k=25$          \\
\end{tabular}
\end{center}
\end{table}

\paragraph{Experiments with the Nodal Distance}
Now, let us consider the distance measures $\Delta_l$, based on
preference over diversifications (Table \ref{table:CompDesc}): two
phylogenies are more similar if the diversifications closer to the
root are more similar. Here we consider the similarities of
diversifications until depth 3 (inclusive). We present the results
for the following computations: 2 most similar solutions, 3 most
diverse solutions, 6 most similar solutions.  In Table
\ref{table:CompDesc}, for each method, we present the computation
time, the size of the memory used in computation, and the optimal
value of $k$.
Unlike what we have observed in Table \ref{table:nodal}, the offline
method takes more time/space to compute similar/diverse solutions;
this is due to the computation of distances with respect to
$\Delta_l$ which requires summations.
Other results are similar to the ones presented in Table
\ref{table:nodal}.

\paragraph{Accuracy}
Let us compare the phylogenies computed by different distance
measures in terms of accuracy. In \cite{bro07}, after computing all
34 plausible phylogenies, the authors examine them manually and come
up with three forms of tree structures, and then ``filter'' the
phylogenies with respect to these tree structures. For instance, in
Group~1, the trees are of the form

\begin{center}
(AN, (TO, (AL, (IC, (a tree formed for GE, GA, BS, IIR)))));
\end{center}

\noindent in Group 2, the trees are of the form

\begin{center}
(AN, (TO, (IC, (a tree formed for GE, GA, BS, IIR, AL))));
\end{center}

\noindent in Group 3, the trees are of the form

\begin{center}
(AN, (TO, ((AL, IC), (a tree formed for GE, GA, BS, IIR)))).
\end{center}

\noindent The results of our experiments with the distance measure
$\Delta_l$ comply with these groupings. For instance, the 2 most
similar phylogenies computed by Online Method 1 are in Group~1;

\begin{center}
(AN, (TO, (IC, ((GE, AL), (GA, (IIR, BS)))))),\\
(AN, (TO, (IC, ((GE, AL), (BS, (IIR, GA)))))),
\end{center}

\noindent Phylogenies 7 and 8 of \cite{bro07}; both are in Group~1.
The 3 most diverse phylogenies computed by Online Method 2 are

\begin{center}
(AN, (TO, (IC, (AL, (GE, (GA, (IIR, BS))))))), \\
(AN, (TO, (AL, (IC, (GE, (GA, (IIR, BS))))))), \\
(AN, (TO, ((GE, (GA, (IIR, BS))), (AL, IC)))),
\end{center}

\noindent Phylogenies 10, 1, 40 of \cite{bro07}; all in different
groups. Likewise, the 6 similar phylogenies computed by our methods
fall into Group~2.

\bigskip
These results (in terms of computational efficiency and accuracy)
show the effectiveness of our methods in phylogeny reconstruction:
we can automatically compare many phylogenies in detail.

\begin{table}[t!]
\caption{Computing similar/diverse phylogenies using the descendant distance $\Delta_l$.}
\label{table:CompDesc}
\begin{center}
\begin{tabular}{c||c||c|c|c}
Problem& Offline Method & \multicolumn{3}{c}{Online Methods}\\
 & & Reformulation & Iterative Comp. & Incremental Comp.\\
 & & ({\sc clasp}) & ({\sc clasp}, perl) & ({\sc clasp-nk})\\
\hline
2 most similar ~ & 365.16 sec. &  16.11 sec.& 16.23 sec.             & 0.635 sec.\\
    $(k=18)$           & 4.2GB             &  236MB            & 212MB     & 22MB  \\
               & $k=18$             &$k=18$             &$k=18$     & $k=18$ \\
\hline
3 most diverse ~ &  368.59 sec.& 46.11 sec.& 44.21 sec.            & 1.014 sec.  \\
$(k=20)$               & 4.2GB             &  659MB            & 430MB    & 22MB \\
               & $k=20$             &$k=20$             &$k=20$    & $k=20$ \\
\hline
6 most similar ~ & 368.45 sec.        & 137.31 sec. & 212.59 sec. & 0.685 sec.    \\
$(k=18)$            & 4.2GB      & 1.8GB       &  1.1GB                          & 22MB   \\
                & $k=18$ & $k=18$ & $k=18$                                              & $k=20$ \\
\end{tabular}
\end{center}
\end{table}


\section{Computing Similar/Diverse Plans}
\label{sec_CompSimiDivePlan}

In order to show the applicability and effectiveness of our methods
to other domains, we extend our experiments further with the Blocks
World planning problems.

In these experiments, we study the following instance
of {\sc $n$ $k$-similar solutions} (\resp{} {\sc $n$ $k$-diverse solutions}):

\begin{itemize}
\item[]
{\sc $n$ $k$-similar plans (\resp{} {\sc $n$ $k$-diverse plans})}\\
Given an ASP program $\cal P$ that formulates a planning problem $P$,
a distance measure $\Delta_h$ that maps a set of
plans for $P$ to a nonnegative integer, and two nonnegative
integers $n$ and $k$, decide whether a set $S$ of $n$ plans for
$P$ exists such that $\Delta_h(S) \leq k$ (\resp $\Delta_h(S) \geq k$).
\end{itemize}

We take $\cal P$ as the ASP
formulation of the non-concurrent Blocks World as in~\cite{Erdem-Theory-2002}
to compute a plan of length at most $l$ (Fig.~\ref{fig:bw} in \ref{sec:pgms}),
together with an ASP description of the planning problem instance shown in
Fig.~\ref{fig_blocks}.

We define the distance $\Delta_h(S)$ of a set $S$ of plans as follows:
$$
\Delta_h(S) = \max\{D_h(P_1,P_2)\ |\ P_1,P_2 \in S, |P_1| \leq |P_2|\}
$$
based on the action-based Hamming distance $D_h$ defined
in~\cite{SrivastavaNguyenEtAl-Domain-2007} to measure the distance
between two plans. Intuitively, $D_h(P_1,P_2)$ is the number of
differentiating actions in each time step of two plans $P_1$ and
$P_2$. More precisely: let us denote a plan $X$ of length $l$ by a
function $\ii{act}_X$ that maps every nonnegative integer $i$
($1\leq i \leq l$) to the $i$'th action of the plan $X$, and let us
denote by $|X|$ the length of a plan $X$; then the
Hamming Distance $D_h(P_1,P_2)$ between two plans $P_1$ and $P_2$ such that
$|P_1| \leq |P_2|$ can be defined as follows:
$$
D_h(P_1,P_2) =
|\{i\ |\ \ii{act}_{P_1}(i) \neq \ii{act}_{P_2}(i), 1 \leq i \leq |P_1| \}|
+ |P_2| - |P_1|
$$
ASP formulations of the distance functions $D_h$ and $\Delta_h(S)$ are
presented in Fig.s~\ref{fig:distance-h2} and~\ref{fig:distance-plans}
in~\ref{sec:pgms}.

Consider, for instance, a planning problem in the Blocks World that asks for a plan of
length less than or equal to 7. Consider two plans, $P_1$ and $P_2$, that are characterized
by the functions $\ii{act}_{P_1}$ and $\ii{act}_{P_2}$ respectively, as follows:
$$
\ba{ll}
\ii{act}_{P_{1}}(1) = \ii{moveop}(b_1,\ii{Table}) & \ii{act}_{P_{1}}(2) = \ii{moveop}(b_2,b_1)\\
\ii{act}_{P_{1}}(3) = \ii{moveop}(b_4,\ii{Table}) & \ii{act}_{P_{1}}(4) = \ii{moveop}(b_3,b_2)\\
\ii{act}_{P_{1}}(5) = \ii{moveop}(b_4,b_3) & \ii{act}_{P_{1}}(6) = \ii{moveop}(b_5,b_4)\\
\\
\ii{act}_{P_{2}}(1) = \ii{moveop}(b_1,\ii{Table}) & \ii{act}_{P_{2}}(2) = \ii{moveop}(b_2,b_1)\\
\ii{act}_{P_{2}}(3) = \ii{moveop}(b_4,b_5) & \ii{act}_{P_{2}}(4) = \ii{moveop}(b_3,b_2)\\
\ii{act}_{P_{2}}(5) = \ii{moveop}(b_4,\ii{Table}) & \ii{act}_{P_{2}}(6) = \ii{moveop}(b_4,b_3)\\
\ii{act}_{P_{2}}(7) = \ii{moveop}(b_5,b_4)
\ea
$$

\noindent
The distance $D_h(P_1,P_2)$ between $P_1$ and $P_2$ is 4 since the actions at time steps
3, 5 and 6 are different and $P_2$ has an additional action (at time step 7).

To be able to apply our Online Method 3 with \claspnk\ to compute
$n$ $k$-similar plans of length at most $l$, we define a heuristic
function ${\cal LB}_h$ to estimate a lower bound for the distance
between a plan $P_c$ and any plan-completion of a ``partial'' plan
$P_p$. Intuitively, a partial plan consists of parts of a plan. Let
us characterize a partial plan $P_p$ by a partial function
$\ii{act}_{P_p}$ from $\{1,...,l\}$ to the set of actions; that is,
$\ii{act}_{P_p}$ is a function from a subset of $\{1,...,l\}$
to the set of actions. A {\em plan-completion of a partial plan} $P_p$ is a plan~$Y$
of length $l'$ ($l'\leq l$) for the planning problem $P$ such that
$\ii{act}_{Y}$ is an extension of $\ii{act}_{P_p}$ to $\{1,...,l'\}$.
Then we can define ${\cal LB}_h(P_p,P_c)$
for a partial plan $P_p$ and a plan $P_c$ as follows:
$$
\ba{ll}
{\cal LB}_h(P_p,P_c) =&
|\{i\ |\ \ii{act}_{P_p}(i) \neq \ii{act}_{P_c}(i), i\in \dom \ii{act}_{P_p}, 1\leq i \leq |P_c|\}|\\
& + \ |\{i\ |\ i\in \dom \ii{act}_{P_p}, |P_c| < i \leq l\}|\\
\ea
$$

In the Blocks World example above, consider a partial plan $P_p$
characterized by the function $\ii{act}_{P_p}$ as follows:
$$
\ba{ll}
\ii{act}_{P_{p}}(2) = \ii{moveop}(b_2,b_1) & \ii{act}_{P_{p}}(4) = \ii{moveop}(b_3,b_2)\\
\ii{act}_{P_{p}}(5) = \ii{moveop}(b_4,\ii{Table}) & \ii{act}_{P_{p}}(7) = \ii{moveop}(b_5,b_4)
\ea
$$

\noindent
The lower bound ${\cal LB}_h(P_p, P_1)$  for the distance between any completion of $P_p$ and $P_1$
is computed as follows:
$$
\ba{lcl}
{\cal LB}_h(P_p, P_1) & = & |\{i\ |\ \ii{act}_{P_{p}}(i) \neq \ii{act}_{P_{1}}(i), i \in \dom \ \ii{act}_{P_{p}} , 1 \leq i \leq 6\}|\\
& & + \ |\{i\ |\ i \in \dom \ \ii{act}_{P_{p}}, 6 < i \leq 7\}|\\
& = & |\{5\}| + |\{7\}| = 2 .
\ea
$$
One completion of $P_p$ is $P_2$. Note that  ${\cal LB}_h(P_p, P_1) \leq  D_h(P_1, P_2)$. Indeed,
the following proposition expresses that ${\cal LB}_h$ does not
overestimate the distance between $P_c$ and any plan-completion $X$ of $P_p$.

\begin{proposition}
\label{planning_lb}
For a partial plan $P_p$ and a plan $P_c$ for the planning problem $P$,
${\cal LB}_h(P_p,P_c)$ is admissible.
\end{proposition}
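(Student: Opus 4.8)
The plan is to show that for every plan-completion $X$ of $P_p$ the inequality ${\cal LB}_h(P_p,P_c) \leq D_h(P_c,X)$ holds, which is exactly what admissibility of the lower bound amounts to (cf.\ the analogous Proposition~\ref{pro:lbn}). First I would fix an arbitrary plan-completion $X$ of $P_p$, say of length $l'$, and write $m = |P_c|$. The one fact I would exploit throughout is that, by definition of a plan-completion, $\ii{act}_X$ extends $\ii{act}_{P_p}$; hence $\dom \ii{act}_{P_p} \subseteq \{1,\dots,l'\}$ and $\ii{act}_X(i) = \ii{act}_{P_p}(i)$ for every $i \in \dom \ii{act}_{P_p}$. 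This lets me replace $\ii{act}_{P_p}$ by $\ii{act}_X$ in the first summand of ${\cal LB}_h$ without changing its value.

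Next I would split ${\cal LB}_h(P_p,P_c)$ into its two summands --- call them $A$ (the positions in $\dom \ii{act}_{P_p}$ that are $\le m$ and on which $P_p$ and $P_c$ disagree) and $B$ (the positions in $\dom \ii{act}_{P_p}$ lying strictly above $m$) --- and bound each against the two corresponding summands of $D_h(P_c,X)$. The argument naturally branches on whether $m \le l'$ or $m > l'$, since the definition of $D_h$ treats the shorter plan as its first argument.

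In the case $m \le l'$, the overlap term of $D_h(P_c,X)$ counts all $i \in \{1,\dots,m\}$ with $\ii{act}_{P_c}(i) \neq \ii{act}_X(i)$; since $A$ counts only such positions that additionally lie in $\dom \ii{act}_{P_p}$, we get that $A$ is bounded by this overlap term. For $B$, every $i \in \dom \ii{act}_{P_p}$ with $i > m$ also satisfies $i \le l'$, so $B$ counts a subset of $\{m+1,\dots,l'\}$ and is therefore bounded by the length-difference term $l'-m$ of $D_h$; adding the two bounds yields the claim. In the case $m > l'$ the term $B$ vanishes, because no domain position can exceed $l'$ and hence none exceeds $m$, while $A$ counts only disagreements at positions $\le l'$ and is thus bounded by the overlap term of $D_h$ (now taken over $\{1,\dots,l'\}$), which again suffices.

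The routine parts are the two counting bounds; the only point that needs care --- and the main (mild) obstacle --- is the case distinction on the relative lengths of $P_c$ and $X$ together with the bookkeeping that the positions of the partial plan lying beyond $|P_c|$ are exactly absorbed by the length-difference term of the Hamming distance. Once the containment $\dom \ii{act}_{P_p} \subseteq \{1,\dots,l'\}$ is used to confine those positions to the interval $(m,l']$, both bounds follow immediately.
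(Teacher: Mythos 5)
Your proof is correct and follows essentially the same route as the paper's: a case split on whether $|X|\le|P_c|$ or $|X|>|P_c|$, bounding the first summand of ${\cal LB}_h$ by the Hamming overlap term via the fact that $\ii{act}_X$ extends $\ii{act}_{P_p}$, and the second summand by the length-difference term using $\dom\ii{act}_{P_p}\subseteq\{1,\dots,|X|\}$. Your explicit use of that domain containment to confine the "beyond $|P_c|$" positions to $(|P_c|,|X|]$ is exactly the paper's argument (stated there slightly more loosely), so no gap.
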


Similarly, to be able to apply our Online Method 3 with \claspnk\ to
compute $n$ $k$-diverse plans of length at most $l$, we define a heuristic
function ${\cal UB}_h(P_p,P_c)$ to estimate an upper bound for the distance between
a plan $P_c$ and any plan-completion of $P_p$:
$$
{\cal UB}_h(P_p,P_c) =
l - |\{i\ |\ \ii{act}_{P_p}(i) = \ii{act}_{P_c}(i), i\in \dom \ii{act}_{P_p}, 1\leq i \leq |P_c|\}| .
$$
For instance, for the partial plan $P_p$ and $P_1$ above,
$$
\ba{ll}
{\cal UB}_h(P_p, P_1) & = 7 - |\{i\ |\ \ii{act}_{P_{p}}(i) = \ii{act}_{P_{c}}(i), i \in \dom  \ii{act}_{P_{p}} , 1 \leq i \leq 6\}|\\
& =  7 - |\{2,4\}| = 7-2 = 5
\ea
$$
and ${\cal UB}_h(P_p, P_1) \geq D_h(P_1,P_2)$. Indeed,
the following proposition expresses that this upper bound function does not
underestimate the distance between $P_c$ and any plan-completion $X$ of $P_p$.
\begin{proposition}
\label{planning_ub}
For a partial plan $P_p$ and a plan $P_c$ for the planning problem $P$,
${\cal UB}_h(P_p,P_c)$ is admissible.
\end{proposition}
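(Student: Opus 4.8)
The plan is to prove admissibility of the upper bound by showing that for \emph{every} plan-completion $X$ of the partial plan $P_p$ we have ${\cal UB}_h(P_p,P_c) \geq D_h(P_c,X)$, which is exactly the assertion that ${\cal UB}_h$ does not underestimate the true distance. So I would fix an arbitrary completion $X$, of some length $l' \leq l$, and recall from the definition of plan-completion that $\dom \ii{act}_{P_p} \subseteq \{1,\dots,l'\}$ and that $\ii{act}_X$ restricted to $\dom \ii{act}_{P_p}$ coincides with $\ii{act}_{P_p}$.

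The first key step is to rewrite the action-based Hamming distance in a uniform, order-independent form. Extending each action function to $\{1,\dots,l\}$ by a dummy ``null'' value outside its actual range (a value chosen to differ from every genuine action), the definition of $D_h$ becomes simply the number of indices at which the two extended functions disagree; that is, $D_h(P_c,X) = |\mathit{Diff}|$, where $\mathit{Diff}$ is the set of indices $i$ with $1 \leq i \leq \max(|P_c|,l')$ at which the extensions of $\ii{act}_{P_c}$ and $\ii{act}_X$ differ. Since $|P_c| \leq l$ and $l' \leq l$, the set $\mathit{Diff}$ is a subset of $\{1,\dots,l\}$. On the other side, ${\cal UB}_h(P_p,P_c) = l - |\mathit{Agree}|$, where $\mathit{Agree} = \{i : i \in \dom \ii{act}_{P_p},\ 1 \leq i \leq |P_c|,\ \ii{act}_{P_p}(i) = \ii{act}_{P_c}(i)\}$ is likewise a subset of $\{1,\dots,l\}$.

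The crux is a disjointness claim: $\mathit{Agree} \cap \mathit{Diff} = \emptyset$. If $i \in \mathit{Agree}$, then $i \in \dom \ii{act}_{P_p}$, so because $X$ extends $P_p$ we get $\ii{act}_X(i) = \ii{act}_{P_p}(i) = \ii{act}_{P_c}(i)$; moreover $i \leq |P_c|$ and $i \leq l'$ (the latter because $i \in \dom \ii{act}_{P_p} \subseteq \{1,\dots,l'\}$), so at index $i$ both extended functions take genuine, equal values, whence $i \notin \mathit{Diff}$. Since $\mathit{Agree}$ and $\mathit{Diff}$ are disjoint subsets of $\{1,\dots,l\}$, we conclude $|\mathit{Agree}| + |\mathit{Diff}| \leq l$, i.e. $D_h(P_c,X) = |\mathit{Diff}| \leq l - |\mathit{Agree}| = {\cal UB}_h(P_p,P_c)$. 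As $X$ was arbitrary, admissibility follows.

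I expect the only real obstacle to be bookkeeping rather than insight: I must justify the padded reformulation of $D_h$ carefully, checking that it agrees with the asymmetric ``shorter-plan mismatches plus length-difference'' definition regardless of whether $P_c$ or $X$ is the longer plan (this is where symmetry of the count matters), and I must make precise that being a completion forces $\dom \ii{act}_{P_p} \subseteq \{1,\dots,l'\}$, so that every agreement position genuinely falls inside the comparison range. Once those two points are nailed down, the remainder is the one-line disjointness-plus-counting argument above.
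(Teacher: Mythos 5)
Your proof is correct and follows essentially the same route as the paper's: both arguments rest on the observation that the agreement set $\{i \in \dom \ii{act}_{P_p} : \ii{act}_{P_p}(i)=\ii{act}_{P_c}(i),\ i\leq |P_c|\}$ is disjoint from the set of indices contributing to $D_h$ (the mismatches plus the length-difference tail), and that both sets sit inside $\{1,\dots,l\}$, so their cardinalities sum to at most $l$. The only cosmetic difference is that you collapse the paper's two cases ($|X|\leq|P_c|$ versus $|X|>|P_c|$) into one by padding both action functions with a null value, whereas the paper writes each case out separately.
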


\begin{figure}[!t]
\begin{center}
\includegraphics[scale=.4]{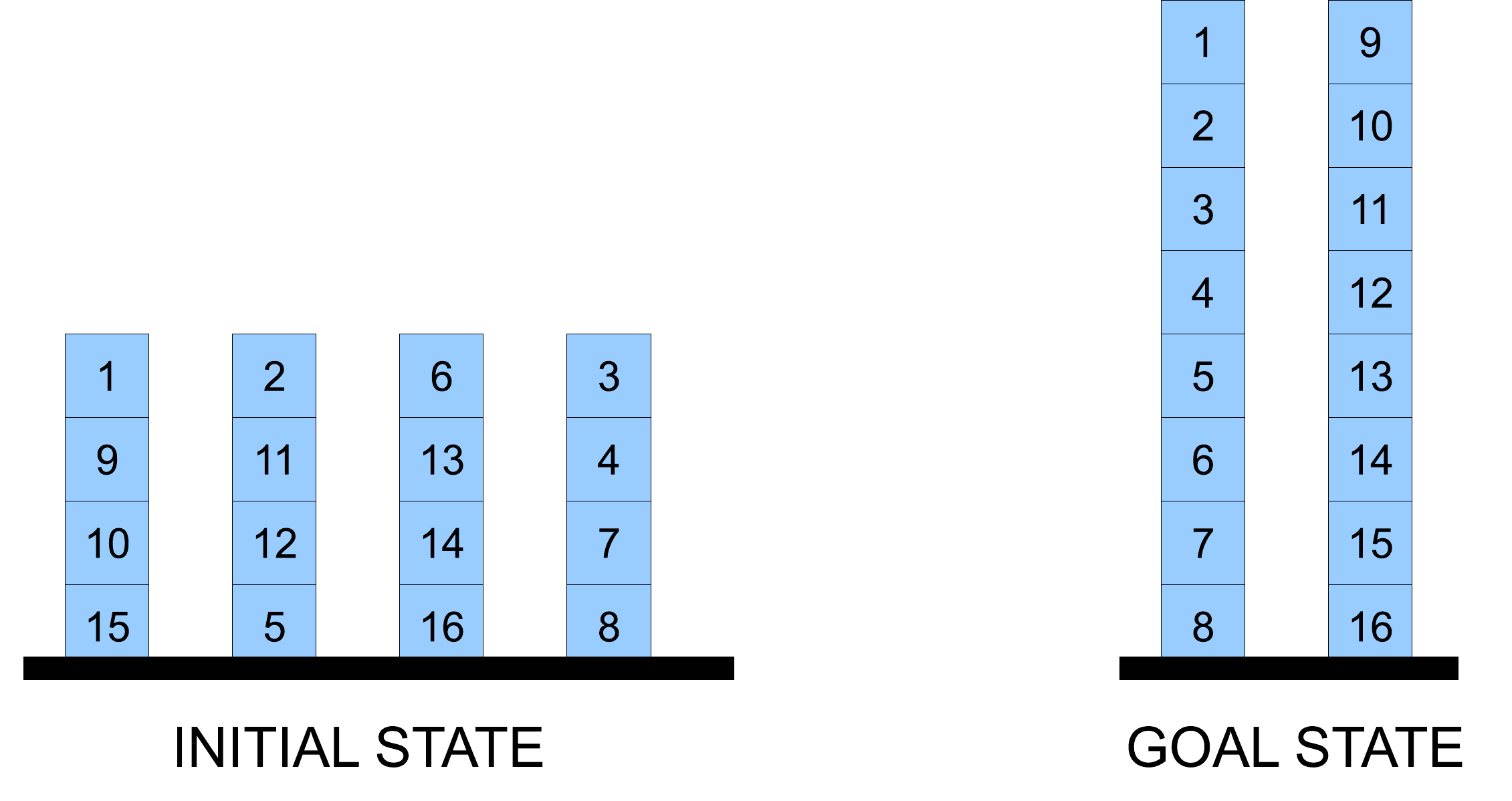}
\end{center}
\caption{Blocks World problem.} \label{fig_blocks}
\end{figure}

We performed some experiments with the ASP formulation, planning
problem, and distance measures above, to find 2~most~similar~plans,
2~most~diverse~plans, 3~most~similar~plans, 3~most~diverse~plans,
6~most~similar~plans.
Table~\ref{table:blocks} summarizes the results of these experiments.

It can be observed that
the planning problem in Fig.~\ref{fig_blocks} has too many
solutions to the problem (more than 50.000), and it is intractable
to compute all of them in advance and then the distances between
all pairwise solutions.
Therefore, instead of computing all the solutions in advance, we compute a
subset of them (around 200) which is small enough to construct a
distance graph, and apply our Offline Method in this way.
However, these 200 solutions are not diverse enough, and thus, although
we can find many very similar solutions, it is hard to find diverse solutions;
for instance, we can find 6 1-similar solutions but
we can find only 3 6-diverse solutions.

Online Method 1 performs the worst in comparison with the other
online methods, as in our experiments with phylogeny reconstruction
problems, due to the large ASP program (Fig.~\ref{fig:bw-n} in~\ref{sec:pgms})
used for computing $n$ distinct plans.

Online Method 2 is comparable with Online Method 3 in terms of
computing similar solutions. After computing a solution, computing a
1-similar solutions has a very small search space and {\sc Clasp}
can find a similar solution in a short time. On the other hand,
computing a 21-diverse solution has a huge search space. Therefore,
performance of computing diverse solutions with Online Method 2 is
worse than that of Online Method 3.

Online Method 3 deals with the Hamming distance computation at the
search level. In addition, it does not restart the search process to
compute a new solution; instead, it learns the conflicts caused by
distance difference while computing a new solution and backtracks to
approximate levels to compute similar/diverse solutions. Especially,
for the computation of diverse solutions, such a search strategy
creates a significant performance gain.

\begin{table}[t!]
\caption{Computing similar/diverse plans for the blocks world
problem. OM denotes ``Out of memory.''} \label{table:blocks}
\begin{center}
\begin{tabular}{c||c||c|c|c}
Problem& Offline Method & \multicolumn{3}{c}{Online Methods}\\
 & & Reformulation & Iterative Comp. & Incremental Comp.\\
 & & ({\sc clasp}) & ({\sc clasp}, perl) & ({\sc clasp-nk})\\
\hline
2 most similar ~ &- &6 min. 45 sec.      & 6 min. 53 sec.      & 7 min. 17 sec. \\
($k=1$)          &OM& 106 MB                &  73 MB             &  111 MB\\
                 &- & $k=1$              & $k=1$             & $k=1$\\
\hline
2 most diverse ~ &- &33 min. 28 sec.             & 11 min.       & 7 min. 40 sec.\\
($k=22$)         &OM& 213 MB                & 73 MB             & 112 MB\\
                 &- & $k=22$             &  $k=22$              & $k=21$\\
\hline
3 most similar ~ &- &7 min. 5 sec.      &  7 min. 3 sec.      & 7 min. 21 sec.\\
($k=1$)          &OM& 141 MB               &  73 MB               & 112 MB\\
                 &- & $k=1$              & $k=1$               & $k=2$\\
\hline
3 most diverse ~ &- &78 min 42 sec.             &  18 min. 49 sec.     & 12 min. 40 sec.\\
($k=22$)         &OM& 333 MB               &      73 MB          & 167 MB\\
                 &- & $k=22$             &        $k=21$       & $k=21$\\
\hline
6 most similar ~ &- &64 min. 42 sec.             & 7 min. 32 sec.       & 7 min. 18 sec.  \\
($k=1$)          &OM& 584 MB               &           73 MB    &  112 MB\\
                 &- & $k=1$              &      $k=1$          & $k=2$\\
\end{tabular}
\end{center}
\end{table}

\section{Related Work}
\label{sec_Related}

Finding similar/diverse solutions has been studied in other areas
such as propositional logic \cite{Distance-SAT-99}, constraint
programming \cite{heb07,heb05}, and
planning~\cite{SrivastavaNguyenEtAl-Domain-2007}. Let us briefly
describe related work in each area, and discuss the similarities
and the differences compared with our approach.

\paragraph{Related Work in Propositional Logic}
In \cite{Distance-SAT-99}, Bailleux and Marquis study the following problem

\begin{itemize}
\item[]
{\sc distance-sat}\\
Given a CNF formula $\Sigma$, a partial interpretation $PI$, and a nonnegative integer $d$,
decide whether there is a model $I$ of $\Sigma$ such that $I$ disagrees with $PI$ on at most
$d$ atoms.
\end{itemize}

\noindent This problem is similar to {\sc $k$-close solution} in that it asks for
a $k$-close solution. On the other hand, it asks for a solution $k$-close to a
partial solution, whereas {\sc $k$-close solution} asks for a solution that is $k$-close
to a set of previously computed solutions. Also, {\sc distance-sat} considers a distance measure
(i.e., partial Hamming distance) computable in polynomial time; whereas {\sc $k$-close solution}
considers any distance measure such that deciding whether the distance of a set of solutions
is less than a given $k$ is in \NP. Despite these differences, with $S$ containing a single solution and $\Delta$
being a partial Hamming distance, {\sc $k$-close solution} becomes essentially the same as
{\sc distance-sat}.

As for the computational complexity analysis, Proposition~1 of \cite{Distance-SAT-99}
shows that in the general case {\sc distance-sat} is \NP-complete. However, determining
whether $\Sigma$ has a model that disagrees with a complete interpretation $I$
on at most $d$ variables, where $d$ is a constant, is in \Pol.

To solve {\sc distance-sat}, the authors propose two algorithms,
$\ii{DP}_{\is{distance}}$ and $\ii{DP}_{\is{distance+lasso}}$.
Our modification of {\sc clasp}'s algorithm is similar to the first
algorithm in that both algorithms check whether a partial interpretation
computed in the DPLL-like search obeys the given distance constraints.
On the other hand, unlike $\ii{DP}_{\is{distance}}$, {\sc clasp} also uses
conflict-driven learning: when it learns a conflicting set of
literals, it will never try to select them  in the later stages of
the search. $\ii{DP}_{\is{distance+lasso}}$ offers manipulations
while selecting a new variable: it creates a set of candidate
variables with respect to the distance function, computes weights of
these variables relative to the distance function, and selects  one
with the maximum weight. On the other hand, in {\sc select}, {\sc
clasp} creates a set of candidate variables, and selects one of the
candidates to continue the search. Using the idea of
$DP_{\is{distance+lasso}}$, we can modify {\sc clasp} further to
manipulate the selection of variables with respect to the distance
function. However, in the phylogeny reconstruction problem, since
the domain of the distance function consists of the edge atoms which
are far outnumbered by many auxiliary atoms, in {\sc select} the set
of candidate variables generally consists of only auxiliary
variables; due to these cases, the manipulation of the selection of
variables is not expected to improve the computational efficiency.

\paragraph{Related Work in Constraint Programming}
In \cite{heb07,heb05}, the authors study various computational
problems related to finding similar/diverse solutions. The main
decision problems studied in \cite{heb05} are the following:

\begin{itemize}
\item[]
{\sc $d$Distant$k$Set} (resp. {\sc $d$Close$k$Set})\\
Given a polynomial-time decidable and
polynomially balanced relation $R$ over strings, a symmetric, reflexive, total
and polynomially bounded distance function $\delta$ between strings,
and some string $x$, decide whether there is a set $S$ of $k$ strings
(i.e., $S \subset \{y\ |\ (x,y)\in R\}$) such that $\ii{min}_{y,z\in S} \delta(y,z) \geq d$
(resp. $\ii{max}_{y,z\in S}\delta(y,z) \leq d$).
\end{itemize}

\noindent
which are similar to  {\sc $d$-distant set} (resp. {\sc $d$-close set}):
\cite{heb05} asks for a set of $k$ solutions $d$-distant/close to one solution,
whereas {\sc $d$-distant/close set} asks for a set of solutions that is $d$-close/distant
to a set of previously computed solutions. Also, the distance measure considered in
{\sc $d$Distant$k$Set} (resp. {\sc $d$Close$k$Set}) is computable in polynomial time;
in {\sc $d$-distant set} (resp. {\sc $d$-close set})
deciding whether the distance of a set of solutions is less than a given $d$ is
assumed to be in \NP.

The main decision problems studied in \cite{heb07} are the following:

\begin{itemize}
\item[]
{\sc $d$Distant} (resp. {\sc $d$Close})\\
Given a constraint satisfaction problem $P$ with variables ranging over finite domains,
a symmetric, reflexive, total
and polynomially bounded distance function $\delta$ between partial instantiations of variables,
and some partial instantiation $p$ of variables of $P$, decide whether there is a solution $s$ of $P$
such that $\delta(p,s) \ge d$
($\delta(p,s) \geq d$).
\end{itemize}

\noindent which are similar to {\sc $d$-distant solution} and {\sc $d$-close solution}.
On the other hand, \cite{heb07} asks for a solution $d$-close to a
partial solution rather than a set of solutions. Also, the distance measure considered in
these problems is computable in polynomial time. However, with $S$ containing a single
solution and $\Delta$ being computable in polynomial time, {\sc $d$-distant solution}
(resp. {\sc $d$-close solution}) becomes essentially the same as {\sc $d$Distant}
(resp. {\sc $d$Close}).

The authors also study some optimization problems related to these problems,
similar to the ones that we study above.

As for the computational complexity analysis of these problems, the authors find out that
they are all \NP-complete; these results comply with the ones presented in
Section~\ref{sec_CompResu} subject to conditions under which the problems of \cite{heb05,heb07}
above are equivalent to the problems we study in this paper.

Considering partial Hamming distance as in \cite{Distance-SAT-99},
Hebrard et al.~present an
offline method (similar to our method) that applies clustering
methods, and two online methods: one based on reformulation (similar
to Online Method 1), the other based on a greedy algorithm (similar
to Online Method 2) that iteratively computes a solution that
maximizes similarity to previous solutions. The computation of a
$k$-close solution is due to a Branch \& Bound algorithm (similar to
the idea behind Online Method 3) that propagates some
similarity/diversity constraints specific to the given distance
function. Our offline/online methods are inspired by these methods
of \cite{heb07,heb05}.

We note that partial Hamming distance not unrelated to the ones
introduced for comparing phylogenies in Section
\ref{ssec_DistMeasPhyl}; one can polynomially reduce nodal distance
to partial Hamming distance, and vice versa also partial Hamming
distance to nodal distance of trees (allowing auxiliary atoms in the
LP encoding).

\paragraph{Related Work in Planning}
In \cite{SrivastavaNguyenEtAl-Domain-2007}, the authors study the following decision problem:

\begin{itemize}
\item[]
{\sc $d$Distant$k$Set} (resp. {\sc $d$Close$k$Set})\\
Find a set $S$ of $k$ plans for a planning problem $PP$,
such that $\ii{min}_{y,z\in S} \delta(y,z) \geq d$
(resp. $\ii{max}_{y,z\in S}\delta(y,z) \leq d$).
\end{itemize}

\noindent The authors study these problems considering domain-independent
distance measures  computable in polynomial time (such as Hamming distance or set difference).
They present a method (similar to our Online Method 1), where they add global constraints
to the underlying constraint satisfaction solver of the GP-CSP
planner~\cite{DoKambhampati-Planning-2001}. As another method they
present a greedy approach (similar to our Online Method 2), where
they add global constraints which forces solver to compute
$k$-diverse solutions in each iteration until it computes $n$
solutions. They also present a method (similar to our Online Method
3) which modifies an existing planner's~\cite{GereviniSaettiEtAl-Planning-2003}
heuristic function and computes $n$ $k$-similar solutions in the search level.

\paragraph{Advantages of using ASP-Based Methods/Tools}
Our ASP-based methods for computing similar/diverse or close/distant
solutions to a given problem have three main advantages compared to
other approaches:
\begin{itemize}
\item
they are not restricted to some domain-independent distance
function, like (partial) Hamming distance considered in all the
methods/tools above;
\item
depending on the particular ASP-based method, we can represent
domain-independent or domain-specific distance functions in ASP or
implement them in C++;
\item
we can use the definitions of distance functions modularly, without
modifying the main problem description or without modifying the
search algorithm or the implementation of the solver.
\end{itemize}
Thus, our ASP-based methods/tools for computing similar/diverse or
close/distant solutions are applicable to various problems with
different (often domain-specific) distance measures.

In that sense, a user may prefer to use our ASP-based methods/tools
for computing similar/diverse or close/distant solutions to a given
problem, compared to the SAT-based methods/tools and the CP-based
methods/tools, if the user considers a domain-specific distance
function but does not want to modify the CP/SAT solvers to be able to
compute similar/diverse or distant/close solutions.

Also, our ASP-based methods/tools may be preferred when it is easier
to represent the main problem in ASP, due to advantages inherited
from the expressive representation language of ASP, such as being
able to define the transitive closure. Some sample applications
include phylogenetic network reconstruction \cite{erd06a} and wire
routing \cite{coban08,erd04}.

ASP-based methods for computing similar/diverse or close/distant
solutions to a given problem are probably most useful for existing
well-studied ASP applications, such as phylogeny reconstruction~\cite{bro05,bro07}
or product configuration~\cite{SoiNie-Developing-98}, to be used
with domain-specific measures.

\section{Conclusion}
\label{sec_Disc}

We have studied two kinds of computational problems related to
finding similar/diverse solutions of a given problem, in the context
of ASP: one problem asks for a set of $n$ $k$-similar (resp.
$k$-diverse) solutions; the other asks, given a set of solutions,
for a $k$-close ($k$-distant) solution $s$.  We have analyzed the
computational complexity of these problems, and introduced
offline/online methods to solve them. We have applied these
offline/online methods to the phylogeny reconstruction problem, and
observed their effectiveness in computing similar/diverse
phylogenies for Indo-European languages. Similarly we have applied
these methods to planning problems, and observed their effectiveness
in computing in particular diverse plans in Blocks World.
Finally, we have compared our
work with related approaches based on other formalisms.

There are many appealing ASP applications for which finding similar/diverse
solutions could be useful. In this sense, our methods and implementation
(i.e., \claspnk) can be useful for ASP.
On the other hand, no existing phylogenetic system can
compute similar/diverse phylogenies. In this sense, our distance
functions, methods, and tools
can be useful for phylogenetics. Similarly, no planner can compute
similar/diverse plans with respect to a domain-specific measure,
our methods and tools can be useful for planning.
In general,  the ASP-based methods/tools can be useful for finding
similar/diverse or close/distant solutions to a problem in two cases:
representing the problem in ASP is easier (e.g., if the problem
involves recursive definitions like transitive closure), or
the distance measure is different from the Hamming distance
(implemented in the SAT/CP-based tools).

We are also interested in combinations of the problems studied above
(for instance, finding a phylogeny that is the most distant from a
given set of phylogenies and that is the closest to another given
set of phylogenies), and application of our methods to other
problems. The study of these problems is left as a future work.

\subsection*{Acknowledgments}

We are grateful to the reviewers
of the paper as well as the reviewers of the preliminary conference
version for their comments and constructive suggestions for
improvement, in particular regarding the computation of nodal
distances.



\newpage
\appendix

\section{Proofs of Theorems}
\label{sec:proofs}

\begin{proof}[Proof of Theorem~\ref{theo:n-k-sim}]
Membership: Consider a non-deterministic Turing machine $M$,
operating on input $\cal P$, $M^\leq_\Delta$ (\resp $M^\geq_\Delta$),
$n$, and $k$, which guesses
$S$ as a set $\{s_1,\ldots,s_n\}$ of $n$ interpretations over the
alphabet of $\cal P$, together with a potential witness $w$ for a
computation of $M^\leq_\Delta$  (\resp $M^\geq_\Delta$) 
of length polynomial in $n$. After
that, for $1\leq i\leq n$, $M$ checks whether
$s_i$ is an answer set of $\cal P$ and whether all $s_i$ represent distinct
solutions of the problem. It rejects if any of these tests
does not succeed. Otherwise, $M$ proceeds by verifying that $w$ is a
witness of $M^\leq_\Delta$  (\resp $M^\geq_\Delta$) given $S$ and $k$
as its input.
If so, then $M$ accepts, otherwise it
rejects. Since $n$ is polynomial in the size of the input to $M$,
this also holds for the guess of $M$. Moreover, the subsequent
computation of $M$, i.e., the tests carried out, can be accomplished
in time polynomial in $n$. Therefore, $M$ is a non-deterministic
Turing machine which decides {\sc $n$ $k$-similar solutions}
(respectively {\sc $n$ $k$-diverse solutions}) in polynomial time,
which proves \NP-membership for these problems.

Hardness: Let $\phi=\bigwedge_{1\leq i\leq l} c_i$ be a Boolean
formula in conjunctive normal form (CNF) over variables $B=\{b_1,
\ldots, b_m\}$, i.e., each $c_i$ is a clause over variables from
$B$. By $\bar{x}$ we denote the complement of a literal $x$, i.e.,
$\bar{x} = \neg b$ if $x=b$, and $\bar{x} = b$ if $x=\neg b$. This
notation is extended to clauses in the obvious way: $\bar{c} =
\bar{x}_1 \wedge \ldots \wedge \bar{x}_{l_c}$ for a clause $c = x_1
\vee \ldots \vee x_{l_c}$.

Consider the normal logic program $\mathcal{P} = \{b_i \lar \no
nb_i;\ nb_i \lar \no b_i\mid 1\leq i\leq m\}\cup \{\lar
\bar{c}'_i\mid 1\leq i\leq l\}$, where $\bar{c}'$ denotes the
conjunction obtained from $\bar{c}$ by replacing negative literals
$\neg x$ in $\bar{c}$ by $nx$ (and using `,' instead of
`$\wedge$'). It is easily verified (and well-known) that $\cal P$
has an answer set iff $\phi$ is satisfiable (and that every answer
set of $\cal P$ is in 1-to-1 correspondence with a satisfying
assignment of $\phi$ in the obvious way).

Given $\cal P$, consider the {\sc $n$ $k$-similar solutions}
(respectively {\sc $n$ $k$-diverse solutions}) problem, where $n=1$,
$k=0$, and for any set $S$ of answer sets of $\cal P$, the distance
measure $\Delta$ is defined by $\Delta(S)=0$. Note that $\Delta$ is
normal and computable in constant time.
Then, there exists a solution to the problem iff there exists a set
$S$ of answer sets of  $\cal P$ such that $|S|=1$, i.e., $\cal P$
has an answer set. Since $\cal P$ has an answer set iff $\phi$ is
satisfiable, this proves \NP-hardness of the {\sc $n$ $k$-similar
solutions} (respectively {\sc $n$ $k$-diverse solutions}) problem.
Note that this argument holds for any normal $\Delta$.
\end{proof}

\begin{proof}[Proof of Theorem~\ref{theo:k-close}]
Membership: Consider a non-deterministic Turing machine $M$,
operating on input $\cal P$, $M^\leq_\Delta$ (\resp $M^\geq_\Delta$),
a set $S$ of solutions given by answer sets of $\cal P$, and $k$. It guesses an
interpretation $s$ over the alphabet of $\cal P$ (which is polynomial in the
size of $\cal P$), together with a potential witness $w$ for a computation of
$M^\leq_\Delta$ (\resp $M^\geq_\Delta$) 
of length polynomial in $|S|+|s|+\log k$. After
that, $M$ checks whether $s$ is an answer set of $\cal P$
and whether it represents a solution different from all solutions in $S$.
It rejects if any of these tests does not succeed. Otherwise, $M$ proceeds by
verifying that $w$ is a witness of $M^\leq_\Delta$ (\resp $M^\geq_\Delta$)
on input $S\cup\{s\}$ and $k$.
If so, then $M$ accepts, otherwise it rejects. The guess of $M$ is polynomial
in its input and the subsequent computation of $M$, i.e., the tests
carried out, can be accomplished in polynomial time. Therefore, $M$
is a non-deterministic Turing machine which decides {\sc $k$-close
solution} (respectively {\sc $k$-distant solution}) in polynomial
time, which proves \NP-membership for these problems.

Hardness: Consider the normal logic program given in the proof of
Theorem~\ref{theo:n-k-sim}, and the {\sc $k$-close solution}
problem, where $S=\emptyset$, $k=0$, and for any set $S'$ of answer
sets of $\cal P$, the distance measure $\Delta$ is defined by
$\Delta(S')=0$. Note that $\Delta$ is normal and computable in constant time.
Then, there exists a solution to the problem iff there exists a set $S'$ of
answer sets of $\cal P$ such that $S'\neq\emptyset$, i.e., $\cal P$ has an answer
set, which proves \NP-hardness of the {\sc $k$-close solution}
problem. Similarly, the {\sc $k$-distant solution} problem, where
$S=\emptyset$, $k=0$, and $\Delta(S')=0$, has a solution iff $\cal P$
has an answer set. Moreover, the above arguments hold for any normal $\Delta$.
This proves the claim.
\end{proof}

\begin{proof}[Proof of Theorem~\ref{theo:max-k-sim}]
Membership:
The problem of computing the cardinality of a maximal solution $S$ of
size at most $n$ is an optimization problem for a problem in \NP\ such
that the optimal value can be represented
using $\log n$ bits. Let
$M_\mathit{opt}$ be an oracle for this problem, and consider a
non-deterministic Turing machine $M'$, with output tape operating on
input $\cal P$, $M^\leq_\Delta$ (\resp $M^\geq_\Delta$), and $k$.
Initially, $M'$ calls $M_\mathit{opt}$ with $\cal P$, $M^\leq_\Delta$
(\resp $M^\geq_\Delta$), and $k$ as input to compute the maximum
cardinality $c$ of a set of solutions $S$ such that $|S|\leq n$. Then,
$M'$ proceeds like the nondeterministic Turing machine $M$ in the proof
of Theorem~\ref{theo:n-k-sim} using $n=c$, additionally writing the
guessed solution $S$ to its output tape. Since the latter is
accomplished in time polynomial in $c$, $M'$ is a non-deterministic
Turing machine with output tape that consults an oracle once for
computing the optimal value of an optimization problem solvable in
\NP. Thus, $M'$ is in \FNPLog\ and decides {\sc maximal $n$ $k$-similar
solutions} (respectively {\sc maximal $n$ $k$-diverse solutions}).

Hardness: We reduce $X$-MinModel to the problem of computing {\sc
maximal $n$ $k$-similar solutions}. $X$-MinModel is the following
\FNPLog-complete problem: Given a Boolean formula in CNF as in the
proof of Theorem~\ref{theo:n-k-sim}, and a subset $X$ of $B$,
compute an $X$-minimal model of $\phi$, i.e., a satisfying
assignment for $\phi$, which is subset minimal among all satisfying
assignments for $\phi$ on the variables from $X$ which are assigned
true. We identify a truth assignment with the set of Boolean
variables that are assigned true, and for a truth assignment $s$, we
use $s|_X$ to denote its restriction to variables from $X$.

Consider the normal logic program given in the proof of
Theorem~\ref{theo:n-k-sim}, and the {\sc maximal $n$ $k$-similar
solutions} problem, where $n=|X|$, $k=0$, and $\Delta$ is defined
as follows.
For a given set $S$ of answer sets of $\cal P$, such that $|S|>0$,
we set $\Delta(S)=0$ if for every pair of answer sets $s_1,s_2$ in
$S$, either $s_1|_X \subset s_2|_X$, or $s_2|_X \subset s_1|_X$.
Otherwise (and if $S=\emptyset$), $\Delta(S)=1$. Note that $\Delta$
is computable in polynomial time, performing less than $2n^2$ checks
for proper containment, where $|S|=n$. Observe also that the answer
sets in a set $S$ such that $\Delta(S)=0$, can be strictly ordered
wrt.~subset inclusion on their restrictions to $X$, and that $|X|$
is the maximum cardinality for such a set of answer sets. Given $S$
such that $\Delta(S)=0$, let $s_1$ denote the minimal answer set in
$S$ wrt.~subset inclusion on the restriction to $X$. The following
is trivial: the {\sc maximal $n$ $k$-similar solutions} problem above
has a solution iff $\phi$ is satisfiable. By the problem definition, it
also holds for every solution $S$ of the problem that $\Delta(S)=0$.

We show that if $S$ is a solution of the {\sc maximal $n$ $k$-similar
solutions} problem given above, then $s_1$ is an $X$-minimal model
of $\phi$. Towards a contradiction assume that there exists a
satisfying assignment $s'$ for $\phi$, such that $s'|_X \subset
s_1|_X$. Consider $s_0=s'\cup \{nb\mid b\in B, b\not\in s'\}$. Since
$s'$ satisfies $\phi$, it holds that $s_0$ is an answer set of $\cal
P$. Moreover $s_0\not\in S$, since $s_0|_X \subset s_1|_X$ and $s_1$
is the minimal answer set in $S$ wrt.~subset inclusion on the
restriction to $X$. As a consequence, $S\cup\{s_0\}\supset S$ and
$\Delta(S\cup\{s_0\})=0$. However, since the latter also implies
$|S\cup\{s_0\}|\leq n$, this contradicts our assumption
that $S$ is a solution of the {\sc maximal $n$ $k$-similar solutions}
problem above. We have thus shown that no satisfying assignment $s'$
for $\phi$ exists, such that $s'|_X \subset s_1|_X$, i.e., that
$s_1$ is an $X$-minimal model of $\phi$. This completes the
reduction of $X$-MinModel to the problem of computing {\sc maximal
$n$ $k$-similar solutions}, proving \FNPLog-hardness.

For a reduction of $X$-MinModel to the problem of computing {\sc
maximal $n$ $k$-diverse solutions}, we simply swap the values of
$\Delta$ and define: $\Delta(S)=1$ if  $|S|>0$ and for every pair of
answer sets $s_1,s_2$ in $S$, either $s_1|_X \subset s_2|_X$, or
$s_2|_X \subset s_1|_X$. Otherwise (and if $S=\emptyset$),
$\Delta(S)=0$. \FNPLog-hardness follows by analogous arguments.
\end{proof}

\begin{proof}[Proof of Theorem~\ref{theo:n-most-sim}]
Membership: Consider a deterministic Turing machine $M'$, with
output tape and an oracle for \NP-problems, which operates on input
$\cal P$, $M^\leq_\Delta$ (\resp $M^\geq_\Delta$), and $n$. Initially,
$M'$ prepares an integer $k_1$ of $n$ bits with the
less significant
half of bits set to $1$ and the
remaining bits set to $0$. Then, $M'$ successively uses its oracle
operating as the nondeterministic Turing machine $M$ in the proof of
Theorem~\ref{theo:n-k-sim}, starting with input $\cal P$,
$M^\leq_\Delta$ (\resp $M^\geq_\Delta$), $n$, and $k_1$, performing a
binary search for an optimal $k$. After that, $M'$ once more uses its
oracle like the nondeterministic Turing machine $M$ in the proof of
Theorem~\ref{theo:n-k-sim}, but additionally copying the solution
$S$ guessed by the oracle to its output tape. Since the latter is
accomplished in time polynomial in $n$, and since a polynomial
number of calls to the oracle is sufficient to complete the binary
search, $M'$ is in \FPNP and decides {\sc $n$ most similar
solutions} (respectively {\sc $n$  most diverse solutions}).

If the value of $\Delta(S)$ is polynomial in the size of $S$, then
the problem of computing the maximal value of $\Delta(S)$
over all solutions $S$ is an optimization problem for a problem in \NP\ such
that the optimal value can be represented using $\log n$ bits. Let
$M_\mathit{opt}$ be an oracle for this problem, and consider a
non-deterministic Turing machine $M''$ with output tape operating on
input $\cal P$, $M^\leq_\Delta$ (\resp $M^\geq_\Delta$), and $n$. Initially,
$M''$ calls $M_\mathit{opt}$ with $\cal P$, $M^\leq_\Delta$
(\resp $M^\geq_\Delta$), and $n$ as input to compute the value $k$ for
$\Delta(S)$ of an optimal solution $S$.
Then, $M''$ proceeds like the nondeterministic Turing machine $M$ in
the proof of Theorem~\ref{theo:n-k-sim}, additionally writing the
guessed solution $S$ to its output tape. Since the latter is
accomplished in time polynomial in $n$, $M''$ is in \FNPLog\ and
decides  {\sc $n$ most similar solutions} (respectively {\sc $n$
most diverse solutions}).

Hardness: We reduce the Traveling Salesman Problem (as, e.g.,
in~\cite{papa-94}) to the problem of computing {\sc $n$ most similar
solutions}. Consider $m$ cities $1,\ldots, m$, and a non-negative
integer distance $d_{i,j}$ between any two cities $i$ and $j$. The
task is to compute a tour visiting all cities once (i.e., a Hamilton
Cycle) of shortest length.

For a reduction, consider $\mathcal{P} \ = \ \{\ p_{i,j} \lar \no
np_{i,j};\ \ np_{i,j} \lar \no p_{i,j};\
\ r_j \lar p_{i,j};\ $\\
$\lar \no r_j \mid i\neq j\} \cup \{\lar p_{i,j}, p_{k,j};\ \lar
p_{i,j}, p_{i,k} \mid i\neq j,  i\neq k, j\neq k \}$,  where indices
$i$, $j$, and $k$ range over $1,\ldots, m$. Every answer set $s$ of
$\cal{P}$ uniquely corresponds to a Hamilton Cycle encoded by the
atoms $p_{i,j}$ in $s$, and every permutation of the cities gives
rise to exactly one answer set of $\cal P$. This can easily be
verified observing that the first two rules encode a
nondeterministic guess of a set of atoms $p_{i,j}$. The third and
fourth rule are satisfied iff `every city is reached', i.e., if for
every index $j$ there exists an index $i$, such that $p_{i,j}$ is
true. The last two rules are satisfied iff every city `is reached
from at most one different city' and `reaches at most one different
city', i.e., iff for different indices $i$, $j$, and $k$, $p_{i,j}$
and $p_{k,j}$ cannot both be true, as well as
 $p_{i,j}$ and $p_{i,k}$ cannot both be true.

Given this program, consider the {\sc $n$ most similar solutions}
problem, where $n=1$, and for any set $S$ of answer sets of $\cal
P$, the distance measure $\Delta$ is defined by
$\Delta(S)=\Sigma_{s\in S} \Sigma_{p_{i,j}\in s}\, d_{i,j}$. Note
that $\Delta$ is monotonic and thus computable in polynomial time in
the size of $S$.
Moreover, $S$ is a solution to this problem iff, by its definition,
$S$ contains exactly one answer set $s$ of $\cal P$, and iff
$\Delta(S)$ is minimal among all sets of answer sets of $\cal P$,
thus in particular among elementary such sets. By the definition of
$\Delta$, this implies that $S=\{s\}$ is a solution iff $s$ encodes
a Hamilton Cycle of minimal cost. This proves \FPNP-hardness for the
{\sc $n$ most similar solutions} problem in general.

For a reduction of TSP to the problem of computing {\sc $n$ most
diverse solutions}, consider $\Delta'(S) = m\times\mathit{max}_d -
\Delta(S)$, where $\mathit{max}_d$ is the maximum distance $d_{i,j}$
given. Also $\Delta'$ is monotonic and computable in polynomial time,
and by analogous arguments \FPNP-hardness follows for the {\sc $n$ most
diverse solutions} problem in the general case.

If the value of $\Delta(S)$ is polynomial in the size of $S$, then
\FNPLog-hardness is obtained by a reduction of $X$-MinMod: 
Let $\cal P$ be the normal logic program in the proof of
Theorem~\ref{theo:n-k-sim}, and consider the {\sc $n$ most similar
solutions} problem, where $n=1$, and $\Delta(S)$ is given by the
minimal (respectively maximal) partial Hamming distance on $X$
between an answer set $s\in S$ and $\emptyset$ (respectively $X$).
It is a straightforward consequence of the definition of $\Delta$,
that if $S=\{s\}$ is a solution to this {\sc $n$ most similar
solutions} problem (respectively to this {\sc $n$ most diverse
solutions} problem), then $s$ is an $X$-minimal model of $\phi$
(cf.~also the proof of Theorem~\ref{theo:max-k-sim}).
\end{proof}

\begin{proof}[Proof of Theorem~\ref{theo:most-close}]
Membership: Consider a deterministic Turing machine $M'$, with
output tape and an oracle for \NP-problems, which operates on input
$\cal P$, $M^\leq_\Delta$ (\resp $M^\geq_\Delta$), and $S$. Initially,
$M'$ prepares an integer $k_1$ of $n$ bits with the less significant
half of bits set to $1$ and the remaining bits set to $0$. Then, $M'$
successively uses its oracle operating as the nondeterministic Turing
machine $M$ in the proof of Theorem~\ref{theo:k-close}, starting with
input $\cal P$, $M^\leq_\Delta$ (\resp $M^\geq_\Delta$), $S$, and $k_1$,
performing a binary search for an optimal $k$. After that, $M'$ once more
uses its oracle like the nondeterministic Turing machine $M$ in the proof of
Theorem~\ref{theo:k-close}, but additionally copying the answer set
$s$ guessed by the oracle to its output tape. Since the latter is
accomplished in time polynomial in $n$, and since a polynomial
number of calls to the oracle is sufficient to complete the binary
search, $M'$ is in \FPNP and decides {\sc closest solution}
(respectively {\sc most distant solutions}).

If the value of $\Delta(S)$ is polynomial in the size of a set $S$ of
$n$ solutions, then
the problem of computing the maximal value of $\Delta(S\cup\{s\})$
for any solution $S\cup\{s\}$ is an optimization problem for a problem
in \NP\ such that the optimal value can be represented using
logarithmically many bits in the size of $S\cup\{s\}$. Let
$M_\mathit{opt}$ be an oracle for this problem, and consider a
non-deterministic Turing machine $M''$ with output tape operating on
input $\cal P$, $M^\leq_\Delta$ (\resp $M^\geq_\Delta$), and $S$.
Initially, $M''$ calls $M_\mathit{opt}$ with $\cal P$, $M^\leq_\Delta$
(\resp $M^\geq_\Delta$), and $S$ as input to compute the value $k$ for
$\Delta(S\cup\{s\})$ of an optimal solution $S\cup\{s\}$. Then, $M''$
proceeds like the nondeterministic Turing machine $M$ in the proof of
Theorem~\ref{theo:k-close}, additionally writing the guessed answer
set $s$ to its output tape. Since the latter is accomplished in time
polynomial in the input, $M''$ is in \FNPLog\ and decides {\sc
closest solution} (respectively {\sc most distant solutions}).

Hardness: The respective lower bounds are an immediate consequence
of the problem reductions in the proof of the previous
Theorem~\ref{theo:n-most-sim}. Just observe that for given $\cal P$
and $\Delta$,  the solutions of an {\sc $n$ most similar solutions}
problem with input $n=1$ coincide with the solutions of the {\sc
closest solution} problem with input $S=\emptyset$, (and the same
holds for the problem {\sc $n$ most diverse solutions} with input
$n=1$ and {\sc most distant solution} with input $S=\emptyset$). It
thus suffices to recall that the reductions mentioned above are
reductions to problems with input $n=1$.
\end{proof}

\begin{proof}[Proof of Theorem~\ref{theo:k-close-set}]
Membership: Consider a non-deterministic Turing machine $M$,
operating on input $\cal P$,  $M^\leq_\Delta$, $M^\geq_\Delta$,
a set $S$ of answer sets of $\cal P$, and $k$. Let $n$ be the size of its input.
First, $M$ guesses $S'$, such that $|S'|$ is polynomial in $n$, as a set
$\{s'_1,\ldots,s'_m\}$ of interpretations over the alphabet of $\cal
P$, two integers $k_1$ and $k_2$ in binary representation of size at
most polynomial in $n$, together with two potential witnesses $w_1$
and $w_3$ of $M^\leq_\Delta$ and of length polynomial in $|S|$ and $|S'|$,
respectively, as well as two potential witnesses $w_2$ and $w_4$ of
$M^\geq_\Delta$ of length polynomial in $|S|$ and $|S'|$, respectively.
After that, $M$ checks whether $S'$ is different from $S$, as well as whether
$s'_i$ is an answer set of $\cal P$, for $1\leq i\leq m$. It rejects if any
of these tests does not succeed.
Otherwise, $M$ proceeds by verifying that $w_1$ is a witness of
$M^\leq_\Delta$ on input $S$ and $k_1$, that $w_2$ is a witness of
$M^\geq_\Delta$ on input $S$ and $k_1$, as well as that
$w_3$ is a witness of $M^\leq_\Delta$ on input $S'$ and $k_2$, and
that $w_4$ is a witness of $M^\geq_\Delta$ on input $S'$ and $k_2$.
If either test fails $M$ rejects, otherwise it checks whether $|k_1-k_2| \leq k$
(respectively $|k_1-k_2| \geq k$), and if so accepts, otherwise it
rejects. Note that due to our assumptions that the size of $S'$ to
consider is polynomial in $n$, and that the value of $\Delta(S)$,
respectively $\Delta(S')$ is bounded by an exponential in the size
of $S$, respectively in the size of $S'$, the guess of $M$, which is
polynomial in $n$, is sufficient for deciding the problem. Moreover,
the subsequent computation of $M$, i.e., the tests carried out, can
be accomplished in polynomial time. Therefore, $M$ is a
non-deterministic Turing machine which decides {\sc $k$-close set}
(respectively {\sc $k$-distant set}) in polynomial time, which
proves \NP-membership for these problems.

Hardness: Consider the normal logic program given in the proof of
Theorem~\ref{theo:n-k-sim}, and the {\sc $k$-close set} problem,
where $S=\emptyset$, $k=0$, and for any set $S'$ of answer sets of
$\cal P$, the distance measure $\Delta$ is defined by $\Delta(S')=0$.
Note that $\Delta$ is normal and computable in constant time. Then,
there exists a solution to the problem iff there exists a set $S'$ of
answer sets of  $\cal P$ such that $S'\neq\emptyset$, i.e., $\cal P$
has an answer set, which proves \NP-hardness of the {\sc $k$-close
set} problem. Similarly, the {\sc $k$-distant set} problem, where
$S=\emptyset$, $k=0$, and $\Delta(S')=0$, has a solution
iff $\cal P$ has an answer set. Again the arguments hold for any
normal $\Delta$, which proves the claim.
\end{proof}

\begin{proof}[Proof of Proposition~\ref{pro_nodal}]
  In order to compute $D_n(P_1,P_2)$, we need to perform
  $|L| \choose 2$ nodal distance computations where $|L|$ is the number
  of leaves. The nodal distance $\ii{ND}_P(x,y)$  between leaves
  $x$ and $y$ in a phylogeny $P$ can be computed as
$$\ii{ND}_P(x,y) = \ii{depth}_P(x) + \ii{depth}_P(y) - 2
  \times \ii{depth}_P(\ii{lca}_P(x,y)),
$$
where $\ii{lca}_P(x,y)$ is the
  lowest common ancestor of $x$ and $y$ in $P$. Note that, if
  $\ii{depth}_P(v)$ for all vertices $v$ in $P$ is given (which is
  computable in $O(|L|)$ time, as $P$ is a binary tree), the computation
  of $\ii{ND}_P(x,y)$ takes constant time if
  $\ii{lca}_P(x,y)$ is known. Then, computing $\ii{ND}_P(x,y)$  for all
  leaves $x,y$ in $P$ is possible in $O(|L|^2)$ time.
  In a standard post-order traversal of $P$, a called node $v$
  always fulfills $v= \ii{lca}_P(x,y)$ for any vertices $x,y$ that occur
  in different subtrees rooted at children of $v$. Thus, if each call returns
  all leaves of $P$ reached from $v$ (which has overall cost $O(|L|)$),
  we can calculate in the traversal $\ii{ND}(x,y)$ for all leaves $x,y$
  of $P$
in the setting above.
  In total, the time to compute $\ii{ND}_{P_1}(x,y)$ and  $\ii{ND}_{P_2}(x,y)$, for all $x,y\in
  L$, is $2{\times}O(|L|) + O(|L|^2) = O(|L|^2)$.
  Therefore, in total $D_n(P_1,P_2)$ is computable in $O(|L|^2)$ time.
\end{proof}

\begin{proof}[Proof of Proposition~\ref{pro_compdesc}]
Let $v$ be the number of
vertices in one tree, then $v^2$ is an upper bound for the number of
the pairs that we can compare their descendants. Therefore, we have
at most $O(v^2)$ comparisons.

Since the number of descendants is bounded by $|L|$ (after obtaining
the descendants of each vertex by preprocessing in $O(v{\cdot}|L|)$
time), each comparison takes time $O(|L|)$.

Since $v = 2 \times |L| - 1$, $D_l(P_1,P_2)$ can be computed in $(2
\times |L| - 1)^2 \times |L|$ steps which is $O(|L|^3)$.
\end{proof}

\begin{proof}[Proof of Proposition~\ref{pro:lbn}]

Let $S_p'$ be a set of all completions of the partial phylogeny
$P_p$. For every $P \in S_p'$, we need to prove that
$$
{\cal LB}_n(P_p,P_c) \leq D_n(P,P_c)
$$
\noindent holds.

Let $P_l \in \arg\min_{P \in S_p'}(D_n(P,P_c))$ be a completion with
smallest distance. Then it will be
enough to prove that
$$
{\cal LB}_n(P_p,P_c) \leq D_n(P_l,P_c)
$$
\noindent holds. If we replace ${\cal LB}_n$ and $D_n$ with their
equivalents, the inequality will look like the following:
$$
\sum_{x,y \in L_p} |\ii{ND}_{P_c}(x,y) - \ii{ND}_{P_p}(x,y)| \leq
\sum_{x,y \in L} |\ii{ND}_{P_l}(x,y) - \ii{ND}_{P_c}(x,y)|
$$

\noindent We can break the right hand side summation into two for
$L_p$ and $L \backslash L_p$ as follows:
$$
\ba l
\sum_{x,y \in L_p} |\ii{ND}_{P_c}(x,y) - \ii{ND}_{P_p}(x,y)| \leq \\
\sum_{x,y \in L_p} |\ii{ND}_{P_l}(x,y) - \ii{ND}_{P_c}(x,y)| +
\sum_{(x,y) \in L^2 \backslash L_p^2} |\ii{ND}_{P_l}(x,y) -
\ii{ND}_{P_c}(x,y)|
\ea
$$

\noindent The distance between $x$ and $y$ is the same for $P_p$ and
$P_l$ where $x,y \in L_p$. Therefore, terms cancel each other and
we have the following:

$$
0 \leq \sum_{(x,y) \in L^2 \backslash L_p^2} |\ii{ND}_{P_l}(x,y) -
\ii{ND}_{P_c}(x,y)|
$$

\noindent Since the right hand side is a summation of absolute
values, the inequality holds which completes the proof.
\end{proof}

\begin{proof}[Proof of Proposition~\ref{pro:ubn}]

Let $S_p'$ be a set of all completions of the partial phylogeny
$P_p$. For every $P \in S_p'$, we need to prove that
$$
{\cal UB}_n(P_p,P_c) \geq D_n(P,P_c) .
$$

Let $P_u \in \arg\max_{p \in S_p'}(D_n(p,P_c))$ be a completion at
largest distance. Then it will be
enough to prove that
$$
{\cal UB}_n(P_p,P_c) \geq D_n(P_u,P_c) .
$$
If we replace ${\cal UB}_n$ and $D_n$ with their
definition, the inequality is
$$
\sum_{x,y \in L_p} |\ii{ND}_{P_c}(x,y) - \ii{ND}_{P_p}(x,y)| +
({l \choose 2} - {|L_p| \choose 2}) \times l \geq
\sum_{x,y \in L} |\ii{ND}_{P_l}(x,y) - \ii{ND}_{P_c}(x,y)|.
$$
We can break the right hand side summation into two for
$L_p$ and $L \backslash L_p$ as follows:
$$
\ba l
\sum_{x,y \in L_p} |\ii{ND}_{P_c}(x,y) - \ii{ND}_{P_p}(x,y)| +
({l \choose 2} - {|L_p| \choose 2}) \times l \geq \\
\sum_{x,y \in L_p} |\ii{ND}_{P_u}(x,y) - \ii{ND}_{P_c}(x,y)| +
\sum_{x,y \in L \backslash L_p} |\ii{ND}_{P_u}(x,y) - \ii{ND}_{P_c}(x,y)|
\ea
$$
\noindent The distance between $x$ and $y$ is same for $P_p$ and
$P_u$ where $x,y \in L_p$. Terms cancel each other:
$$
({l \choose 2} - {|L_p| \choose 2}) \times l \geq
\sum_{x,y \in L \backslash L_p}
|\ii{ND}_{P_u}(x,y) - \ii{ND}_{P_c}(x,y)| .
$$
The maximum nodal distance in a tree is equal to the
number of leaves; therefore, each term in the right hand side of the
inequality is at most $l$. Since, there are
 $({l \choose 2} - {|L_p| \choose 2})$ terms in
the right hand side summation, $({l \choose 2} - {|L_p| \choose 2}) \times l$
is greater than or equal to the summation.
\end{proof}

\begin{proof}[Proof of Proposition~\ref{planning_lb}]
Take any plan-completion $X$ of the partial plan $P_p$.
Consider two cases.

\paragraph{Case 1: $|X| \leq |P_c|$.}
Our goal is to prove that
$$
{\cal LB}_h(P_p,P_c) \leq D_h(X,P_c) .
$$
By the definition of $D_h$, the distance between $X$ and $P_c$ is:
$$
D_h(X,P_c) =
|\{i \ |\ \ii{act}_{X}(i) \neq \ii{act}_{P_c}(i), 1 \leq i \leq |X| \}|
+ |P_c| - |X| .
$$
Since $X$ is a plan-completion of $P_p$ and $|X| \leq |P_c|$,
$\dom \ii{act}_{P_p} \subseteq \dom \ii{act}_{P_c}$;
then, by the definition of ${\cal LB}_h$:
$$
{\cal LB}_h(P_p,P_c) =
|\{i \ |\ \ii{act}_{P_p}(i) \neq \ii{act}_{P_c}(i), i\in \dom \ii{act}_{P_p}\}| .
$$
Since $X$ is a plan-completion of $P_p$,
$$
\{i \ |\ \ii{act}_{P_p}(i) \neq \ii{act}_{P_c}(i), i\in \dom \ii{act}_{P_p}\}
\subseteq
\{i \ |\ \ii{act}_{X}(i) \neq \ii{act}_{P_c}(i), 1\leq i \leq |X|\} .
$$
Hence,
$$
\ba{c}
{\cal LB}_h(P_p,P_c)
\leq
|\{i \ |\ \ii{act}_{X}(i) \neq \ii{act}_{P_c}(i), 1\leq i \leq |X|\}| + |P_c| - |X|
=
D_h(X,P_c) .
\ea
$$

\paragraph{Case 2: $|X| > |P_c|$.}
Our goal is to prove that
$$
{\cal LB}_h(P_p,P_c) \leq D_h(P_c,X) .
$$
By the definition of $D_h$, the distance between $X$ and $P_c$ is:
$$
D_h(P_c,X) =
|\{i \ |\ \ii{act}_{X}(i) \neq \ii{act}_{P_c}(i), 1 \leq i \leq |P_c| \}|
+ |X| - |P_c| .
$$
By the definition of ${\cal LB}_h$:
$$
\ba{ll}
{\cal LB}_h(P_p,P_c) = &
|\{i \ |\ \ii{act}_{P_p}(i) \neq \ii{act}_{P_c}(i), i\in \dom \ii{act}_{P_p},
1\leq i \leq |P_c|\}| +\\
& |\{i \ | \ l \geq i > |P_c|, i\in \dom \ii{act}_{P_p}\}| .
\ea
$$
Since $X$ is a plan-completion of $P_p$, $\ii{act}_{X}$ extends $\ii{act}_{P_p}$, and then
$$
\ba{c}
\{i \ |\ \ii{act}_{P_p}(i) \neq \ii{act}_{P_c}(i), i\in \dom \ii{act}_{P_p}, 1\leq i \leq |P_c|\}\\
\subseteq
\{i \ |\ \ii{act}_{X}(i) \neq \ii{act}_{P_c}(i), 1\leq i \leq |X|\} .
\ea
$$
Since $|X| > |P_c|$,
$$
|X| - |P_c| > |\{i \ | \ |X| \geq i > |P_c|, i\in \dom \ii{act}_{P_p}\}| =
|\{i \ | \ l \geq i > |P_c|, i\in \dom \ii{act}_{P_p}\}| .
$$
Hence,
$$
{\cal LB}_h(P_p,P_c) \leq
|\{i \ |\ \ii{act}_{X}(i) \neq \ii{act}_{P_c}(i), 1\leq i \leq |X|\}|
+ |X| - |P_c| = D_h(P_c,X) .
$$
\end{proof}

\begin{proof}[Proof of Proposition~\ref{planning_ub}]
Take any plan-completion $X$ of partial plan $P_p$.
Consider two cases.

\paragraph{Case 1: $|X| \leq |P_c|$.}
Our goal is to prove that
$$
{\cal UB}_h(P_p,P_c) \geq D_h(X,P_c)
$$
where
$$
\ba{r@{~}c@{~}l}
{\cal UB}_h(P_p,P_c) &=&
l - |\{i \ |\ \ii{act}_{P_p}(i) = \ii{act}_{P_c}(i), i\in \dom \ii{act}_{P_p}\}|, \\
D_h(X,P_c) &=&
|\{i \ |\ \ii{act}_{X}(i) \neq \ii{act}_{P_c}(i), 1 \leq i \leq |X| \}| + |P_c| - |X| .
\ea
$$
Since $|X| \leq |P_c|$ and $X$ is a plan-completion of $P_p$, the set
$$
\{i \ |\ \ii{act}_{P_p}(i) = \ii{act}_{P_c}(i), i\in \dom \ii{act}_{P_p}\}
$$
does not intersect with the set
$$
Y= \{i \ |\ \ii{act}_{X}(i) \neq \ii{act}_{P_c}(i),  1 \leq i \leq |X| \}
\cup\
\{i |\ |X| < i \leq |P_c|\} .
$$
Then
$$
\{1,...,l\} \setminus \{i \ |\ \ii{act}_{P_p}(i) = \ii{act}_{P_c}(i), i\in \dom \ii{act}_{P_p}\}
$$
is a superset of $Y$. Therefore,
$$
\ba{r@{~}c@{~}l}
{\cal UB}_h(P_p,P_c) &=& l - |\{i \ |\ \ii{act}_{P_p}(i) = \ii{act}_{P_c}(i), i\in \dom \ii{act}_{P_p}\}|
\\
&\geq& |\{i \ |\ \ii{act}_{X}(i) \neq \ii{act}_{P_c}(i),  1 \leq i \leq |X| \}| + |P_c| - |X|\\
&=& D_h(X,P_c) .
\ea
$$

\paragraph{Case 2: $|X| > |P_c|$.}
Our goal is to prove that
$$
{\cal UB}_h(P_p,P_c) \geq D_h(P_c,X)
$$
where
$$
\ba{r@{~}c@{~}l}
{\cal UB}_h(P_p,P_c) &=&
l - |\{i \ |\ \ii{act}_{P_p}(i) = \ii{act}_{P_c}(i), i\in \dom \ii{act}_{P_p}, 1\leq i \leq |P_c|\}|, \\
D_h(P_c,X) &=&
|\{i \ |\ \ii{act}_{X}(i) \neq \ii{act}_{P_c}(i), 1 \leq i \leq |P_c| \}| + |X| - |P_c| .
\ea
$$
Since $|X| > |P_c|$ and $X$ is a plan-completion of $P_p$, the set
$$
\{i \ |\ \ii{act}_{P_p}(i) = \ii{act}_{P_c}(i), i\in \dom \ii{act}_{P_p}, 1\leq i \leq |P_c|\}
$$
does not intersect with the set
$$
Y= \{i \ |\ \ii{act}_{X}(i) \neq \ii{act}_{P_c}(i),  1 \leq i \leq |P_c| \} \cup
\{i |\ |P_c| < i \leq |X|\} .
$$
Then
$$
\{1,...,l\} \setminus
\{i \ |\ \ii{act}_{P_p}(i) = \ii{act}_{P_c}(i), i\in \dom \ii{act}_{P_p}, 1\leq i \leq |P_c|\}
$$
is a superset of $Y$. Therefore,
$$
\ba{r@{~}c@{~}l}
{\cal UB}_h(P_p,P_c) &=&
l - |\{i \ |\ \ii{act}_{P_p}(i) = \ii{act}_{P_c}(i), i\in \dom \ii{act}_{P_p}, 1\leq i \leq |P_c|\}|
\\
&\geq& |\{i \ |\ \ii{act}_{X}(i) \neq \ii{act}_{P_c}(i), 1 \leq i \leq |P_c| \}| + |X| - |P_c|\\
&=& D_h(P_c,X) .
\ea
$$
\end{proof}


\newpage
\section{ASP Formulations}
\label{sec:pgms}

\begin{figure}[!h]
{
\begin{verbatim}
c{clique(X) : vertex(X)}c.
:- clique(X), clique(Y), vertex(X), vertex(Y), X!=Y, not edge(X,Y),
   not edge(Y,X).
\end{verbatim}
}
\caption{ASP formulation of the $c$-clique problem (a clique of
size $c$).}
\label{fig:clique}
\end{figure}

\begin{figure}[!h]
{
\begin{verbatim}
solution(1..n).
c{clique(S,X) : vertex(X)}c :- solution(S).
:- clique(S,X), clique(S,Y), not edge(X,Y), not edge(Y,X), X!=Y.
different(S1,S2) :- clique(S1,X), clique(S2,Y), S1 != S2, X != Y.
:- not different(S1,S2), solution(S1;S2), S1!=S2.
\end{verbatim}
}
\caption{ASP formulation that computes $n$ distinct $c$-cliques.}
\label{fig:cliqueN}
\end{figure}

\begin{figure}[!h]
{
\begin{verbatim}
same(S1,S2,V) :-  clique(S1,V), clique(S2,V), solution(S1;S2),
  vertex(V), S1 < S2.
hammingDistance(S1,S2,c-H) :- H{same(S1,S2,V): vertex(V)}H,
  solution(S1;S2), maximumDistance(H), S1 < S2.
\end{verbatim}
}
\caption{ASP formulation of the Hamming distance between two cliques.}
\label{fig:hamming}
\end{figure}

\begin{figure}[!h]
{
\begin{verbatim}
:- solution(S1;S2), hammingDistance(S1,S2,H), H > k,
   maximumDistance(H).
\end{verbatim}
} \caption{A constraint that forces the distance among any two
solution is less than or equal to $k$.} \label{fig:cons}
\end{figure}

\begin{figure}[!h]
{
\begin{verbatim}
% generate a rooted binary tree
vertex(0..2*k). root(2*k).
internal(X) :- vertex(X), not leaf(X).

2 {edge(X,Y) : vertex(Y) : X > Y} 2 :- internal(X).

reachable(X,Y) :- edge(X,Y), vertex(X;Y), X > Y.
reachable(X,Y) :- edge(X,Z), reachable(Z,Y),
   X > Z, vertex(X;Y;Z).
:- vertex(Y), not reachable(X,Y), root(X), Y != X.
:- reachable(X,X), vertex(X).

maxY(X,Y) :- edge(X,Y), edge(X,Y1), Y > Y1,
   vertex(Y;Y1), internal(X).
:- maxY(X,Y), maxY(X1,Y1), Y > Y1, X < X1,
   vertex(Y;Y1), internal(X;X1).
\end{verbatim}
}
\caption{The phylogeny reconstruction program
of Brooks et. al.: Part 1.}
\label{fig:solve1}
\end{figure}

\begin{figure}
{
\begin{verbatim}
% ensure that the tree does not have more than x incompatible characters
g0(X,I,S) :- f(X,I,S), informative_character(I),
   essential_state(I,S).
g0(Y,I,S) :- g0(X,I,S), g0(X1,I,S), edge(Y,X), edge(Y,X1),
   X>X1, internal(Y), vertex(X;X1), informative_character(I),
   essential_state(I,S).

marked(X,I) :- g0(X,I,S), informative_character(I),
   vertex(X), essential_state(I,S).

g(X,I,S) :- g0(X,I,S), informative_character(I),
   vertex(X), essential_state(I,S).
{g(X,I,S): essential_state(I,S)} 1 :- internal(X),
   not marked(X,I), informative_character(I).

{root_is(X,I,S)} :- g(X,I,S), vertex(X),
   informative_character(I), essential_state(I,S).

:- root_is(X,I,S), root_is(Y,I,S),
   vertex(X;Y), X < Y, informative_character(I),
   essential_state(I,S).
:- root_is(X,I,S), g(Y,I,S), reachable(Y,X), Y > X,
   vertex(X;Y), informative_character(I),
   essential_state(I,S).

reachable_is(X,I,S) :- root_is(X,I,S),
   vertex(X), informative_character(I), essential_state(I,S).
reachable_is(X,I,S) :- g(X,I,S), reachable_is(Z,I,S),
   edge(Z,X), Z > X, vertex(X;Z), informative_character(I),
   essential_state(I,S).

incompatible(I) :- g(X,I,S), not reachable_is(X,I,S),
   vertex(X), informative_character(I), essential_state(I,S).
:- n+1 {incompatible(I) : informative_character(I)}.
\end{verbatim}
}
\caption{The phylogeny reconstruction program
of Brooks et. al.: Part 2.}
\label{fig:solve2}
\end{figure}

\begin{figure}[!h]
{
\begin{verbatim}
% generate n rooted trees
solution(1..n).
vertex(0..2*k). root(2*k).
internal(X) :- vertex(X), not leaf(X).

2 {edge(N,X,Y) : vertex(Y) : X > Y} 2 :- internal(X), solution(N).

reachable(N,X,Y) :- edge(N,X,Y), vertex(X;Y), X > Y, solution(N).
reachable(N,X,Y) :- edge(N,X,Z), reachable(N,Z,Y), solution(N),
   X > Z, vertex(X;Y;Z).
:- vertex(Y), not reachable(N,X,Y), root(X), Y != X, solution(N).
:- reachable(N,X,X), vertex(X), solution(N).

maxY(N,X,Y) :- edge(N,X,Y), edge(N,X,Y1), Y > Y1,
   vertex(Y;Y1), internal(X), solution(N).
:- maxY(N,X,Y), maxY(N,X1,Y1), Y > Y1, X < X1,
   vertex(Y;Y1), internal(X;X1), solution(N).
\end{verbatim}
}
\caption{A reformulation of the phylogeny reconstruction program
of Brooks et. al. (Figures~\ref{fig:solve1} and \ref{fig:solve2}), to find $n$ distinct phylogenies: Part 1.}
\label{fig:solveN1}

\end{figure}

\begin{figure}
{
\begin{verbatim}
% ensure that no tree has more than x incompatible characters
g0(N,X,I,S) :- f(X,I,S), informative_character(I),
   essential_state(I,S), solution(N).
g0(N,Y,I,S) :- g0(N,X,I,S), g0(N,X1,I,S), edge(N,Y,X), edge(N,Y,X1),
   X>X1, internal(Y), vertex(X;X1), informative_character(I),
   essential_state(I,S), solution(N).

marked(N,X,I) :- g0(N,X,I,S), informative_character(I),
   vertex(X), essential_state(I,S), solution(N).

g(N,X,I,S) :- g0(N,X,I,S), informative_character(I),
   vertex(X), essential_state(I,S), solution(N).
{g(N,X,I,S): essential_state(I,S)} 1 :- internal(X),
   not marked(N,X,I), informative_character(I), solution(N).

{root_is(N,X,I,S)} :- g(N,X,I,S), vertex(X),
   informative_character(I), essential_state(I,S), solution(N).
:- root_is(N,X,I,S), root_is(N,Y,I,S),
   vertex(X;Y), X < Y, informative_character(I),
   essential_state(I,S), solution(N).
:- root_is(N,X,I,S), g(N,Y,I,S), reachable(N,Y,X), Y > X,
   vertex(X;Y), informative_character(I), essential_state(I,S),
   solution(N).

reachable_is(N,X,I,S) :- root_is(N,X,I,S),
   vertex(X), informative_character(I), essential_state(I,S),
   solution(N).
reachable_is(N,X,I,S) :- g(N,X,I,S), reachable_is(N,Z,I,S),
   edge(N,Z,X), Z > X, vertex(X;Z), informative_character(I),
   essential_state(I,S), solution(N).

incompatible(N,I) :- g(N,X,I,S), not reachable_is(N,X,I,S),
   vertex(X), informative_character(I), essential_state(I,S),
   solution(N).
:- x+1 {incompatible(N,I) : informative_character(I)}, solution(N).
\end{verbatim}
}
\caption{A reformulation of the phylogeny reconstruction program
of Brooks et. al. (Figures~\ref{fig:solve1} and \ref{fig:solve2}), to find $n$ distinct phylogenies: Part 2.}
\label{fig:solveN2}
\end{figure}

\begin{figure}
{
\begin{verbatim}
% make sure that these n trees are distinct

different(S1,S2) :- edge(S1,X1,Y), edge(S2,X2,Y),
   vertex(X2;X1;Y), solution(S1;S2), S1 != S2, X1 != X2.
:- not different(S1,S2), solution(S1;S2), S1 != S2.
\end{verbatim}
}
\caption{A reformulation of the phylogeny reconstruction program
of Brooks et. al., to find $n$ distinct phylogenies: Part 3.}
\label{fig:solveN3}
\end{figure}

\begin{figure}
{\begin{verbatim}
dist(0..m).

% compute the nodal distances using distance_v.

% nodaldistance(S,X,Y,T): the nodal distance between X and Y
%    in the S'th tree is T.
nodaldistance(S,X,Y,T) :- tempnodaldistance(S,X,Y,T),
   not notminnodal(S,X,Y,T), solution(S), leaf(X;Y), dist(T).

% distance_v(S,X,Y,T): the distance between the vertex X and
%    its descendant Y is T in the S'th tree.
distance_v(S,X,Y,1) :- edge(S,X,Y), vertex(X;Y), solution(S).
distance_v(S,X,Z,T+1) :- distance_v(S,X,Y,T), edge(S,Y,Z),
   vertex(X;Y;Z), dist(T), solution(S).

% length of a path between vertices X and Y
tempnodaldistance(S,X,Y,T1+T2) :- distance_v(S,CA,X,T1),
   distance_v(S,CA,Y,T2), X<Y, dist(T1;T2), leaf(X;Y),
   vertex(CA), solution(S).

notminnodal(S,X,Y,T1) :- tempnodaldistance(S,X,Y,T1),
   tempnodaldistance(S,X,Y,T2), T2 < T1, leaf(X;Y),
   dist(T1;T2), solution(S).

% compute the differences of nodal distances of each pairs of
%    leaves in each pairs of trees.

diffnodal(P1,P2,X,Y,abs(D1-D2)) :- nodaldistance(P1,X,Y,D1),
   nodaldistance(P2,X,Y,D2), P2>P1, leaf(X;Y), dist(D1;D2),
   solution(P1;P2).

% compute the distance between each pairs of trees.

% distance_t(P1,P2,T) : the distance between (trees) P1 and P2 is T.
tempdistance(P1,P2,0,1,D) :- diffnodal(P1,P2,0,1,D),
   solution(P1;P2), dist(D).
tempdistance(P1,P2,L1,L2,D+K) :- tempdistance(P1,P2,L1,L2-1,D),
   diffnodal(P1,P2,L1,L2,K), L2-L1 > 1, solution(P1;P2),
   leaf(L1;L2), dist(D;K).
tempdistance(P1,P2,L1,L2,D+K) :- tempdistance(P1,P2,L1-1,k,D),
   diffnodal(P1,P2,L1,L2,K), L2 = L1+1, L2 > 1, solution(P1;P2),
   leaf(L1;L2), dist(D;K).

distance_t(P1,P2,T) :- tempdistance(P1,P2,k-1,k,T), dist(T),
   solution(P1;P2).
\end{verbatim}
} \caption{A formulation of the nodal distance function $D_n$ in ASP.}
\label{fig:distance1}
\end{figure}

\begin{figure} {
\begin{verbatim}
dist(0..m).

% at each solution N, define reachability of leaf Y from X
reachableN(N,X,Y) :- edge(N,X,Y), vertex(X), leaf(Y), X > Y,
   solution(N).
reachableN(N,X,Y) :- edge(N,X,Z), reachableN(N,Z,Y),
   solution(N), X > Z, vertex(X;Z), leaf(Y).

% at each solution S, assign depths to vertices Y
depth(S,2*k,0) :- solution(S).
depth(S,Y,T+1) :- depth(S,X,T), edge(S,X,Y),
   vertex(X;Y), depthRange(T), solution(S), T<r.

% vertices V1 and v2 have different descendants
diff(N1,V1,N2,V2) :-  solution(N1;N2), vertex(V1;V2), leaf(X),
   N1 < N2, reachableN(N1,V1,X), not reachableN(N2,V2,X).
diff(N1,V1,N2,V2) :-  solution(N1;N2), vertex(V1;V2), leaf(X),
   N1 < N2, not reachableN(N1,V1,X), reachableN(N2,V2,X).

% definition of the function f
fN(N1,V1,N2,V2,1) :- diff(N1,V1,N2,V2), solution(N1;N2),
   vertex(V1;V2), N1 < N2.
fN(N1,V1,N2,V2,0) :- not diff(N1,V1,N2,V2), solution(N1;N2),
   vertex(V1;V2), N1 < N2.

% definition of the function g
gN(0,N1,N2,0) :- solution(N1;N2), N1 < N2.
gN(D+1,N1,N2,D1) :- gN(D,N1,N2,X), solution(N1;N2), N1 < N2,
   depthRange(D;Y), dist(Z;D1;X), maxdepth2(N1,N2,Y), D<Y,
   depthV2(N1,N2,D+1,2*k,Z), w(D+1,M), D1=X+M*Z.
\end{verbatim}
} \caption{An ASP formulation of the descendant distance function $D_l$ for two phylogenies: Part 1} \label{fig:distance2-1}
\end{figure}

\begin{figure}
{
\begin{verbatim}
% depthV2 computes the summation of f(x,y) over all x,y at the same depth
samedepth(N1,V1,N2,V2,D) :- depth(N1,V1,D), depth(N2,V2,D),
        vertex(V1;V2), solution(N1;N2), N1 < N2,  depthRange(D).

depthV(N1,N2,D,W,0,Z) :- solution(N1;N2), N1 < N2,  depthRange(D),
        vertex(W), samedepth(N1,W,N2,0,D), fN(N1,W,N2,0,Z), dist(Z).
depthV(N1,N2,D,W,0,0) :- solution(N1;N2), N1 < N2,  depthRange(D),
        vertex(W), not samedepth(N1,W,N2,0,D).
depthV(N1,N2,D,W,X+1,Z+Z1) :- solution(N1;N2), N1 < N2,
        depthRange(D), vertex(W), depthV(N1,N2,D,W,X,Z),
        samedepth(N1,W,N2,X+1,D), fN(N1,W,N2,X+1,Z1), dist(Z;Z1),
        vertex(X), X<2*k.
depthV(N1,N2,D,W,X+1,Z) :- solution(N1;N2), N1 < N2,  depthRange(D),
        vertex(W), depthV(N1,N2,D,W,X,Z),
        not samedepth(N1,W,N2,X+1,D), dist(Z;Z1), vertex(X), X<2*k.

depthV2(N1,N2,D,0,Z) :- solution(N1;N2), N1 < N2,  depthRange(D),
        depthV(N1,N2,D,0,2*k,Z), dist(Z).
depthV2(N1,N2,D,X+1,Z+Z1) :- solution(N1;N2), N1 < N2,
        depthRange(D), vertex(X), depthV2(N1,N2,D,X,Z),
        depthV(N1,N2,D,X+1,2*k,Z1), dist(Z;Z1), X<2*k.

% definition of the distance function D_n for two phylogenies
depth2(N1,N2,X) :- depth(N1,Y1,X), depth(N2,Y2,X),
        vertex(Y1;Y2), depthRange(X), solution(N1;N2), N1 < N2.
maxdepth2(N1,N2,X) :- depth2(N1,N2,X), not depth2(N1,N2,X+1),
        depthRange(X), solution(N1;N2), N1 < N2.

distance_t(N1,N2,X) :- gN(D,N1,N2,X), solution(N1;N2), N1 < N2,
        dist(X), depthRange(D), maxdepth2(N1,N2,D).
\end{verbatim}
} \caption{An ASP formulation of the descendant distance function $D_l$ for two phylogenies: Part 2}
\label{fig:distance2-2}
\end{figure}

\begin{figure}
\begin{verbatim}
% distance of a set of phylogenies
notmaxdistance_t(P1,P2,T1) :- distance_t(P1,P2,T1), distance_t(P3,P4,T2),
   T1 < T2, solution(P1;P2;P3;P4), dist(T1;T2).
delta(T1) :- distance_t(P1,P2,T1), not notmaxdistance_t(P1,P2,T1),
   solution(P1;P2), dist(T1).

% constraints on the distance function, for similarity
:- delta(T), dist(T), T > k.
\end{verbatim}
\caption{An ASP formulation of the distance function $\Delta_D$ for a set of phylogenies,
and the constraints for $k$-similarity.} \label{fig:constraint}
\end{figure}

\begin{figure}
{
\begin{verbatim}
% effect of moving a block
on(B,L,T1) :- block(B), location(L), moveop(B,L,T), next(T,T1).

% a block can be moved only when it's clear
:- location(L), block(B), block(B1), time(T),
   moveop(B,L,T), on(B1,B,T).

% any two blocks cannot be on the same block at the same time
:- 2{on(B1,B,T):block(B1)}, time(T), block(B).

% wherever a block is, it's not anywhere else
non(B,L1,T) :- time(T), location(L1), location(L), block(B),
   on(B,L,T), not eq(L,L1).

% every block is supported by the table
supported(B,T) :- block(B), time(T), on(B,table,T).
supported(B,T) :- block(B), block(B1), time(T), on(B,B1,T),
   supported(B1,T), not eq(B,B1).
:- block(B), time(T), not supported(B,T).

% no concurrency
:- 2{moveop(B,L,T):block(B):location(L)},time(T).

% inertia
on(B,L,T1) :- location(L), block(B), on(B,L,T), not non(B,L,T1),
   next(T,T1).

% initial values and actions are exogenous
1{non(B,L,0),on(B,L,0)}1 :- block(B), location(L).
:- non(B,L,T), on(B,L,T), block(B), location(L), time(T).

{moveop(B,L,T)} :- block(B), location(L), time(T), T < lasttime.

% auxiliary predicates
time(0..lasttime).
next(T,T+1) :- time(T), lt(T,lasttime).

location(L) :- block(L).
location(table).

goal :- time(T), goal(T).
:- not goal.
\end{verbatim}
} \caption{Blocks world formulation.} \label{fig:bw}
\end{figure}

\begin{figure}
{\begin{verbatim}
solution(1..n).

% effect of moving a block
on(S,B,L,T1) :- block(B), location(L),
   moveop(S,B,L,T), next(T,T1), solution(S).

% a block can be moved only when it's clear
:- location(L), block(B), block(B1), time(T),
   moveop(S,B,L,T), on(S,B1,B,T), solution(S).

% any two blocks cannot be on the same block at the same time
:- 2{on(S,B1,B,T):block(B1)}, time(T), block(B), solution(S).

% wherever a block is, it's not anywhere else
non(S,B,L1,T) :- time(T), location(L1), location(L), block(B),
   on(S,B,L,T), not eq(L,L1), solution(S).

% every block is supported by the table
supported(S,B,T) :- block(B), time(T), on(S,B,table,T), solution(S).
supported(S,B,T) :- block(B), block(B1), time(T), on(S,B,B1,T),
   supported(S,B1,T), not eq(B,B1), solution(S).
:- block(B), time(T), not supported(S,B,T), solution(S).

% no concurrency
:- 2{moveop(S,B,L,T):block(B):location(L)},time(T), solution(S).

% inertia
on(S,B,L,T1) :- location(L), block(B), on(S,B,L,T),
   not non(S,B,L,T1), next(T,T1), solution(S).

% initial values and actions are exogenous
1{non(S,B,L,0),on(S,B,L,0)}1 :- block(B), location(L), solution(S).
:- non(S,B,L,T), on(S,B,L,T), block(B), location(L), time(T), solution(S).

{moveop(S,B,L,T)} :- block(B), location(L), time(T), T < lasttime, solution(S).

% auxiliary predicates
time(0..lasttime).
next(T,T+1) :- time(T), lt(T,lasttime).

location(L) :- block(L).
location(table).

goal(S) :- time(T), goal(S,T), solution(S).
:- not goal(S), solution(S).

% compute distinct plans
different(S1,S2) :- time(T), moveop(S1,X,Y,T), not moveop(S2,X,Y,T),
   solution(S1;S2), block(X), location(Y), S1 < S2.
different(S1,S2) :- time(T), not moveop(S1,X,Y,T), moveop(S2,X,Y,T),
   solution(S1;S2), block(X), location(Y), S1 < S2.
:- not different(S1,S2), solution(S1;S2), S1 < S2.
\end{verbatim}
}
\caption{A reformulation of the Blocks World program shown in Fig.~\ref{fig:bw}, to compute
$n$ distinct plans.}
\label{fig:bw-n}
\end{figure}

\begin{figure}
{\begin{verbatim}
% for every time step T, check that the T'th actions
% of Plans P1 and P2 are different:
different(P1,P2,T) :- moveop(P1,X,Y,T), not moveop(P2,X,Y,T),
   time(T), solution(P1;P2), block(X), location(Y), P1 < P2.
different(P1,P2,T) :- not moveop(P1,X,Y,T), moveop(P2,X,Y,T),
   time(T), solution(P1;P2), block(X), location(Y), P1 < P2.

% and define the hamming distance between two plans P1 and P2
% in terms of these differences:
hammingdistance(P1,P2,H) :- H{different(P1,P2,T): time(T)}H,
   solution(P1;P2), distRange(H), P1 < P2.
\end{verbatim}
}
\caption{An ASP formulation of the Hamming distance $D_h$ for two plans.}
\label{fig:distance-h2}
\end{figure}

\begin{figure}
{\begin{verbatim}
somedistance(H) :- hammingdistance(P1,P2,H),
  solution(P1;P2), distRange(H).
notmaxdistance(H1) :- somedistance(H1), somedistance(H2),
  H2 > H1, distRange(H1;H2).
totaldistance(H) :- not notmaxdistance(H),
  distRange(H), somedistance(H).
\end{verbatim}
}
\caption{An ASP formulation of the distance $\Delta_h$ for a set of plans.}
\label{fig:distance-plans}
\end{figure}

\end{document}